\documentclass[arxiv]{article}

% Any additional packages needed should be included after jmlr2e.
% Note that jmlr2e.sty includes epsfig, amssymb, natbib and graphicx,
% and defines many common macros, such as 'proof' and 'example'.
%
% It also sets the bibliographystyle to plainnat; for more information on
% natbib citation styles, see the natbib documentation, a copy of which
% is archived at http://www.jmlr.org/format/natbib.pdf

% Available options for package jmlr2e are:
%
%   - abbrvbib : use abbrvnat for the bibliography style
%   - nohyperref : do not load the hyperref package
%   - preprint : remove JMLR specific information from the template,
%         useful for example for posting to preprint servers.
%

% \usepackage[ruled,vlined]{algorithm2e}
\usepackage{algorithm}
\usepackage{algorithmic}
\usepackage{mathtools}
% \mathtoolsset{showonlyrefs=true}
\usepackage{dirtytalk}
\usepackage{bbm}
\usepackage{amsthm}
\usepackage[inline]{enumitem}
\usepackage{wrapfig}
\usepackage{subfig}
\usepackage{caption}
\captionsetup{format=hang}
\usepackage{manfnt}
\usepackage{times}
% Example of using the package with custom options:
%
% \usepackage[abbrvbib, preprint]{jmlr2e}

\usepackage[preprint]{arxiv}
% \usepackage{jmlr2e}

% Definitions of handy macros can go here

% \renewcommand\qedsymbol{$\blacksquare$}

\usepackage[utf8]{inputenc} % allow utf-8 input
\usepackage[T1]{fontenc}    % use 8-bit T1 fonts
\usepackage{url}            % simple URL typesetting
\usepackage{booktabs}       % professional-quality tables
\usepackage{amsfonts}       % blackboard math symbols
\usepackage{nicefrac}       % compact symbols for 1/2, etc.
\usepackage{microtype}      % microtypography
\usepackage{xcolor}         % colors

%%%%% MY PACKAGES

\newcommand{\rr}{R\textsuperscript{2}\text{ }}
\newcommand{\rrm}{R\textsuperscript{2}}
\newcommand{\rop}{\textsc{r\textsuperscript{2}}}

\newcommand{\ie}{\textit{i.e.}, }
\newcommand{\eg}{\textit{e.g.}, }

\newcommand{\Pc}{\mathcal{P}}
\newcommand{\Rc}{\mathcal{R}}
\newcommand{\Uc}{\mathcal{U}}

\DeclareMathOperator{\R}{\mathbb{R}}
\DeclareMathOperator{\St}{\mathcal{S}}
\DeclareMathOperator{\A}{\mathcal{A}}
\DeclareMathOperator{\Z}{\mathcal{Z}}
\DeclareMathOperator{\X}{\mathcal{X}}

\DeclareMathOperator{\B}{\mathbf{B}}
\DeclareMathOperator{\Am}{\mathbf{A}}
\DeclareMathOperator{\bv}{\mathbf{b}}
\DeclareMathOperator{\av}{\mathbf{a}}

\DeclareMathOperator{\y}{\mathbf{y}}

\DeclareMathOperator{\N}{\mathbb{N}}

\DeclareMathOperator*{\argmax}{arg\,max}

% ( )
% [ ]
% { }
\DeclarePairedDelimiter\abs{\lvert}{\rvert}% | |
\DeclarePairedDelimiter\norm{\lVert}{\rVert}% || ||
% || ||
% | |
% || ||
\DeclarePairedDelimiter\innorm{\langle}{\rangle}% < >

\newtheorem{definition}{Definition}[]
\newtheorem{assumption}{Assumption}[]
\newtheorem{proposition}{Proposition}[] 
\newtheorem{theorem}{Theorem}[] 
\newtheorem{corollary}{Corollary}[] 
\newtheorem{remark}{Remark}[] 
\newtheorem*{proposition*}{Proposition}
\newtheorem*{theorem*}{Theorem} 
\newtheorem*{corollary*}{Corollary} 
\newtheorem*{assumption*}{Assumption}

% Heading arguments are {volume}{year}{pages}{date submitted}{date published}{paper id}{author-full-names}
\jmlrheading{1}{2000}{1-48}{4/00}{10/00}{meila00a}{}

% Short headings should be running head and authors last names

\ShortHeadings{Twice Regularized MDPs}{Twice Regularized MDPs}
\firstpageno{1}

\begin{document}

\title{Twice Regularized Markov Decision Processes: \\ The Equivalence between
Robustness and Regularization}

\author{\name Esther Derman \\
        \addr Technion \\
        \email estherderman@campus.technion.ac.il 
        \And
       \name Yevgeniy Men  \\
       \addr Technion \\
       \email yevgenimen@campus.technion.ac.il
       \AND 
       \name Matthieu Geist \\
       \addr Google Research, Brain Team \\
       \email mfgeist@google.com
       \And
       \name Shie Mannor \\
       \addr Technion \& Nvidia Research\\
       \email shie@technion.ac.il
       }

% \editor{xxxxxxxxxxxxxxx}

\maketitle

\begin{abstract}%   <- trailing '%' for backward compatibility of .sty file
Robust Markov decision processes (MDPs) aim to handle changing or partially known system dynamics. To solve them, one typically resorts to robust optimization methods. However, this significantly increases computational complexity and limits scalability in both learning and planning. On the other hand, regularized MDPs show more stability in policy learning without impairing time complexity. Yet, they generally do not encompass uncertainty in the model dynamics.  In this work, we aim to learn robust MDPs using regularization. We first show that regularized MDPs are a particular instance of robust MDPs with uncertain reward. We thus establish that policy iteration on reward-robust MDPs can have the same time complexity as on regularized MDPs. We further extend this relationship to MDPs with uncertain transitions: this leads to a regularization term with an additional dependence on the value function. We then generalize regularized MDPs to twice regularized MDPs  (\rr MDPs), \ie MDPs with {\it both} value and policy regularization. The corresponding Bellman operators enable us to derive planning and learning schemes with convergence and generalization guarantees, thus reducing robustness to regularization. We numerically show this two-fold advantage on tabular and physical domains, highlighting the fact that \rr preserves its efficacy in continuous environments.   
\end{abstract}

\begin{keywords}
  reinforcement learning, robust Markov decision processes, robust optimization, regularization, Fenchel-Rockafellar duality
\end{keywords}

\section{Introduction}
\label{sec: intro}
 
MDPs provide a practical framework for solving sequential decision problems under uncertainty \citep{puterman2014markov}. However, the chosen strategy can be very sensitive to sampling errors or inaccurate model estimates. This can lead to complete failure in common situations where the model parameters vary adversarially or are simply unknown \citep{mannor2007bias}. Robust MDPs aim to mitigate such sensitivity by assuming that the transition and/or reward function $(P,r)$ varies arbitrarily inside a given \emph{uncertainty set} $\Uc $ \citep{iyengar2005robust, nilim2005robust}. In this setting, an optimal solution maximizes a performance measure under the worst-case parameters. It can be thought of as a dynamic zero-sum game with an agent choosing the best action while Nature imposes it the most adversarial model. As such, solving robust MDPs involves max-min problems, which can be computationally challenging and limits scalability.

In recent years, several methods have been developed to alleviate the computational concerns raised by robust reinforcement learning (RL).   
Apart from \citet{mannor2012lightning, mannor2016robust, goyal2022robust} who consider specific types of coupled uncertainty sets, all rely on a rectangularity assumption without which the problem can be NP-hard \citep{bagnell2001solving, wiesemann2013robust}. This assumption is key to deriving tractable solvers of robust MDPs such as robust value iteration \citep{bagnell2001solving, grand2021scalable} or  more general robust modified policy iteration (MPI) \citep{kaufman2013robust}. Yet, reducing time complexity in robust Bellman updates remains challenging and is still researched today \citep{ho2018fast, grand2021scalable, behzadian2021fast, ho2021partial, ho2022robust}.

At the same time, the empirical success of regularization in policy search methods has motivated a wide range of algorithms with diverse motivations such as improved exploration \citep{haarnoja2017reinforcement, lee2018sparse} or stability \citep{schulman2015trust, haarnoja2018soft}. \citet{geist2019theory} proposed a unified view from which many existing algorithms can be derived. Their regularized MDP formalism opens the path to error propagation analysis in approximate MPI \citep{scherrer2015approximate} and leads to the same bounds as for standard MDPs. Nevertheless, as we further show in Sec.~\ref{sec: reward robust MDPs}, policy regularization accounts for reward uncertainty only: it does not encompass uncertainty in the model dynamics.
Despite a vast literature on \emph{how} regularized policy search works and convergence rates analysis \citep{shani2020adaptive, cen2022fast}, little attention has been given to understanding \emph{why} it can generate strategies that are robust to external perturbations \citep{haarnoja2018soft}.

To our knowledge, the only works that relate robustness to regularization in RL are 
\citep{derman2020distributional, husain2021regularized,eysenbach2021maximum, brekelmans2022your}. \citet{derman2020distributional} employ a distributionally robust optimization approach to regularize an empirical value function. Unfortunately, computing this empirical value necessitates several policy evaluation procedures, which is quickly unpractical. \citet{husain2021regularized, brekelmans2022your} provide a dual relationship with robust MDPs under uncertain reward. Their duality result applies to general regularization methods and gives a robust interpretation of soft-actor-critic \citep{haarnoja2018soft}. Although these two works justify the use of regularization for ensuring robustness, they do not enclose any algorithmic novelty. Similarly, \citet{eysenbach2021maximum} specifically focus on maximum entropy methods and relate them to either reward or transition robustness. We shall further detail on these most related studies in Sec.~\ref{sec: related work}.

The robustness-regularization duality is well established in statistical learning theory \citep{xu2009robustness, shafieezadeh2015distributionally, kuhn2019wasserstein}, as opposed to RL theory.
In fact, standard setups such as classification or regression may be considered as single-stage decision-making problems, \ie one-step MDPs, a particular case of RL setting. Extending this robustness-regularization duality to RL would yield cheaper learning methods with robustness guarantees. As such, we introduce a regularization function $\Omega_{\Uc}$ that depends on the uncertainty set $\Uc$ and is defined over both policy and value spaces, thus inducing a \emph{twice regularized} Bellman operator (see Sec.~\ref{sec: rr mdps}). We show that this regularizer yields an equivalence of the form $v_{\pi, \Uc} = v_{\pi, \Omega_{\Uc}}$, where $v_{\pi, \Uc}$ is the robust value function for policy $\pi$ and $v_{\pi, \Omega_{\Uc}}$ the regularized one. This equivalence is derived through the objective function each value optimizes. More concretely, we formulate the robust value function $v_{\pi, \Uc}$ as an optimal solution of the robust optimization problem:    
\begin{align}
\label{eq: ro problem}
    \max_{v\in\R^{\St}} \innorm{v, \mu_0}  \text{ s. t. } v\leq \inf_{(P,r)\in\Uc}T_{(P,r)}^{\pi}v,
    \tag{\textsc{RO}}
\end{align} 
where $T_{(P,r)}^{\pi}$ is the evaluation Bellman operator \citep{puterman2014markov}. Then, we show that  $v_{\pi, \Uc}$ is also an optimal solution of the convex (non-robust) optimization problem: 
\begin{align}
\label{eq: co problem}
        \max_{v\in\R^{\St}} \innorm{v, \mu_0}  \text{ s. t. } v\leq T_{(P_0,r_0)}^{\pi}v -\Omega_{\Uc}(\pi, v),
        \tag{\textsc{CO}}
\end{align}
where $(P_0,r_0)$ is the \emph{nominal model}. This establishes equivalence as the two problems admit the same optimum for any policy. 
Moreover, the inequality constraint of \eqref{eq: co problem} enables to derive a \emph{twice regularized} (\rrm) Bellman operator defined according to $\Omega_{\Uc}$, a policy and value regularizer. For ball-constrained uncertainty sets, $\Omega_{\Uc}$ has an explicit form and under mild conditions, the corresponding \rr Bellman operators are contracting. The equivalence between the two problems \eqref{eq: ro problem} and \eqref{eq: co problem} together with the contraction properties of \rr Bellman operators enable to circumvent robust optimization problems at each Bellman update. As such, it alleviates robust planning and learning algorithms by reducing them to regularized ones, which are known to be as complex as classical methods. 

To summarize, we make the following contributions: 
(i)~We show that regularized MDPs are a specific instance of robust MDPs with uncertain reward. Besides formalizing a general connection between the two settings, our result enables to explicit the uncertainty sets induced by standard regularizers. (ii)~We extend this duality to MDPs with uncertain transition and provide the first regularizer that recovers robust MDPs with $s$-rectangular balls and arbitrary norm. (iii)~We introduce twice regularized MDPs (\rr MDPs) that apply both policy and value regularization to retrieve robust MDPs. We establish contraction of the corresponding Bellman operators. This leads us to proposing an \rr MPI algorithm with similar time complexity as vanilla MPI. (iv)~We also introduce a model-free algorithm, \rr $q$-learning, that solves \rr MDPs, and for which we establish theoretical convergence. (v)~With the aim of extending \rr $q$-learning to large state-spaces, we provide an easy method for estimating the value regularization term when a tabular representation is no longer available. Experiments on tabular and continuous domains prove the efficiency of \rr for both planning and learning, thus opening new perspectives towards practical and scalable robust RL.\footnote{This paper extends the conference article \textit{Twice regularized MDPs and the equivalence between robustness and regularization} published in Advances in Neural Information Processing Systems 2021 by \citeauthor{derman2021twice}. This manuscript extends the theory by introducing \rr $q$-learning and proving its convergence. We additionally scale this approach to deep RL by proposing our new algorithm, \rr double DQN. Experiments on both tabular and physical domains evaluate the performance of \rr learning, thus confirming our previous findings on \rr planning.}

\section{Preliminaries}
\label{sec: preliminaries}
 
This section describes the background material that we use throughout our work. Firstly, we introduce some notations. Secondly, we recall useful properties in convex analysis. Thirdly, we address classical discounted MDPs and their linear program (LP) formulation. Fourth, we briefly detail regularized MDPs and the associated operators and lastly, we focus on the robust MDP setting. 

\subsection{Notations}
\label{sec: notations}
\looseness=-1
We designate the extended reals by $\overline{\R}:= \R\cup \{-\infty, \infty\}$.
Given a finite set $\Z$, the class of real-valued functions (resp. probability distributions) over $\Z$ is denoted by $\R^{\Z}$ (resp. $\Delta_{\Z}$), while the constant function equal to 1 over $\Z$ is denoted by $\mathbbm{1}_{\Z}$. Similarly, for any set $\mathcal{X}$, $\Delta_{\Z}^{\mathcal{X}}$ denotes the class of functions defined over $\X$ and valued in $\Delta_{\Z}$. The inner product of two functions $\av, \bv\in\R^{\Z}$ is defined as $\innorm{\av, \bv}:= \sum_{z\in\Z}\av(z)\bv(z)$, which induces the $\ell_2$-norm $\norm{\av}:= \sqrt{\innorm{\av,\av}}$. The $\ell_2$-norm coincides with its dual norm, \ie $\norm{\av}= \max_{\norm{\bv}\leq 1}\innorm{\av, \bv} =:\norm{\av}_*$.
Let a function $f: \R^{\Z}\to \overline{\R}$. The Legendre-Fenchel transform (or convex conjugate) of $f$ is 
$f^*(\y) := \max_{\av\in \R^{\Z}}\{\innorm{\av,\y} - f(\av)\}.$ Given a set $\mathfrak{Z}\subseteq\R^{\Z}$, 
the characteristic function $\delta_{\mathfrak{Z}}: \R^{\Z}\to \overline{\R}$ is  $\delta_{\mathfrak{Z}}(\av) = 0$ if $\av\in\mathfrak{Z}$; $+\infty$ otherwise. The Legendre-Fenchel transform of  $\delta_{\mathfrak{Z}}$ is the support function 
$\sigma_{\mathfrak{Z}}(\y) = \max_{\av\in\mathfrak{Z}}\innorm{\av, \y}$ \citep[Ex. 1.6.1]{bertsekas2009convex}.

% \vspace{-5pt}

% \subsection{Convex Analysis}
\label{sec: convex analysis}
Let $C\subset \R^{\Z}$ be a convex set and $\Omega:C\to \R$ a strongly convex function. 
Throughout this study, the function $\Omega$ plays the role of a policy and/or value regularization function.
Its Legendre-Fenchel transform $\Omega^*$ satisfies several smoothness properties, hence its alternative name \textit{smoothed max operator} \citep{mensch2018differentiable}.
% ֲֲ\textit{smoothed max operator} \citep{mensch2018differentiable}. 
Our work makes use of the following result \citep{hiriart2004fundamentals, mensch2018differentiable}. 

% \vspace{-2.5pt}
\begin{proposition}
\label{prop: convex analysis}
Given $\Omega:C\to \R$ strongly convex, the following properties hold:\\
\begin{itemize*}
    \item[(i)] 
    $\nabla\Omega^*$ is Lipschitz and satisfies 
    $\nabla\Omega^*(\y) = \argmax_{\av\in C}\innorm{\av,\y} - \Omega(\av), \forall \y\in\R^{\Z}$.\\
    \item[(ii)] For any $c\in\R, \y\in\R^{\Z}$, $\Omega^*(\y + c\mathbbm{1}_{\Z}) = \Omega^*(\y) +c.$\\
    \item[(iii)] The Legendre-Fenchel transform $\Omega^*$ is non-decreasing. 
\end{itemize*}
\end{proposition}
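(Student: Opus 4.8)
The plan is to prove the three items essentially independently, all as consequences of basic Legendre–Fenchel duality for the strongly convex function $\Omega: C\to\R$. Recall $\Omega^*(\y) = \max_{\av\in C}\{\innorm{\av,\y} - \Omega(\av)\}$. Since $\Omega$ is strongly convex, the map $\av\mapsto \innorm{\av,\y}-\Omega(\av)$ is strongly concave, so the maximum is attained at a unique point, which I will call $\av^*(\y)$; this immediately gives the $\argmax$ formula in (i). The fact that $\nabla\Omega^* = \av^*$ is the standard Danskin/envelope statement for conjugates of strongly convex functions, and the Lipschitz property of $\nabla\Omega^*$ is the classical equivalence ``$\Omega$ $\mu$-strongly convex $\iff$ $\Omega^*$ has $(1/\mu)$-Lipschitz gradient'': for $\y_1,\y_2$ with optimizers $\av_1,\av_2$, strong convexity gives $\mu\norm{\av_1-\av_2}^2 \leq \innorm{\nabla g_1(\av_1)-\nabla g_2(\av_2), \av_1 - \av_2}$ where the optimality conditions let one replace the gradient difference by $\y_1-\y_2$, then Cauchy–Schwarz finishes it. I would cite \citet{hiriart2004fundamentals, mensch2018differentiable} rather than reproduce this in full.

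For (ii), I would argue directly from the definition: shifting $\y$ by $c\mathbbm{1}_{\Z}$ adds $\innorm{\av, c\mathbbm{1}_{\Z}} = c\innorm{\av,\mathbbm{1}_{\Z}}$ to the objective. The subtlety is that this is $c$ only when $\innorm{\av,\mathbbm{1}_{\Z}}=1$, i.e.\ when $\av$ is constrained to be a probability distribution; so this item implicitly assumes $C\subseteq\Delta_{\Z}$ (the relevant case, since $\Omega$ is a policy regularizer). Under that assumption, $\innorm{\av, c\mathbbm{1}_{\Z}} = c$ for every $\av\in C$, hence $\Omega^*(\y + c\mathbbm{1}_{\Z}) = \max_{\av\in C}\{\innorm{\av,\y} + c - \Omega(\av)\} = \Omega^*(\y) + c$, the constant pulling out of the max.

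For (iii), monotonicity: I would take $\y_1 \leq \y_2$ componentwise and let $\av_1 \in C$ be the optimizer for $\y_1$. Again assuming $C\subseteq\Delta_{\Z}$ (so $\av_1 \geq 0$), we get $\innorm{\av_1, \y_1} \leq \innorm{\av_1,\y_2}$, hence $\Omega^*(\y_1) = \innorm{\av_1,\y_1} - \Omega(\av_1) \leq \innorm{\av_1,\y_2} - \Omega(\av_1) \leq \max_{\av\in C}\{\innorm{\av,\y_2}-\Omega(\av)\} = \Omega^*(\y_2)$.

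The only real obstacle is item (i)'s Lipschitz-gradient claim, which is the one genuinely nontrivial fact; but since it is a textbook result attributed to the cited references, the honest move is to invoke it rather than re-derive it. The other two parts are one-line consequences of the definition, modulo making explicit the standing assumption $C\subseteq\Delta_{\Z}$ that makes $\innorm{\av,\mathbbm{1}_{\Z}}=1$ and $\av\geq 0$; I would state that assumption at the top of the proof so that (ii) and (iii) go through cleanly.
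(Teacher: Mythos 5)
Your proposal is correct. Note that the paper itself offers no proof of this proposition: it is stated as a known result and attributed to \citet{hiriart2004fundamentals, mensch2018differentiable}, so there is no internal argument to compare against. Your reconstruction is the standard one: deferring the Lipschitz-gradient claim of (i) to the strong-convexity/smoothness duality in the cited references is exactly what the paper does, and your direct one-line derivations of (ii) and (iii) from the definition of the conjugate are sound. The one substantive point you add is the explicit observation that (ii) and (iii) require $\innorm{\av,\mathbbm{1}_{\Z}}=1$ and $\av\geq 0$ for all $\av\in C$, i.e.\ $C\subseteq\Delta_{\Z}$ — the paper states $C$ only as an arbitrary convex subset of $\R^{\Z}$, under which (ii) and (iii) would be false in general, so making this standing assumption explicit (it holds in every use the paper makes of the proposition, where $C=\Delta_{\A}$) is a genuine and worthwhile clarification rather than a deviation.
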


\subsection{Discounted MDPs and LP formulation}
\label{sec: discounted mdps}
	Consider an infinite horizon MDP $(\St, \A,  \mu_0, \gamma, P, r)$ with $\St$ and  $\A$ finite state and action spaces respectively, $ 0< \mu_0 \in \Delta_{\St}$ an initial state distribution and $\gamma\in (0,1)$ a discount factor. Denoting 
	$\X:= \St\times\A$, 
	$P\in \Delta_{\St}^{\X}$ is a transition kernel 
	mapping each state-action pair to a probability distribution over $\St$ and $r\in\R^{\X}$ is a reward function. 
    A policy $\pi\in\Delta_{\A}^{\St}$ maps any state $s\in\St$ to an action distribution $\pi_s\in\Delta_{\A}$, and we evaluate its performance through the following measure:
\begin{align}
\label{eq: objective function}
\rho(\pi) := \mathbb{E}\left[\sum_{t = 0}^{\infty}\gamma^{t}r(s_t, a_t)\biggm | \mu_0, \pi, P\right] = \innorm{v_{(P,r)}^{\pi}, \mu_0}.
\end{align}
Here, the expectation is conditioned on the process distribution determined by $\mu_0, \pi$ and $P$,
and for all $s\in\St,$ $v_{(P,r)}^{\pi}(s) = \mathbb{E}[\sum_{t = 0}^{\infty}\gamma^{t}r(s_t, a_t) | s_0=s, \pi, P]$ is the \emph{value function} at state $s$.
Maximizing \eqref{eq: objective function} defines the standard RL objective,
which can be solved thanks to the Bellman operators:
\begin{equation*}
    \begin{split}
    T^{\pi}_{(P,r)}v &:= r^\pi + \gamma P^{\pi}v \quad \forall v\in \mathbb{R}^{\St}, \pi\in\Delta_{\A}^{\St},\\
    T_{(P,r)}v &:= \max_{\pi\in\Delta_{\A}^{\St}}T^{\pi}_{(P,r)}v \quad \forall v\in \mathbb{R}^{\St},\\
    \mathcal{G}_{(P,r)}(v) &:= \{\pi\in\Delta_{\A}^{\St}: T^{\pi}_{(P,r)}v = T_{(P,r)}v\} \quad \forall v\in \mathbb{R}^{\St},
    \end{split}
\end{equation*}
where $r^{\pi} := [\innorm{\pi_s, r(s,\cdot)}]_{s\in\St}$ and $P^{\pi} = [P^{\pi}(s'| s)]_{s',s\in\St}$ with  $P^{\pi}(s'| s) :=   \innorm{ \pi_s, P(s' | s,\cdot)}.$
Both $T^{\pi}_{(P,r)}$ and $T_{(P,r)}$ are $\gamma$-contractions with respect to (w.r.t.) the supremum norm, so each admits a unique fixed point $v^\pi_{(P,r)}$ and $v^*_{(P,r)}$, respectively. The set of greedy policies w.r.t. value $v$ defines $\mathcal{G}_{(P,r)}(v)$, and any policy $\pi\in\mathcal{G}_{(P,r)}(v^*_{(P,r)})$ is optimal \citep{puterman2014markov}. For all $v\in\R^{\St}$, the associated function $q\in\R^{\X}$ is given by $q(s,a) = r(s,a) + \gamma\innorm{P(\cdot|s,a),v} \quad \forall (s,a)\in\X$. In particular, the fixed point $v^\pi_{(P,r)}$ satisfies $v^\pi_{(P,r)} = \innorm{\pi_s, q^\pi_{(P,r)}(s,\cdot)}$ where $q^\pi_{(P,r)}$ is its associated $q$-function.

The problem in \eqref{eq: objective function} can also be formulated as an LP. 
Given a policy $\pi\in\Delta_{\A}^{\St}$, we characterize its performance $\rho(\pi)$ by the following $v$-LP \citep{puterman2014markov, nachum2020reinforcement}:
\begin{equation}
\label{eq: primal LP standard eval}
    \min_{v\in\R^{\St}}\innorm{v, \mu_0} \text{ subject to (s.t.) } v \geq r^{\pi} + \gamma P^{\pi}v. \tag{P${}^{\pi}$}
\end{equation}
This primal objective provides a policy view on the problem. Alternatively, one may take a state visitation perspective by studying the dual objective instead:
\begin{align}
\label{eq: dual LP standard eval}
    &\max_{\mu\in \R^{\St}} \innorm{r^{\pi},\mu}
    \text{ s. t. }  \mu \geq 0 \text{ and } (\mathbf{Id}_{\R^{\St}} - \gamma P_*^\pi)\mu = \mu_0, \tag{D${}^{\pi}$}
\end{align}
where $P_*^\pi$ is the \emph{adjoint policy transition operator}\footnote{It is the adjoint operator of $P^{\pi}$ in the sense that  $\innorm{P^{\pi}v, v'} = \innorm{v, P_*^{\pi}v'}\quad \forall v, v'\in\R^{\St}$.}:
$[P_*^\pi\mu](s):= \sum_{\bar s\in\St}P^{\pi}(s|\bar s)\mu(\bar s) \quad \forall \mu\in\R^{\St},
$
and $\mathbf{Id}_{\St}$ is the identity function in $\R^{\St}$.

Let $\mathbf{I}(s'| s,a) := \delta_{s'=s} \quad \forall (s,a,s')\in\X\times\St$ the trivial transition matrix and
define its \emph{adjoint transition operator} as $\mathbf{I}_*\mu(s) := \sum_{(\bar s, \bar a)\in\X}\mathbf{I}(s| \bar s, \bar a)\mu(\bar s, \bar a)\quad\forall s\in\St$.
The correspondence between occupancy measures and policies lies in the one-to-one mapping
$\mu \mapsto  \frac{\mu(\cdot, \cdot)}{\mathbf{I}_*\mu(\cdot)}=:\pi_{\mu}$
and its inverse $\pi\mapsto \mu_{\pi}$ given by
$$\mu_{\pi}(s,a):= \sum_{t = 0}^{\infty}\gamma^t\mathbb{P}\left(s_t=s, a_t = a \biggr|\mu_0, \pi, P\right)\quad\forall (s,a)\in\X.$$
As such, one can interchangeably work with the primal LP \eqref{eq: primal LP standard eval} or the dual \eqref{eq: dual LP standard eval}. 

\subsection{Regularized MDPs}
\label{sec: regularized MDPs}
A regularized MDP is a tuple $(\St, \A,  \mu_0, \gamma, P, r, \Omega)$ with $(\St, \A,  \mu_0, \gamma, P, r)$ an infinite horizon MDP as above, and $\Omega:= (\Omega_s)_{s\in\St}$ a finite set of functions such that for all $s\in\St$, $\Omega_s: \Delta_{\A}\to \R$ is strongly convex. Each function $\Omega_s$ plays the role of a policy regularizer $\Omega_s(\pi_s)$. With a slight abuse of notation, we shall denote by $\Omega(\pi):= (\Omega_s(\pi_s))_{s\in\St}$ the family of state-dependent regularizers.\footnote{In the formalism of \citet{geist2019theory}, $\Omega_s$ is initially constant over $\St$. However, later in the paper \cite[Sec.~5]{geist2019theory}, it changes according to policy iterates. Here, we alternatively define a family $\Omega$ of state-dependent regularizers, which accounts for state-dependent uncertainty sets (see Sec.~\ref{sec: rr mdps} below).} The regularized Bellman evaluation operator is given by 
\begin{align*}
    [T^{\pi, \Omega}_{(P,r)}v](s) := T^{\pi}_{(P,r)}v(s) - \Omega_s(\pi_s) \quad\forall v\in\R^{\St}, s\in\St,
\end{align*}
and the regularized Bellman optimality operator by $T^{*,\Omega}_{(P,r)}v:= \max_{\pi\in\Delta_{\A}^{\St}}T^{\pi, \Omega}_{(P,r)}v\quad \forall v\in\R^{\St}$ \citep{geist2019theory}. The unique fixed point of $T^{\pi, \Omega}_{(P,r)}$ (respectively $T^{*,\Omega}_{(P,r)}$) is denoted by $v^{\pi, \Omega}_{(P,r)}$ (resp. $v^{*,\Omega}_{(P,r)}$) and defines the \emph{regularized value function} (resp. \emph{regularized optimal value function}). 
Although the regularized MDP formalism stems from the aforementioned Bellman operators in \citep{geist2019theory}, it turns out that regularized MDPs are MDPs with modified reward. Indeed, for any policy $\pi\in\Delta_{\A}^{\St}$, the regularized value function is $v^{\pi, \Omega}_{(P,r)} = (\mathbf{I}_{\St} - \gamma P^{\pi})^{-1}(r^{\pi} - \Omega(\pi)),$ which corresponds to a non-regularized value with expected reward $\tilde r^{\pi}:= r^{\pi} - \Omega(\pi)$. Note that the modified reward $\tilde r^{\pi}(s)$ is no longer linear in $\pi_s$ because of $\Omega_s$ being strongly convex. Also, this modification does not apply to the reward function $r$ but only to its expectation $r^{\pi}$, as we cannot regularize the original reward without making it policy-independent. 

\subsection{Robust MDPs}
\label{sec: prelim robust mdps}
In general, the MDP model is not explicitly known but rather estimated from sampled trajectories. As this may result in over-sensitive outcome \citep{mannor2007bias}, robust MDPs reduce such performance variation. Formally, a robust MDP $(\St, \A,  \mu_0, \gamma, \Uc)$ is an MDP with uncertain model belonging to $\Uc:=\Pc\times \Rc$, \ie uncertain transition $P\in\Pc\subseteq \Delta_{\St}^{\X}$ and
 reward $r\in\Rc\subseteq \R^{\X}$ \citep{iyengar2005robust, wiesemann2013robust}.
The uncertainty set $\Uc$ typically controls the confidence level of a model estimate, which in turn determines the agent's level of robustness.
It is given to the agent, who seeks to maximize performance under the worst-case model $(P,r)\in\Uc$. Although intractable in general, this problem can be solved in polynomial time for \emph{rectangular} uncertainty sets, \ie when $\Uc = \times_{s\in\St}\Uc_{s} = \times_{s\in\St}(\Pc_s\times \Rc_s)$ \citep{wiesemann2013robust, mannor2012lightning}. For any policy $\pi\in\Delta_{\A}^{\St}$
and state $s\in\St,$ the \emph{robust value function} at $s$ is $v^{\pi, \Uc}(s) := \min_{(P,r)\in\Uc}  v_{(P,r)}^{\pi}(s)$
and the \emph{robust optimal value function} $v^{*, \Uc}(s):= \max_{\pi\in\Delta_{\St}^{\A}}v^{\pi, \Uc}(s)$. 
Each of them is the unique fixed point of the respective robust Bellman operators:
\begin{align}
\label{eq: robust bellman operators}
[T^{\pi, \Uc}v](s) &:= \min_{(P, r)\in\Uc} T_{(P,r)}^{\pi}v(s) \quad \forall v\in \R^{\St}, s\in\St, \pi\in\Delta_{\St}^{\A},\\
[T^{*, \Uc}v](s) &:= \max_{\pi\in\Delta_{\A}^{\St}}[T^{\pi, \Uc}v](s) \quad\forall  v\in\R^{\St}, s\in\St,
\end{align}
which are $\gamma$-contractions. For all $v\in\R^{\St}$, the associated robust $q$-function is given by 
$q(s,a) = \min_{(P, r)\in\Uc}\{r(s,a) + \gamma\innorm{P(\cdot|s,a),v}\}\quad \forall (s,a)\in\X$, so that $v^{\pi, \Uc} = \innorm{\pi_s, q^{\pi,\Uc}(s,\cdot)}$ where $q^{\pi,\Uc}$ is the robust $q$-function associated to $v^{\pi, \Uc}$.

\section{Reward-robust MDPs}
\label{sec: reward robust MDPs}
 
This section focuses on reward-robust MDPs, \ie robust MDPs with uncertain reward but known transition model. We first show that regularized MDPs represent a particular instance of reward-robust MDPs, as both solve the same optimization problem. This equivalence provides a theoretical motivation for the heuristic success of policy regularization. Then, we explicit the uncertainty set underlying some standard regularization functions, thus suggesting an interpretable explanation of their empirical robustness. 

We first show the following proposition \ref{prop: iyengar}, which applies to general robust MDPs and random policies. It slightly extends \citep{iyengar2005robust}, as Lemma~3.2 there focuses on uncertain-transition MDPs and deterministic policies. For completeness, we provide a proof of Prop.~\ref{prop: iyengar} in Appx.~\ref{apx: iyengar}.

\begin{proposition}
\label{prop: iyengar}
For any policy $\pi\in\Delta_{\A}^{\St}$, the robust value function $v^{ \pi, \Uc}$ is the optimal solution of the robust optimization problem:
\begin{align}
\label{eq: iyengar ro primal}
    \max_{v\in\R^{\St}} \innorm{v, \mu_0}  \text{ s. t. } v\leq T_{(P,r)}^{\pi}v \text{ for all }   (P,r)\in\Uc. \tag{P${}_{\Uc}$}
\end{align}
\end{proposition}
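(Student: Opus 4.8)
The plan is to establish that $v^{\pi,\Uc}$ is both feasible for \eqref{eq: iyengar ro primal} and optimal among feasible points. First I would note that since $v^{\pi,\Uc}(s) = \min_{(P,r)\in\Uc}v^{\pi}_{(P,r)}(s)$ and each $v^{\pi}_{(P,r)}$ is the fixed point of the $\gamma$-contraction $T^{\pi}_{(P,r)}$, the robust value function is the fixed point of the robust Bellman operator $T^{\pi,\Uc}$, i.e. $v^{\pi,\Uc} = T^{\pi,\Uc}v^{\pi,\Uc} = \min_{(P,r)\in\Uc}T^{\pi}_{(P,r)}v^{\pi,\Uc}$. In particular $v^{\pi,\Uc} \leq T^{\pi}_{(P,r)}v^{\pi,\Uc}$ for every $(P,r)\in\Uc$, so $v^{\pi,\Uc}$ is feasible for \eqref{eq: iyengar ro primal}; since $\mu_0 > 0$ this also gives the lower bound $\innorm{v^{\pi,\Uc},\mu_0} \le \text{opt}$.

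For optimality, I would take any feasible $v$, so that $v \le T^{\pi}_{(P,r)}v$ for all $(P,r)\in\Uc$, hence $v \le \inf_{(P,r)\in\Uc}T^{\pi}_{(P,r)}v = T^{\pi,\Uc}v$. Now I invoke monotonicity of $T^{\pi,\Uc}$ (it is a min of monotone affine operators $T^{\pi}_{(P,r)}$, hence monotone) together with its $\gamma$-contraction property: iterating $v \le T^{\pi,\Uc}v \le (T^{\pi,\Uc})^2 v \le \cdots$ and passing to the limit yields $v \le \lim_{n\to\infty}(T^{\pi,\Uc})^n v = v^{\pi,\Uc}$, the unique fixed point. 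Taking the inner product with $\mu_0 \ge 0$ gives $\innorm{v,\mu_0} \le \innorm{v^{\pi,\Uc},\mu_0}$, so $v^{\pi,\Uc}$ attains the maximum. Combining the two parts, $v^{\pi,\Uc}$ is the optimal solution.

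A small point worth spelling out is \emph{why} $v^{\pi,\Uc}$ is the fixed point of $T^{\pi,\Uc}$: this requires the standard fact that $T^{\pi,\Uc}$ is a $\gamma$-contraction (stated in the preliminaries) so it has a unique fixed point, and that this fixed point coincides with the pointwise minimum of the $v^{\pi}_{(P,r)}$ — one direction ($\le$) is immediate by taking the infimum in the Bellman equations $v^{\pi}_{(P,r)} = T^{\pi}_{(P,r)}v^{\pi}_{(P,r)}$, and for rectangular $\Uc$ the reverse holds because the minimizing model can be chosen per state; but for the proposition as stated (general $\Uc$) one instead argues directly from the min-form definition of $T^{\pi,\Uc}$ and the contraction/monotonicity argument above without needing rectangularity.

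The main obstacle is the monotone-iteration step: one must be careful that $v \le T^{\pi,\Uc}v$ is preserved under applying $T^{\pi,\Uc}$ (monotonicity) and that the iterates converge to $v^{\pi,\Uc}$ uniformly (contraction), so that the inequality survives the limit. This is where the hypotheses $\mu_0>0$ (to rule out slack in the objective) and the $\gamma$-contraction of $T^{\pi,\Uc}$ (ensuring a unique, globally attracting fixed point) are doing the real work; everything else is bookkeeping. I would also remark that this mirrors the classical LP argument for \eqref{eq: primal LP standard eval}, with $T^{\pi}_{(P,r)}$ replaced by its robust counterpart, which is essentially the content of the cited Lemma~3.2 of \citet{iyengar2005robust} extended to stochastic policies.
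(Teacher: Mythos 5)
Your proposal is correct, and the overall architecture (show $v^{\pi,\Uc}$ is feasible, then show every feasible $v$ is dominated by it) matches the paper's. The crux, however, is handled differently. For the domination step the paper does \emph{not} iterate the robust operator: it fixes an arbitrary $\epsilon>0$, selects a near-minimizing model $(P_\epsilon,r_\epsilon)\in\Uc$ with $T^{\pi,\Uc}v^{\pi,\Uc}+\epsilon > T^{\pi}_{(P_\epsilon,r_\epsilon)}v^{\pi,\Uc}$, derives $v-v^{\pi,\Uc}\leq T^{\pi}_{(P_\epsilon,r_\epsilon)}(v-v^{\pi,\Uc})+\epsilon$, and then iterates the \emph{linear} $\gamma$-contraction $T^{\pi}_{(P_\epsilon,r_\epsilon)}$ to get $v-v^{\pi,\Uc}\leq \epsilon/(1-\gamma)$, letting $\epsilon\to 0$ at the end. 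You instead iterate the nonlinear operator $T^{\pi,\Uc}$ directly, using that it is monotone (as a pointwise infimum of monotone affine maps) and $\gamma$-contracting, so $v\leq T^{\pi,\Uc}v\leq\cdots\leq (T^{\pi,\Uc})^n v\to v^{\pi,\Uc}$. Both arguments are valid and rest on the same preliminary fact that $v^{\pi,\Uc}$ is the fixed point of $T^{\pi,\Uc}$; yours is arguably cleaner since it avoids the $\epsilon$-selection and the geometric-series bookkeeping entirely (and sidesteps the attainment-of-the-minimum issue that the paper's $\epsilon$-argument was introduced to fix), at the price of having to justify monotonicity of the min-operator, which is immediate. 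Your closing remarks on why the fixed-point characterization of $v^{\pi,\Uc}$ holds, and on the role of $\mu_0>0$, are accurate and go slightly beyond what the paper spells out.
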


In the robust optimization problem \eqref{eq: iyengar ro primal}, the inequality constraint must hold over the whole uncertainty set $\Uc$. As such, a function $v\in\R^{\St}$ is said to be \emph{robust feasible} for \eqref{eq: iyengar ro primal} if $v\leq T_{(P,r)}^{\pi}v$ for all $(P,r)\in\Uc$ or equivalently, if $\max_{(P,r)\in\Uc}\{v(s)  - T_{(P,r)}^{\pi}v(s) \}\leq 0$ for all $s\in\St.$ Therefore, checking robust feasibility requires to solve a maximization problem. For properly structured uncertainty sets, a closed form solution can be derived, as we shall see in the sequel. 
As standard in the robust RL literature \citep{roy2017reinforcement, ho2018fast, badrinath2021robust}, the remaining of this work focuses on uncertainty sets centered around a known \emph{nominal model}. 
Formally, given $P_0$ (resp. $r_0$) a nominal
transition kernel (resp. reward function), we consider uncertainty sets of the form
$(P_0 + \Pc)\times ( r_0+ \Rc)$. The size of $\Pc\times\Rc$ quantifies our level of uncertainty or alternatively, the desired degree of robustness. 

\subsection{Reward-robust and regularized MDPs: an equivalence}
\label{sec: equivalence reward robust}

We now focus on reward-robust MDPs, \ie robust MDPs with $\Uc = \{P_0\} \times  (r_0+ \Rc).$ Thm.~\ref{thm: reward robust -- reg} establishes that reward-robust MDPs are in fact regularized MDPs whose regularizer is given by a support function. Its proof can be found in Appx.~\ref{apx: reward robust}. This result brings two take-home messages: (i) policy regularization is equivalent to reward uncertainty; (ii) policy iteration on reward-robust MDPs has the same convergence rate as regularized MDPs, which in turn is the same as standard MDPs \citep{geist2019theory}.  

\begin{theorem}[Reward-robust MDP]
\label{thm: reward robust -- reg}
Assume that $\Uc = \{P_0\} \times  (r_0+ \Rc).$ Then, for any policy $\pi\in\Delta_{\A}^{\St}$, the robust value function $v^{ \pi, \Uc}$ is the optimal solution of the convex optimization problem:
\begin{align*}
    \max_{v\in\R^{\St}} \innorm{v, \mu_0}  \text{ s. t. } v(s)\leq T_{(P_0,r_0)}^{\pi}v(s) - \sigma_{\Rc_s}(-\pi_s) \text{ for all } s\in\St, 
\end{align*}
where $\sigma_{\Rc_s}$ is the support function of the reward uncertainty set (see definition in Sec.~\ref{sec: notations}).
\end{theorem}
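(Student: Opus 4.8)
The plan is to start from the robust optimization characterization in Proposition~\ref{prop: iyengar} and simplify the robust feasibility constraint using the product structure $\Uc = \{P_0\}\times(r_0+\Rc)$. Since the transition kernel is fixed at $P_0$, for any $v\in\R^{\St}$ and $s\in\St$ we have $T^{\pi}_{(P_0,r)}v(s) = \innorm{\pi_s, r(s,\cdot)} + \gamma [P_0^{\pi}v](s)$, and only the reward term depends on the uncertain parameter. The robust feasibility condition $v(s)\le T^{\pi}_{(P_0,r)}v(s)$ for all $r\in r_0+\Rc$ is therefore equivalent to
\begin{align*}
v(s) \le \inf_{r\in r_0+\Rc}\left(\innorm{\pi_s, r(s,\cdot)}\right) + \gamma[P_0^{\pi}v](s).
\end{align*}
Writing $r = r_0 + \delta$ with $\delta\in\Rc$, the inner infimum becomes $\innorm{\pi_s, r_0(s,\cdot)} + \inf_{\delta\in\Rc}\innorm{\pi_s,\delta(s,\cdot)}$.

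The key step is to identify the remaining term with a support function. Here I would need to be careful about the rectangularity structure: assuming $\Rc$ is $s$-rectangular, $\Rc = \times_{s\in\St}\Rc_s$, the minimization over $\delta\in\Rc$ decouples across states and reduces to $\inf_{\delta_s\in\Rc_s}\innorm{\pi_s,\delta_s}$. By definition of the support function from Section~\ref{sec: notations}, $\inf_{\delta_s\in\Rc_s}\innorm{\pi_s,\delta_s} = -\sup_{\delta_s\in\Rc_s}\innorm{-\pi_s,\delta_s} = -\sigma_{\Rc_s}(-\pi_s)$. Substituting back, the robust feasibility constraint becomes exactly $v(s)\le \innorm{\pi_s,r_0(s,\cdot)} + \gamma[P_0^{\pi}v](s) - \sigma_{\Rc_s}(-\pi_s) = T^{\pi}_{(P_0,r_0)}v(s) - \sigma_{\Rc_s}(-\pi_s)$, which is the constraint in the claimed convex program. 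Since the feasible sets of \eqref{eq: iyengar ro primal} and of the claimed problem coincide and the objective $\innorm{v,\mu_0}$ is identical, they share the same optimal solution, namely $v^{\pi,\Uc}$ by Proposition~\ref{prop: iyengar}.

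I would also note why the resulting problem is genuinely a \emph{convex} optimization problem, to justify the name: $\sigma_{\Rc_s}$ is a support function, hence convex in its argument, and $-\pi_s$ is affine in $\pi_s$ (which is fixed here anyway), so $v\mapsto T^{\pi}_{(P_0,r_0)}v(s) - \sigma_{\Rc_s}(-\pi_s)$ is affine in $v$; thus the constraint set is a polyhedron (or at worst a closed convex set) and the objective is linear. The main obstacle I anticipate is the rectangularity bookkeeping: without $s$-rectangularity of $\Rc$ the infimum over $\delta\in\Rc$ does not split state-by-state, and one cannot write the penalty as a sum of per-state support functions $\sigma_{\Rc_s}(-\pi_s)$. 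So the proof hinges on making the (standard) rectangularity assumption explicit and verifying that the decoupling of the inner minimization is valid — everything else is a direct rewriting. A secondary point worth a line is to confirm that the infimum over $\Rc_s$ is attained (e.g. $\Rc_s$ compact, as for ball-constrained sets), so that "inf" can be replaced by "min" and the support function is finite-valued, matching the statement.
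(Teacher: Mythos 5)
Your proposal is correct and follows essentially the same route as the paper: invoke Prop.~\ref{prop: iyengar}, use the product/rectangular structure to reduce the robust feasibility constraint to a per-state minimization of $\innorm{\pi_s,\delta_s}$ over $\Rc_s$, and recognize the result as $-\sigma_{\Rc_s}(-\pi_s)$. The only difference is at that last step: the paper computes $\min_{r_s}\{\innorm{r_s,\pi_s}+\delta_{\Rc_s}(r_s)\}$ via Fenchel--Rockafellar duality (machinery it reuses for the harder Theorem~\ref{thm: transition robust -- reg}), whereas you obtain the same identity directly from the definition of the support function, $\inf_{\delta_s\in\Rc_s}\innorm{\pi_s,\delta_s}=-\max_{\delta_s\in\Rc_s}\innorm{-\pi_s,\delta_s}$, which is more elementary and equally valid here. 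Your explicit flagging of the $s$-rectangularity of $\Rc$ matches the paper's own use of that assumption (implicit in the notation $\Rc_s$), and your remark that attainment of the infimum is not needed for the equivalence of feasible sets is accurate.
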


Thm.~\ref{thm: reward robust -- reg} clearly highlights a convex regularizer $\Omega_s(\pi_s):= \sigma_{\Rc_s}(-\pi_s)\quad \forall s\in\St$. We thus recover a regularized MDP by setting 
$[T^{\pi,\Omega}v](s) = T_{(P_0,r_0)}^{\pi}v(s) - \sigma_{\Rc_s}(-\pi_s)\quad \forall s\in\St$. In particular, when $\Rc_s$ is a ball of radius $\alpha_s^r$, the support function (or regularizer) can be written in closed form as $\Omega_s(\pi_s):= \alpha_s^r\norm{\pi_s}$, which is strongly convex. We formalize this below (see proof in Appx.~\ref{apx: cor reward robust}). 

\begin{corollary}
\label{cor: reg for ball reward uncertainty}
Let $\pi\in\Delta_{\A}^{\St}$ and $\Uc = \{P_0\} \times  (r_0+ \Rc)$. Further assume that for all $s\in\St$, the reward uncertainty set at $s$ is $\Rc_s:= \{r_s\in\R^{\A}: \norm{r_s}\leq \alpha_s^r\}$. Then,  the robust value function $v^{ \pi, \Uc}$ is the optimal solution of the convex optimization problem:
\begin{align*}
    \max_{v\in\R^{\St}} \innorm{v, \mu_0}  \text{ s. t. } v(s)\leq T_{(P_0,r_0)}^{\pi}v(s) - \alpha_s^r\norm{\pi_s} \text{ for all } s\in\St. 
\end{align*}
\end{corollary}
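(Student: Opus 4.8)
The plan is to derive Corollary~\ref{cor: reg for ball reward uncertainty} directly from Theorem~\ref{thm: reward robust -- reg} by computing the support function $\sigma_{\Rc_s}$ explicitly when $\Rc_s$ is the $\ell_2$-ball of radius $\alpha_s^r$. Since Theorem~\ref{thm: reward robust -- reg} already gives that $v^{\pi,\Uc}$ is the optimal solution of the convex program with constraint $v(s)\leq T_{(P_0,r_0)}^{\pi}v(s) - \sigma_{\Rc_s}(-\pi_s)$, the only remaining work is to show $\sigma_{\Rc_s}(-\pi_s)=\alpha_s^r\norm{\pi_s}$.

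First I would recall the definition $\sigma_{\Rc_s}(\y)=\max_{r_s\in\Rc_s}\innorm{r_s,\y}$ from Sec.~\ref{sec: notations}. With $\Rc_s=\{r_s\in\R^{\A}:\norm{r_s}\leq\alpha_s^r\}$, a change of variables $r_s=\alpha_s^r u$ with $\norm{u}\leq 1$ gives $\sigma_{\Rc_s}(\y)=\alpha_s^r\max_{\norm{u}\leq 1}\innorm{u,\y}=\alpha_s^r\norm{\y}_*=\alpha_s^r\norm{\y}$, where the last equality uses the fact, stated in Sec.~\ref{sec: notations}, that the $\ell_2$-norm is self-dual. Applying this with $\y=-\pi_s$ and using $\norm{-\pi_s}=\norm{\pi_s}$ yields $\sigma_{\Rc_s}(-\pi_s)=\alpha_s^r\norm{\pi_s}$. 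Substituting this into the constraint of Theorem~\ref{thm: reward robust -- reg} gives exactly the claimed program, so $v^{\pi,\Uc}$ is its optimal solution. One can also remark, as in the discussion following Theorem~\ref{thm: reward robust -- reg}, that the resulting regularizer $\Omega_s(\pi_s)=\alpha_s^r\norm{\pi_s}$ is strongly convex on $\Delta_{\A}$, which makes it fit the regularized-MDP framework of Sec.~\ref{sec: regularized MDPs}, though this is a side observation rather than part of the equivalence itself.

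There is essentially no hard step here: the result is a direct specialization of Theorem~\ref{thm: reward robust -- reg}, and the only computation is the elementary support-function identity for a norm ball. The one point worth stating carefully is the self-duality of the $\ell_2$-norm, since the closed form $\alpha_s^r\norm{\pi_s}$ relies on the dual norm coinciding with $\norm{\cdot}$; with a general norm one would instead obtain $\alpha_s^r\norm{\pi_s}_*$. I would therefore keep the proof to a few lines: invoke Theorem~\ref{thm: reward robust -- reg}, compute $\sigma_{\Rc_s}(-\pi_s)$, and conclude.
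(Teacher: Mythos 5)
Your proof is correct and follows essentially the same route as the paper's: both invoke Thm.~\ref{thm: reward robust -- reg} and reduce the work to the support-function identity $\sigma_{\Rc_s}(-\pi_s)=\alpha_s^r\norm{\pi_s}$, justified via the (self-)dual norm. Your remark about general norms producing $\alpha_s^r\norm{\pi_s}_*$ also matches the paper's own closing comment in its proof.
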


While regularization induces reward-robustness,
Thm.~\ref{thm: reward robust -- reg} and Cor.~\ref{cor: reg for ball reward uncertainty} suggest that, on the other hand, specific reward-robust MDPs recover well-known policy regularization methods. In the following section, we explicit the reward-uncertainty sets underlying some of these regularizers.

\subsection{Related Algorithms}
\label{sec: related algos}
 
Consider a reward uncertainty set of the form $\Rc:= \times_{(s,a)\in\X} \Rc_{s,a}$. This defines an $(s,a)$-rectangular $\Rc$ (a particular type of $s$-rectangular $\Rc$) whose rectangles $\Rc_{s,a}$ are independently defined for each state-action pair. For the regularizers below, we derive appropriate $\Rc_{s,a}$-s that recover the same regularized value function. Detailed proofs are in Appx.~\ref{apx: uncertainty sets for regularizers}. There, we also include a summary table that reviews the properties of some RL regularizers, as well as our \rr function which we shall introduce later in Sec.~\ref{sec: rr mdps}. Note that the reward uncertainty sets here depend on the policy. This is due to the fact that standard regularizers are defined over the policy space and not at each state-action pair. It similarly explains why the reward modification induced by regularization does not apply to the original reward function, as already mentioned in Sec.~\ref{sec: regularized MDPs}.

\textit{Negative Shannon entropy:}
Let $\Rc_{s,a}^{\textsc{NS}}(\pi):= \left[\ln\left(\nicefrac{1}{\pi_s(a)}\right), +\infty\right)\quad \forall (s,a)\in\X$.
The associated support function enables to write:
\begin{align*}
    \sigma_{\Rc_s^{\textsc{NS}}(\pi)}(-\pi_s) 
     = \max_{\substack{r(s,\cdot):  r(s,a')\in\Rc_{s,a'}^{\textsc{NS}}(\pi), a'\in\A}}
    \sum_{a\in\A} -r(s,a)\pi_s(a) 
    =  \sum_{a\in\A}\pi_s(a)\ln(\pi_s(a)),
\end{align*}
where the last equality comes from maximizing $-r(s,a)$ over $\left[\ln\left(\nicefrac{1}{\pi_s(a)}\right), +\infty\right)$ for each $a\in\A$. 
We thus recover the negative Shannon entropy $\Omega(\pi_s)= \sum_{a\in\A}\pi_s(a)\ln(\pi_s(a))$ \citep{haarnoja2018soft}.

\textit{Kullback-Leibler divergence:} Given an action distribution $0<d\in\Delta_{\A}$, let  $\Rc_{s,a}^{\textsc{KL}}(\pi):= \ln\left(d(a)\right) + \Rc_{s,a}^{\textsc{NS}}(\pi)\quad \forall (s,a)\in\X$. It amounts to translating the interval
$\Rc_{s,a}^{\textsc{NS}}$ by the given constant. Writing the support function yields $\Omega(\pi_s)= \sum_{a\in\A}\pi_s(a)\ln\left(\nicefrac{\pi_s(a)}{d(a)}\right)$, which is exactly the KL divergence \citep{schulman2015trust}.

\textit{Negative Tsallis entropy:} Letting $\Rc_{s,a}^{\textsc{T}}(\pi):= \left[\nicefrac{(1-\pi_s(a))}{2}, +\infty\right)\quad\forall (s,a)\in\X$, we recognize the negative Tsallis entropy $\Omega(\pi_s) = \frac{1}{2}(\norm{\pi_s}^2-1)$ \citep{lee2018sparse}.

The worst-case rewards derived for the KL divergence and the Tsallis entropy are consistent with those obtained by \citet[Sec. 3.2.]{brekelmans2022your}. Indeed, in both cases, taking the finite endpoint of the interval recovers the same worst-case reward. Yet, the dual view adopted there yields larger reward uncertainty, which may yield more conservative solutions when using approximate solvers. 

\subsection{Policy-gradient for reward-robust MDPs}
\label{sec: pg for robust mdps}
The equivalence between reward-robust and regularized MDPs leads us to wonder whether we can employ policy-gradient  \citep{sutton1999policy} on reward-robust MDPs using regularization. The following result establishes that a policy-gradient theorem can indeed be established for reward-robust MDPs (see proof in Appx.~\ref{apx: policy gradient}). 

\begin{proposition}
\label{prop: policy gradient reward robust mdps}
Assume that $\Uc = \{P_0\} \times  (r_0+ \Rc)$ with $\Rc_s= \{r_s\in\R^{\A}: \norm{r_s}\leq \alpha_s^r\}$. Then, the gradient of the reward-robust objective $J_{\Uc}(\pi):= \innorm{v^{\pi, \Uc}, \mu_0}$ is given by
\begin{align*}
        \nabla J_{\Uc}(\pi) = \mathbb{E}_{(s,a)\sim \mu_{\pi}}\left[ \nabla\ln \pi_s(a)\left(q^{\pi, \Uc}(s,a) - \alpha_s^r\frac{\pi_s(a)}{\norm{\pi_s}}\right)\right],
\end{align*}
where $\mu_{\pi}$ is the occupancy measure under the nominal model $P_0$ and policy $\pi$. 
\end{proposition}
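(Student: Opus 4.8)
The plan is to combine the equivalence established in Corollary~\ref{cor: reg for ball reward uncertainty} with the standard policy-gradient theorem applied to the resulting regularized MDP. By Cor.~\ref{cor: reg for ball reward uncertainty}, when $\Uc = \{P_0\}\times(r_0+\Rc)$ with $\Rc_s = \{r_s : \norm{r_s}\leq \alpha_s^r\}$, the robust value function $v^{\pi,\Uc}$ coincides with the regularized value function $v^{\pi,\Omega}_{(P_0,r_0)}$ for the regularizer $\Omega_s(\pi_s) = \alpha_s^r\norm{\pi_s}$. Hence $J_{\Uc}(\pi) = \innorm{v^{\pi,\Omega}_{(P_0,r_0)}, \mu_0}$, and as noted in Sec.~\ref{sec: regularized MDPs} this regularized value is just the value function of the nominal MDP with modified (policy-dependent) expected reward $\tilde r^{\pi} = r^{\pi} - \Omega(\pi)$, i.e. $\tilde r^{\pi}(s) = \innorm{\pi_s, r_0(s,\cdot)} - \alpha_s^r\norm{\pi_s}$. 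So the task reduces to differentiating $\innorm{\mu_0, (\mathbf{I}_{\St}-\gamma P^\pi)^{-1}\tilde r^\pi}$ with respect to $\pi$.

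First I would recall the classical policy-gradient derivation (à la \citet{sutton1999policy}): writing $v^\pi(s) = \sum_{a}\pi_s(a)\,\tilde q^\pi(s,a)$ with $\tilde q^\pi(s,a)$ the $q$-function associated to the modified reward, one differentiates, uses the product rule, and unrolls the recursion, picking up the discounted occupancy measure $\mu_\pi$ under $P_0$. The one subtlety relative to the vanilla case is that the per-state reward term $\tilde r^\pi(s)$ is itself a nonlinear function of $\pi_s$ (through $\norm{\pi_s}$), so its gradient is not simply $\tilde q^\pi(s,a)$ times $\nabla\pi_s(a)$; one gets an extra term from $\nabla_{\pi_s}(-\alpha_s^r\norm{\pi_s}) = -\alpha_s^r\,\pi_s/\norm{\pi_s}$. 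I would make this precise by treating $\tilde r$ as a reward that depends on the current state's action distribution only — this is exactly the setting of \citet{geist2019theory}'s regularized policy gradient — so that the chain rule produces, at each state visited, the term $q^{\pi,\Uc}(s,a) - \alpha_s^r \pi_s(a)/\norm{\pi_s}$ inside the expectation. Then, introducing the score function $\nabla\ln\pi_s(a)$ via $\nabla\pi_s(a) = \pi_s(a)\nabla\ln\pi_s(a)$ and folding the occupancy-measure sum into an expectation $\mathbb{E}_{(s,a)\sim\mu_\pi}[\cdot]$ yields the claimed formula, after identifying $\tilde q^\pi = q^{\pi,\Uc}$ (the robust $q$-function), which holds because $v^{\pi,\Omega}_{(P_0,r_0)} = v^{\pi,\Uc}$ and the associated $q$-functions coincide.

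Concretely the steps are: (1) invoke Cor.~\ref{cor: reg for ball reward uncertainty} to rewrite $J_{\Uc}(\pi)$ as a regularized (hence nominal-MDP-with-modified-reward) objective; (2) expand $v^{\pi,\Uc}(s) = \innorm{\pi_s, q^{\pi,\Uc}(s,\cdot)} - \alpha_s^r\norm{\pi_s}$ using the fixed-point identity; (3) differentiate, apply the product rule, and isolate the recursive term $\gamma\sum_{s'}\nabla v^{\pi,\Uc}(s')\sum_a\pi_s(a)P_0(s'|s,a)$ versus the local terms; (4) unroll the recursion against $\mu_0$ to produce the discounted occupancy measure $\mu_\pi$ under $P_0$; (5) substitute $\nabla\pi_s(a) = \pi_s(a)\nabla\ln\pi_s(a)$ and collect into the stated expectation. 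The main obstacle, and the place I would be most careful, is step (3)–(4): ensuring that the nonlinearity of $\tilde r^\pi$ in $\pi_s$ is handled correctly — that the regularization gradient $-\alpha_s^r\pi_s/\norm{\pi_s}$ enters additively alongside $q^{\pi,\Uc}$ rather than perturbing the recursion — and checking the interchange of differentiation with the matrix inverse $(\mathbf{I}_{\St}-\gamma P^\pi)^{-1}$ (valid since $\gamma<1$ makes it smooth in $\pi$, and $\norm{\cdot}$ is differentiable wherever $\pi_s\neq 0$). One should also note the mild caveat that $\norm{\pi_s}$ is non-differentiable at $\pi_s = 0$, which does not occur for a genuine distribution but is worth flagging; elsewhere the gradient expression is exact.
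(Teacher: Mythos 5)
Your proposal is correct and follows essentially the same route as the paper: reduce to the regularized MDP with $\Omega_s(\pi_s)=\alpha_s^r\norm{\pi_s}$ via the robustness--regularization equivalence, differentiate the Bellman fixed-point identity with the product rule (picking up the extra $-\alpha_s^r\pi_s/\norm{\pi_s}$ term from the norm regularizer), unroll the recursion against $\mu_0$ to obtain the discounted occupancy measure under $P_0$, and apply the log-derivative trick. The paper merely proves the slightly more general \rr version first and then sets $\alpha_s^P=0$, which is a cosmetic rather than substantive difference.
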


Although Prop.~\ref{prop: policy gradient reward robust mdps} is an application of \citep[Appx.~D.3]{geist2019theory} for a specific regularized MDP, its reward-robust formulation is novel and suggests another simplification of robust methods. Indeed, previous works that exploit policy-gradient on robust MDPs involve the occupancy measure of the worst-case model \citep{mankowitz2018learning}, whereas our result sticks to the nominal. In practice, Prop.~\ref{prop: policy gradient reward robust mdps} enables to learn a robust policy by sampling transitions from the nominal model instead of all uncertain models. This has a twofold advantage: (i) it avoids an additional computation of the minimum as done in \citep{pinto2017robust, shashua2017deep, mankowitz2018learning, derman2018soft}, where the authors sample next-state transitions and rewards based on all parameters from the uncertainty set, then update a policy based on the worst outcome; (ii) it releases from restricting to finite uncertainty sets. In fact, our regularizer accounts for robustness regardless of the sampling procedure, whereas the parallel simulations of \citet{pinto2017robust, mankowitz2018learning, derman2018soft} require the uncertainty set to be finite. Technical difficulties are yet to be addressed for generalizing our result to transition-uncertain MDPs, because of the interdependence between the regularizer and the value function (see Secs.~\ref{sec: transition robust MDPs}-\ref{sec: rr mdps}). We detail more on this issue in Appx.~\ref{apx: policy gradient}. Recently, \citet{wang2022policy} established a robust policy gradient for transition-uncertain MDPs, but their setting focuses on a fixed mixture between the nominal kernel and an arbitrary transition matrix, \ie $\Uc = ((1-R)P_0 + R\Delta_{\St}^{\X} )\times \{r_0\}$. By design, this yields an $(s,a)$-rectangular uncertainty set included in an $\ell_{\infty}$-ball of size $R$ around the nominal, whereas our setup considers more general $s$-rectangular uncertainty. 

\section{General robust MDPs}
\label{sec: transition robust MDPs}

Now that we have established policy regularization as a reward-robust problem, we would like to study the opposite question: can any robust MDP with both uncertain reward and transition be solved using regularization instead of robust optimization? If so, is the regularization function easy to determine? In this section, we answer positively to both questions for properly defined robust MDPs. This  greatly facilitates robust RL, as it avoids the increased complexity of robust planning algorithms  while still reaching robust performance.  

The following theorem establishes that similarly to reward-robust MDPs, robust MDPs can be formulated through regularization (see proof in Appx.~\ref{apx: transition robust}). Although the regularizer is also a support function in that case, it depends on both the policy and the value objective, which may further explain the difficulty of dealing with robust MDPs. 

\begin{theorem}[General robust MDP]
\label{thm: transition robust -- reg}
Assume that $\Uc =   (P_0+ \Pc)\times(r_0+ \Rc)$. Then, for any policy $\pi\in\Delta_{\A}^{\St}$, the robust value function $v^{ \pi, \Uc}$ is the optimal solution of the convex optimization problem:
\begin{align}
\label{eq: reg transition robust}
        \max_{v\in\R^{\St}} \innorm{v, \mu_0}  \text{ s. t. } v(s)\leq T_{(P_0,r_0)}^{\pi}v(s)  -\sigma_{\Rc_s}(-\pi_s) -\sigma_{\Pc_s}(-\gamma v\cdot\pi_s) \text{ for all } s\in\St,   
\end{align}
where $[v\cdot\pi_s](s',a):=v(s')\pi_s(a)\quad \forall (s',a)\in\X$.
\end{theorem}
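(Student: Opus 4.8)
The plan is to reduce everything to Proposition~\ref{prop: iyengar}, which already identifies $v^{\pi,\Uc}$ as the optimal solution of \eqref{eq: iyengar ro primal}, and then to show that the continuum of robust-feasibility constraints in \eqref{eq: iyengar ro primal} collapses into the single pointwise family appearing in \eqref{eq: reg transition robust}. The first step is the observation that $v\leq T^{\pi}_{(P,r)}v$ for every $(P,r)\in\Uc$ is equivalent to $v(s)\leq \inf_{(P,r)\in\Uc}T^{\pi}_{(P,r)}v(s)=[T^{\pi,\Uc}v](s)$ for every $s\in\St$, so it is enough to evaluate the robust Bellman operator $T^{\pi,\Uc}$ in closed form.

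The second step uses the $s$-rectangular structure $\Uc=\times_{s\in\St}\Uc_s$ with $\Uc_s=(P_0(\cdot|s,\cdot)+\Pc_s)\times(r_0(s,\cdot)+\Rc_s)$. Since $T^{\pi}_{(P,r)}v(s)=\sum_{a\in\A}\pi_s(a)\big(r(s,a)+\gamma\innorm{P(\cdot|s,a),v}\big)$ depends only on the $s$-th row $(P(\cdot|s,\cdot),r(s,\cdot))$, the minimization over $\Uc$ localizes to $\Uc_s$. Writing $r(s,\cdot)=r_0(s,\cdot)+\bar r$ with $\bar r\in\Rc_s$ and $P(\cdot|s,\cdot)=P_0(\cdot|s,\cdot)+p$ with $p\in\Pc_s$, linearity gives $T^{\pi}_{(P,r)}v(s)=T^{\pi}_{(P_0,r_0)}v(s)+\innorm{\pi_s,\bar r}+\gamma\innorm{p,\,v\cdot\pi_s}$, where I invoke the tensor identity $\sum_{a\in\A}\pi_s(a)\innorm{p(\cdot|a),v}=\innorm{p,\,v\cdot\pi_s}$ for $[v\cdot\pi_s](s',a)=v(s')\pi_s(a)$.

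The third step exploits the product form $\Uc_s=(P_0+\Pc)_s\times(r_0+\Rc)_s$ to minimize the two perturbation terms independently: $\min_{\bar r\in\Rc_s}\innorm{\pi_s,\bar r}=-\max_{\bar r\in\Rc_s}\innorm{-\pi_s,\bar r}=-\sigma_{\Rc_s}(-\pi_s)$ and $\min_{p\in\Pc_s}\gamma\innorm{p,\,v\cdot\pi_s}=-\sigma_{\Pc_s}(-\gamma\,v\cdot\pi_s)$, by the definition of the support function in Sec.~\ref{sec: notations}. Hence $[T^{\pi,\Uc}v](s)=T^{\pi}_{(P_0,r_0)}v(s)-\sigma_{\Rc_s}(-\pi_s)-\sigma_{\Pc_s}(-\gamma\,v\cdot\pi_s)$, so the feasible sets of \eqref{eq: iyengar ro primal} and \eqref{eq: reg transition robust} coincide; as both problems maximize the same linear objective $\innorm{v,\mu_0}$, they admit the same optimal solution $v^{\pi,\Uc}$, which proves the theorem. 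In passing, $v\mapsto-\sigma_{\Pc_s}(-\gamma\,v\cdot\pi_s)$ is concave (a support function precomposed with a linear map, negated), so \eqref{eq: reg transition robust} is indeed a convex program.

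The routine-but-delicate points are: (a) the localization step, which needs $s$-rectangularity together with the implicit convention that $\Pc_s$ collects only admissible perturbations, i.e.\ $P_0(\cdot|s,\cdot)+\Pc_s\subseteq\Delta_{\St}^{\A}$; (b) the bookkeeping behind $\innorm{p,\,v\cdot\pi_s}$; and (c) turning $\min$ into $-\max$ inside the support function. The only genuinely conceptual obstacle is recognizing that the transition perturbation couples to $v$ through $v\cdot\pi_s$ — which is precisely why the resulting regularizer $\sigma_{\Pc_s}(-\gamma\,v\cdot\pi_s)$ is value-dependent, in contrast with the purely policy-dependent reward term $\sigma_{\Rc_s}(-\pi_s)$ — but mechanically the argument is just Proposition~\ref{prop: iyengar} plus the Fenchel/support-function identities.
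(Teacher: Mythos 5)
Your proposal is correct and arrives at the paper's conclusion through the same structural decomposition (localize by $s$-rectangularity, split the model into nominal plus perturbation, separate the reward and transition minimizations, and identify the bilinear coupling $\sum_{a}\pi_s(a)\innorm{p(\cdot|a),v}=\innorm{p, v\cdot\pi_s}$), but the final step differs in a way worth noting. The paper rewrites each inner minimization as $\min_x\{\innorm{c,x}+\delta_{\mathcal{C}}(x)\}$ and invokes Fenchel--Rockafellar duality, computing the conjugate of the linear map $\varphi(P_s)=\gamma\innorm{P_s,v\cdot\pi_s}$ (which is the indicator of a singleton) to land on $-\sigma_{\Pc_s}(-\gamma v\cdot\pi_s)$; you instead observe directly that $\min_{p\in\Pc_s}\innorm{p,c}=-\sigma_{\Pc_s}(-c)$ by the definition of the support function, which is the same computation with the duality machinery stripped away. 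Your route is more elementary and makes transparent that no duality theorem is actually needed when the inner objective is linear in the uncertain parameter; the paper's framing buys a uniform template that generalizes to non-linear inner objectives (and is the lens the authors emphasize when comparing to LP-based prior work), at the cost of a constraint-qualification check ($0\in\mathrm{core}(\cdot)$) that your argument simply does not require. Two small points in your favor: you correctly reduce the continuum of constraints to the pointwise family via $[T^{\pi,\Uc}v](s)$ before evaluating it in closed form, and you flag the implicit requirement $P_0(\cdot|s,\cdot)+\Pc_s\subseteq\Delta_{\St}^{\A}$, a simplex constraint the paper acknowledges only later (in the experimental appendix) that it ignores when computing the support function. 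One caveat shared with the paper: writing the infimum over $\Uc_s$ as a support function (a maximum) tacitly assumes the perturbation sets are compact, or one should read $\sigma$ as a supremum.
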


The upper-bound in the inequality constraint \eqref{eq: reg transition robust} is of the same spirit as the regularized Bellman operator: the first term is a standard, non-regularized Bellman operator on the nominal model $(P_0, r_0)$ to which we subtract a policy and value-dependent function playing the role of regularization. 
This function reminds that of \citet[Thm. 3.1]{derman2020distributional} also coming from conjugacy. This is the only similarity between both regularizers: in \citep{derman2020distributional}, the Legendre-Fenchel transform is applied on a different type of function and results in a regularization term that has no closed form but can only be bounded from above. Moreover, the setup considered there is different since it studies distributionally robust MDPs. As such, it involves general convex optimization, whereas we focus on the robust formulation of an LP. 

The support function further simplifies when the uncertainty set is a ball, as shown below. Yet, the dependence of the regularizer on the value function prevents us from readily applying the regularized MDP tool-set. We shall study the properties of this new regularization function in Sec.~\ref{sec: rr mdps}. 

\begin{corollary}
\label{cor: transition robust mdp -- reg}
Assume that $\Uc =   (P_0+ \Pc)\times(r_0+ \Rc)$ with $\Pc_s:= \{P_s\in\R^{\X}: \norm{P_s}\leq \alpha_s^P\}$ and $\Rc_s:= \{r_s\in\R^{\A}: \norm{r_s}\leq \alpha_s^r\}$ for all $s\in\St$. Then, the robust value function $v^{ \pi, \Uc}$ is the optimal solution of the convex optimization problem:
\begin{align}
\label{eq: co norm robust}
    \max_{v\in\R^{\St}} \innorm{v, \mu_0}  \text{ s. t. } v(s)\leq T_{(P_0,r_0)}^{\pi}v(s) -\alpha_s^r\norm{\pi_s} - \alpha_s^P\gamma \norm{v} \norm{\pi_s}  \text{ for all } s\in\St. 
\end{align}
\end{corollary}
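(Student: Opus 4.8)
The plan is to derive Cor.~\ref{cor: transition robust mdp -- reg} directly from Thm.~\ref{thm: transition robust -- reg} by specializing the two support functions that appear in the constraint \eqref{eq: reg transition robust} to $\ell_2$-balls. By Thm.~\ref{thm: transition robust -- reg}, for any $\pi\in\Delta_{\A}^{\St}$ the robust value function $v^{\pi,\Uc}$ is the optimal solution of $\max_{v\in\R^{\St}}\innorm{v,\mu_0}$ subject to $v(s)\le T_{(P_0,r_0)}^{\pi}v(s)-\sigma_{\Rc_s}(-\pi_s)-\sigma_{\Pc_s}(-\gamma v\cdot\pi_s)$ for all $s\in\St$. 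Since the objective and the left-hand side of the constraint are untouched, it suffices to rewrite the two support functions in closed form; the feasible set, hence the optimal solution, is then literally that of \eqref{eq: co norm robust}.

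First I would handle the reward term. For a centered ball $\Rc_s=\{r_s\in\R^{\A}:\norm{r_s}\le\alpha_s^r\}$, the support function is $\sigma_{\Rc_s}(\y)=\max_{\norm{r_s}\le\alpha_s^r}\innorm{r_s,\y}=\alpha_s^r\norm{\y}_*$, and since the $\ell_2$-norm is self-dual (Sec.~\ref{sec: notations}) this is $\alpha_s^r\norm{\y}$. Taking $\y=-\pi_s$ gives $\sigma_{\Rc_s}(-\pi_s)=\alpha_s^r\norm{\pi_s}$, recovering exactly the reward regularizer of Cor.~\ref{cor: reg for ball reward uncertainty}.

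Next I would handle the transition term. The same ball computation over $\Pc_s=\{P_s\in\R^{\X}:\norm{P_s}\le\alpha_s^P\}$ gives $\sigma_{\Pc_s}(\y)=\alpha_s^P\norm{\y}$ for $\y\in\R^{\X}$, so $\sigma_{\Pc_s}(-\gamma v\cdot\pi_s)=\alpha_s^P\gamma\norm{v\cdot\pi_s}$. It then remains to observe that $v\cdot\pi_s$ is a rank-one tensor, $[v\cdot\pi_s](s',a)=v(s')\pi_s(a)$, so $\norm{v\cdot\pi_s}^2=\sum_{(s',a)\in\X}v(s')^2\pi_s(a)^2=\big(\sum_{s'}v(s')^2\big)\big(\sum_{a}\pi_s(a)^2\big)=\norm{v}^2\norm{\pi_s}^2$, i.e. $\norm{v\cdot\pi_s}=\norm{v}\,\norm{\pi_s}$. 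Substituting both closed forms into the constraint of Thm.~\ref{thm: transition robust -- reg} yields $v(s)\le T_{(P_0,r_0)}^{\pi}v(s)-\alpha_s^r\norm{\pi_s}-\alpha_s^P\gamma\norm{v}\norm{\pi_s}$, which is precisely \eqref{eq: co norm robust}, and the optimal solution is unchanged.

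There is no deep obstacle here — the proof is in essence a substitution into Thm.~\ref{thm: transition robust -- reg} — but two points deserve care. The first is that the norm defining $\Pc_s$ must be the (self-dual) $\ell_2$-norm on $\R^{\X}$, so that $\sigma_{\Pc_s}$ is evaluated against the \emph{same} norm that factorizes as $\norm{v\cdot\pi_s}=\norm{v}\norm{\pi_s}$; for a general norm one would pick up its dual norm and lose the clean product form. The second is that $\Pc_s$ is taken to be a full ball centered at the origin, with no intersection with an affine constraint forcing $P_0+P_s$ to remain a transition kernel; this is what makes $\sigma_{\Pc_s}(\y)=\alpha_s^P\norm{\y}$ an exact identity rather than merely an upper bound, and it is the modeling convention already fixed for $\Uc$ throughout this section.
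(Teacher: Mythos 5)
Your proof is correct and follows essentially the same route as the paper's: apply Thm.~\ref{thm: transition robust -- reg}, evaluate each support function over an $\ell_2$-ball via the (self-)dual norm to get $\alpha_s^r\norm{\pi_s}$ and $\alpha_s^P\gamma\norm{v\cdot\pi_s}$, and then factorize $\norm{v\cdot\pi_s}=\norm{v}\norm{\pi_s}$ using the rank-one structure of $v\cdot\pi_s$. Your closing remarks on self-duality and on $\Pc_s$ being an unconstrained ball (no simplex intersection) match the paper's conventions and its appendix discussion of arbitrary norms.
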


We restrict our statement to the $\ell_2$-norm for notation convenience only, the dual norm of $\ell_2$ being $\ell_2$ itself. In fact, one can consider two different norms for reward and transition uncertainties. Thus, Cor.~\ref{cor: transition robust mdp -- reg} can be rewritten with an arbitrary norm, which would reveal a dual norm $\norm{\cdot}_*$ instead of $\norm{\cdot}$ in Eq.~\eqref{eq: co norm robust} (see proof in Appx.~\ref{apx: cor transition robust}). As a result, our regularization function recovers a robust value function independently of the chosen norm, which extends previous results from \citep{ho2018fast, grand2021scalable}. Indeed, \citet{ho2018fast} reduce the complexity of robust planning under the $\ell_1$-norm only, while \citet{grand2021scalable} focus on KL and $\ell_2$ ball-constrained uncertainty sets. Both works rely on the specific structure induced by the divergence they consider to derive more efficient robust Bellman updates. Differently, our method circumvents these updates using a generic, problem-independent regularization function while still encompassing $s$-rectangular uncertainty sets as in \citep{ho2018fast, grand2021scalable}.

\section{\rr MDPs}
\label{sec: rr mdps}
 
In Sec.~\ref{sec: transition robust MDPs}, we showed that for general robust MDPs, the optimization constraint involves a regularization term that depends on the value function itself. This adds a difficulty to the reward-robust case where the regularization only depends on the policy. Yet, we provided an explicit regularizer for general robust MDPs that are ball-constrained. In this section, we introduce \rr MDPs, an extension of regularized MDPs that combines policy and value regularization. The core idea is to further regularize the Bellman operators with a value-dependent term that recovers the support functions we derived from the robust optimization problems of Secs.~\ref{sec: reward robust MDPs}-\ref{sec: transition robust MDPs}.

\begin{definition}[\rr Bellman operators]
For all $v\in\R^{\St}$, define $\Omega_{v, \rop}: \Delta_{\A}\to \R$ as 
$\Omega_{v, \rop}(\pi_s):= \norm{\pi_s} (\alpha_s^r + \alpha_s^P \gamma \norm{v})$. The \rr Bellman evaluation and optimality operators are defined as 
\begin{align*}
    [T^{\pi,\rop}v](s) &:= T_{(P_0,r_0)}^{\pi}v(s)-\Omega_{v, \rop}(\pi_s)  \quad\forall s\in\St,
    \\
    [T^{*, \rop}v](s) &:= \max_{\pi\in\Delta_{\A}^{\St}} [T^{\pi,\rop}v](s)= \Omega_{v, \rop}^*(q_s) \quad\forall s\in\St.
\end{align*}
For any function $v\in\R^{\St}$, the associated unique greedy policy is defined as 
$$\pi_s =  \argmax_{\pi_s\in\Delta_{\A}} T^{\pi,\rop}v(s)= \nabla \Omega_{v, \rop}^*(q_s), \quad\forall s\in\St,$$
that is, in vector form, $\pi = \nabla \Omega_{v, \rop}^*(q) =: \mathcal{G}_{\Omega_{\rop}}(v) \iff T^{\pi,\rop}v = T^{*, \rop}v$. 
\end{definition}

The \rr Bellman evaluation operator is not linear because of the functional norm appearing in the regularization function. Yet, under the following assumption, it is contracting and we can apply Banach's fixed point theorem to define the \rr value function. 
 
\begin{assumption}[Bounded radius]
\label{asm: bound alpha}
For all $s\in\St$, there exists $\epsilon_s > 0$ such that $$\alpha_s^P\leq \min\left( \frac{1-\gamma-\epsilon_s}{\gamma\sqrt{\abs{\St}}}; \min_{\substack{\mathbf{u}_{\A}\in\R^{\A}_+, \norm{\mathbf{u}_{\A}}=1\\
\mathbf{v}_{\St}\in\R^{\St}_+, \norm{\mathbf{v}_{\St}}=1}} \mathbf{u}_{\A}^\top P_0(\cdot| s,\cdot)\mathbf{v}_{\St}\right).$$ 
\end{assumption}

Asm.~\ref{asm: bound alpha} requires to upper bound the ball radius of transition uncertainty sets. The first term in the minimum is needed for establishing contraction of \rr Bellman operators (item (iii) in Prop.~\ref{prop: r2 bellman evaluation operator}), while the second one is used for ensuring monotonicity (item (i) in Prop.~\ref{prop: r2 bellman evaluation operator}). We remark that the former depends on the original discount factor $\gamma$: radius $\alpha_s^P$ must be smaller as $\gamma$ tends to $1$ but can arbitrarily grow as $\gamma$ decreases to $0$, without altering contraction. Indeed, larger $\gamma$ implies longer time horizon and higher stochasticity, which explains why we need tighter level of uncertainty then. Otherwise, value and policy regularization seem unable to handle the mixed effects of parametric and stochastic uncertainties. The additional dependence on the state-space size comes from the $\ell_2$-norm chosen for the ball constraints. In fact, for any $\ell_p$-norm of dual $\ell_q$, $\abs{\St}^{\frac{1}{q}}$ replaces $\sqrt{\abs{\St}}$ in the denominator, so the bound becomes independent of $\abs{\St}$ as $(p,q)$ tends to $(1,\infty)$ (see Appx.~\ref{apx: rr operators}). Although we recognize a generalized Rayleigh quotient-type problem in the second minimum \citep{parlett1974rayleigh}, its interpretation in our context remains unclear. Asm.~\ref{asm: bound alpha} enables the \rr Bellman operators to admit a unique fixed point, among other nice properties. We formalize this below (see proof in Appx.~\ref{apx: rr operators}).

\begin{proposition}
\label{prop: r2 bellman evaluation operator}
Suppose that Asm.~\ref{asm: bound alpha} holds. Then, the following properties hold:\\
\begin{itemize*}
    \item[(i)] Monotonicity: For all $v_1, v_2\in\R^{\St}$ such that $v_1\leq v_2$, we have $T^{\pi,\rop}v_1 \leq T^{\pi,\rop}v_2$ and \\
    $ T^{*, \rop}v_1 \leq T^{*, \rop}v_2$.  \\
    \item[(ii)] Sub-distributivity: For all $v_1\in\R^{\St}, c\in\R$, we have $T^{\pi,\rop}(v_1 + c\mathbbm{1}_{\St})\leq T^{\pi,\rop}v_1 + \gamma c\mathbbm{1}_{\St}$ and $ T^{*, \rop}(v_1 + c\mathbbm{1}_{\St})\leq T^{*, \rop}v_1 + \gamma c\mathbbm{1}_{\St}$, $\forall c\in\R$. \\
    \item[(iii)] Contraction: Let $\epsilon_*:= \min_{s\in\St}\epsilon_s>0$. Then, for all $v_1, v_2\in\R^{\St}$, we have\\
    $\norm{T^{\pi,\rop}v_1 -  T^{\pi,\rop}v_2}_{\infty} \leq (1-\epsilon_*)\norm{v_1-v_2}_\infty$ and
    $\norm{ T^{*, \rop}v_1 -  T^{*, \rop}v_2}_{\infty} \leq (1-\epsilon_*)\norm{v_1-v_2}_\infty$.
\end{itemize*}
\end{proposition}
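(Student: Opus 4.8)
The plan is to prove the three properties in the order (i), (ii), (iii), since monotonicity and sub-distributivity are the natural stepping stones toward contraction via the standard Blackwell-type argument. Throughout, it is cleanest to work with the explicit form $[T^{\pi,\rop}v](s) = r_0^\pi(s) + \gamma [P_0^\pi v](s) - \norm{\pi_s}(\alpha_s^r + \alpha_s^P\gamma\norm{v})$, and to recall from the definition that $[T^{*,\rop}v](s) = \max_{\pi_s\in\Delta_\A}[T^{\pi,\rop}v](s) = \Omega_{v,\rop}^*(q_s)$, where $q_s$ is the $q$-function associated to $v$ under $(P_0,r_0)$. For the optimality-operator halves of each claim, the standard trick applies: if the claim holds for every fixed $\pi$, then taking the maximum over $\pi$ preserves it, using that $\max$ is monotone and that $\max_\pi(f(\pi)+c) = (\max_\pi f(\pi))+c$.

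\emph{Step (i), monotonicity.} Fix $\pi$ and $v_1\le v_2$. The term $r_0^\pi + \gamma P_0^\pi v$ is monotone in $v$ since $P_0^\pi$ has nonnegative entries. The obstacle is the regularization term: $v\mapsto -\norm{\pi_s}\alpha_s^P\gamma\norm{v}$ is \emph{not} monotone in general, because $\norm{v_1}$ need not be $\le\norm{v_2}$ when $v_1\le v_2$ (e.g. $v_1$ very negative). This is exactly where the second term in Asm.~\ref{asm: bound alpha} enters. I would bound the discrepancy $[T^{\pi,\rop}v_2](s) - [T^{\pi,\rop}v_1](s) \ge \gamma[P_0^\pi(v_2-v_1)](s) - \alpha_s^P\gamma\norm{\pi_s}\,(\norm{v_1}-\norm{v_2})$, write $w := v_2 - v_1 \ge 0$, and show $[P_0^\pi w](s) \ge \alpha_s^P\norm{\pi_s}(\norm{v_1} - \norm{v_2})$. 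Using $\norm{v_1} - \norm{v_2} \le \norm{v_1 - v_2} = \norm{w}$ (reverse triangle inequality) and $\norm{\pi_s}\le 1$, it suffices that $[P_0^\pi w](s) \ge \alpha_s^P\norm{w}$ for all $w\ge 0$; since $[P_0^\pi w](s) = \sum_a\pi_s(a)\innorm{P_0(\cdot|s,a),w}$ and $\pi_s\in\R_+^\A$, $w\in\R_+^\St$, normalizing reduces this to the generalized Rayleigh quotient lower bound $\min_{\mathbf u_\A\ge 0,\norm{\mathbf u_\A}=1;\ \mathbf v_\St\ge 0,\norm{\mathbf v_\St}=1}\mathbf u_\A^\top P_0(\cdot|s,\cdot)\mathbf v_\St$, which is precisely the quantity Asm.~\ref{asm: bound alpha} forces $\alpha_s^P$ to lie below. (One has to be slightly careful that $\norm{\pi_s/\norm{\pi_s}}=1$ in the $\ell_2$ sense so the quotient bound applies; the case $\pi_s=0$ is vacuous as $\pi_s\in\Delta_\A$.)

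\emph{Step (ii), sub-distributivity.} Fix $\pi$ and $c\in\R$. Compute $[T^{\pi,\rop}(v+c\mathbbm 1_\St)](s) = r_0^\pi(s) + \gamma[P_0^\pi v](s) + \gamma c - \norm{\pi_s}(\alpha_s^r + \alpha_s^P\gamma\norm{v+c\mathbbm 1_\St})$, using that $P_0^\pi$ is stochastic so $P_0^\pi\mathbbm 1_\St = \mathbbm 1_\St$. Comparing to $[T^{\pi,\rop}v](s) + \gamma c$, the claim reduces to $\norm{v+c\mathbbm 1_\St}\ge\norm{v}$... which is false in general; the correct direction is that the regularization term changes by $-\norm{\pi_s}\alpha_s^P\gamma(\norm{v+c\mathbbm 1_\St}-\norm{v})$ and we need this to be $\le 0$, i.e. we only get the inequality (not equality), and we need $\norm{v+c\mathbbm 1_\St}\ge\norm{v}$ to fail gracefully — so instead I should note the inequality goes the right way precisely because we are subtracting a nonnegative multiple of $(\norm{v+c\mathbbm 1_\St} - \norm{v})$, and when this is negative the bound $T^{\pi,\rop}(v+c\mathbbm 1_\St)\le T^{\pi,\rop}v+\gamma c\mathbbm 1_\St$ can fail. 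The resolution: one actually bounds $\norm{v+c\mathbbm 1_\St}-\norm{v}\ge -\abs{c}\sqrt{\abs\St}$... this is getting delicate, so the honest plan is to bound $\big|\norm{v+c\mathbbm 1_\St}-\norm{v}\big|\le\abs c\sqrt{\abs\St}$ and absorb the slack into the $\gamma$-vs-$(1-\epsilon_*)$ gap — but since the statement claims the clean $\gamma c$ bound, I expect the intended argument uses that we only need the \emph{upper} bound and $-\norm{\pi_s}\alpha_s^P\gamma\norm{v+c\mathbbm 1_\St}\le -\norm{\pi_s}\alpha_s^P\gamma\norm{v} + \norm{\pi_s}\alpha_s^P\gamma\abs c\sqrt{\abs\St}\le -\norm{\pi_s}\alpha_s^P\gamma\norm{v}+(1-\gamma-\epsilon_s)\abs c$ by Asm.~\ref{asm: bound alpha}; hmm, that still does not give exactly $\gamma c$. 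I would therefore revisit: the cleanest route is that $c\ge 0$ and $c<0$ should perhaps be treated together with the monotonicity result, deriving sub-distributivity as a consequence of (i) applied to $v$ and $v+c\mathbbm 1_\St$; I flag this as a point to reconcile with the appendix.

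\emph{Step (iii), contraction.} Here I follow the classical Blackwell argument adapted to sub-distributivity. Set $d := \norm{v_1-v_2}_\infty$, so $v_1\le v_2 + d\mathbbm 1_\St$ and $v_2\le v_1+d\mathbbm 1_\St$. Apply monotonicity (i) then sub-distributivity (ii): $T^{\pi,\rop}v_1\le T^{\pi,\rop}(v_2+d\mathbbm 1_\St)\le T^{\pi,\rop}v_2 + \gamma d\mathbbm 1_\St$, and symmetrically — but this only yields the $\gamma$-contraction, whereas the statement claims the sharper rate $(1-\epsilon_*)$ with $\gamma\le 1-\epsilon_*$ being weaker (since $\epsilon_s\le 1-\gamma$ forces $1-\epsilon_*\ge\gamma$). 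So the $(1-\epsilon_*)$ factor must come from \emph{gaining} back a term: the regularizer subtracts $\norm{\pi_s}\alpha_s^P\gamma\norm{v}$, and the difference of these terms between $v_1,v_2$ contributes $-\alpha_s^P\gamma\norm{\pi_s}(\norm{v_1}-\norm{v_2})$, whose magnitude is at most $\alpha_s^P\gamma d$; combined with the first-term contraction $\gamma d$ one gets $\le(\gamma + \gamma\alpha_s^P\sqrt{\abs\St})d\le(1-\epsilon_*)d$ precisely by the first clause of Asm.~\ref{asm: bound alpha}, namely $\alpha_s^P\gamma\sqrt{\abs\St}\le 1-\gamma-\epsilon_s$ hence $\gamma+\gamma\alpha_s^P\sqrt{\abs\St}\le 1-\epsilon_s\le 1-\epsilon_*$. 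The sign bookkeeping — that the regularization difference \emph{adds} rather than cancels in the worst case — is the main obstacle, and it is handled by taking absolute values carefully and using $\big|\norm{v_1}-\norm{v_2}\big|\le\norm{v_1-v_2}_2\le\sqrt{\abs\St}\,\norm{v_1-v_2}_\infty$. The optimality-operator version then follows since $\big|\max_\pi f(\pi)-\max_\pi g(\pi)\big|\le\max_\pi\abs{f(\pi)-g(\pi)}$. I would expect the bulk of the appendix proof to be exactly this norm-difference estimate together with verifying the two clauses of Asm.~\ref{asm: bound alpha} are used in the right places.
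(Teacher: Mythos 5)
Your treatments of (i) and (iii) match the paper's appendix essentially step for step: for monotonicity you isolate the regularization discrepancy $\alpha_s^P\gamma\norm{\pi_s}(\norm{v_2}-\norm{v_1})$, control it by the reverse triangle inequality, and cancel it against $\gamma\innorm{\pi_s, P_0(\cdot|s,\cdot)(v_2-v_1)}$ via the Rayleigh-quotient clause of Asm.~\ref{asm: bound alpha}; for contraction you forgo the Blackwell route and bound the operator difference directly by $\gamma(1+\alpha_s^P\sqrt{\abs{\St}})\norm{v_1-v_2}_\infty\le(1-\epsilon_s)\norm{v_1-v_2}_\infty$ using the first clause of the assumption. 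This is exactly what the paper does (it even proves the $\ell_p$ version and specializes to $p=2$), and your observation that the claimed rate $1-\epsilon_*\ge\gamma$ is weaker than what a Blackwell argument would deliver is precisely why the direct estimate suffices. The optimality-operator halves are handled in the paper by deferring to \citep{geist2019theory} via Prop.~\ref{prop: convex analysis}, consistent with your $\max_\pi$ remark.

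The genuine gap is item (ii), and you were right not to paper over it. Your reduction is correct: the claim is equivalent to $\norm{v+c\mathbbm{1}_{\St}}\ge\norm{v}$, which fails in general (take $v=-c\mathbbm{1}_{\St}$ with $c\ne 0$: then $\norm{v+c\mathbbm{1}_{\St}}=0<\norm{v}$, and $T^{\pi,\rop}(v+c\mathbbm{1}_{\St})$ strictly exceeds $T^{\pi,\rop}v+\gamma c\mathbbm{1}_{\St}$ wherever $\alpha_s^P>0$). The appendix establishes (ii) by asserting $-\alpha_s^P\gamma\norm{\pi_s}\norm{v+c\mathbbm{1}_{\St}}\le-\alpha_s^P\gamma\norm{\pi_s}(\norm{v}+\norm{c\mathbbm{1}_{\St}})$, which is the triangle inequality applied in the wrong direction; so the step you could not reconcile is in fact an error in the paper's own proof, and (ii) as stated needs either a sign restriction (e.g.\ $v\ge 0$, $c\ge 0$) or an additive correction of order $\alpha_s^P\gamma\abs{c}\sqrt{\abs{\St}}$, as you started to compute. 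Fortunately nothing downstream depends on it: the paper's (and your) proof of (iii) is direct and does not invoke (ii). So your proposal is complete and faithful for (i) and (iii), and honestly incomplete for (ii) for a reason that lies with the statement and its published proof rather than with your argument.
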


We emphasize that the contracting coefficient $1-\epsilon^*$ from Prop.~\ref{prop: r2 bellman evaluation operator} is different from the original discount factor $\gamma$. Yet, as Asm.~\ref{asm: bound alpha} suggests it, an intrinsic dependence between $\gamma$ and 
$\epsilon^*$ makes the \rr Bellman updates similar to the standard ones: when $\gamma$ tends to $0$, the value of $\epsilon^*$ required for Asm.~\ref{asm: bound alpha} to hold increases, which makes the contracting coefficient $1-\epsilon^*$ tend to 0 as well, \ie the two contracting coefficients behave similarly. 
The contracting feature of both \rr Bellman operators finally leads us to introduce \rr value functions.

\begin{definition}[\rr value functions]
\label{def: rr value fcs}
\begin{itemize*}
    \item[(i)] The \rr value function $v^{\pi, \rop}$ is defined as the unique fixed point of the \rr Bellman evaluation operator: $v^{\pi, \rop} = T^{\pi,\rop}v^{\pi, \rop}$. The associated $q$-function is $q^{\pi,\rop}(s,a) = r_0(s,a) + \gamma\innorm{ P_0(\cdot|s,a), v^{\pi,\rop}}$.
    \item[(ii)] The \rr optimal value function $v^{*, \rop}$ is defined as the unique fixed point of the \rr Bellman optimal operator: $v^{*, \rop} = T^{*,\rop}v^{*, \rop}$. The associated $q$-function is $q^{*,\rop}(s,a) = r_0(s,a) + \gamma\innorm{ P_0(\cdot|s,a), v^{*,\rop}}$.
\end{itemize*}
\end{definition}

The monotonicity of \rr Bellman operators plays a key role in reaching an optimal \rr policy, as we show in the following. A proof can be found in Appx.~\ref{apx: rr optimal policy}.

\begin{theorem}[\rr optimal policy]
\label{thm: rr optimal policy}
The greedy policy $\pi^{*, \rop} = \mathcal{G}_{\Omega_{\rop}}(v^{*, \rop})$ is the unique optimal \rr policy, \ie for all $\pi\in\Delta_{\A}^{\St}, v^{\pi^*, \rop} = v^{*, \rop}\geq v^{\pi, \rop}$. 
\end{theorem}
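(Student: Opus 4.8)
The plan is to run the classical fixed-point argument for (regularized) MDPs, now powered by the monotonicity and contraction of the \rr Bellman operators from Prop.~\ref{prop: r2 bellman evaluation operator}, together with the fact that the greedy policy is single-valued (built into the Definition of \rr Bellman operators, ultimately via the differentiability of the smoothed max operator $\Omega^*_{v,\rop}$ in Prop.~\ref{prop: convex analysis}).

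\textbf{Step 1: $\pi^{*,\rop}$ attains the optimal value.} I would first observe that, by definition of the greedy policy, $T^{\pi^{*,\rop},\rop}v^{*,\rop} = T^{*,\rop}v^{*,\rop} = v^{*,\rop}$, so $v^{*,\rop}$ is a fixed point of $T^{\pi^{*,\rop},\rop}$. Since $T^{\pi^{*,\rop},\rop}$ is a $(1-\epsilon_*)$-contraction by item (iii) of Prop.~\ref{prop: r2 bellman evaluation operator}, Banach's theorem forces its fixed point to be unique; as $v^{\pi^{*,\rop},\rop}$ is by Def.~\ref{def: rr value fcs} exactly that fixed point, we get $v^{\pi^{*,\rop},\rop} = v^{*,\rop}$.

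\textbf{Step 2: $v^{*,\rop}$ dominates every $v^{\pi,\rop}$.} For arbitrary $\pi\in\Delta_{\A}^{\St}$, the definition of $T^{*,\rop}$ as a maximum over policies gives $T^{\pi,\rop}v^{*,\rop}\leq T^{*,\rop}v^{*,\rop}=v^{*,\rop}$. Then monotonicity (item (i) of Prop.~\ref{prop: r2 bellman evaluation operator}) yields by induction $(T^{\pi,\rop})^n v^{*,\rop}\leq v^{*,\rop}$ for all $n\geq 1$, since $(T^{\pi,\rop})^{n+1}v^{*,\rop} = T^{\pi,\rop}\big((T^{\pi,\rop})^n v^{*,\rop}\big)\leq T^{\pi,\rop}v^{*,\rop}\leq v^{*,\rop}$. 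Letting $n\to\infty$ and using that $(T^{\pi,\rop})^n v^{*,\rop}\to v^{\pi,\rop}$ by the contraction of $T^{\pi,\rop}$, I conclude $v^{\pi,\rop}\leq v^{*,\rop}$, and combined with Step 1 this is the asserted inequality.

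\textbf{Step 3: uniqueness.} Finally, suppose $\pi'$ also satisfies $v^{\pi',\rop}=v^{*,\rop}$. Then $T^{\pi',\rop}v^{*,\rop} = T^{\pi',\rop}v^{\pi',\rop} = v^{\pi',\rop} = v^{*,\rop} = T^{*,\rop}v^{*,\rop}$, so $\pi'$ is greedy with respect to $v^{*,\rop}$; since the greedy policy is unique (it equals $\nabla\Omega^*_{v^{*,\rop},\rop}(q^{*,\rop})$, single-valued per the Definition of \rr Bellman operators), $\pi'=\pi^{*,\rop}$. I expect the only genuinely delicate point to be the interchange of limit and fixed point in Step 2 — i.e. that the Picard iterates of $T^{\pi,\rop}$ converge to $v^{\pi,\rop}$ — which is immediate once contraction (Prop.~\ref{prop: r2 bellman evaluation operator}(iii)) is available; everything else is routine bookkeeping with monotonicity and the contraction constant $1-\epsilon_*$.
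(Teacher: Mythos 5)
Your proof is correct and follows essentially the same route as the paper's: your Step~1 is exactly the paper's fixed-point argument via contraction of $T^{\pi^{*,\rop},\rop}$, your Step~2 spells out the standard monotonicity-plus-Picard-iteration argument that the paper simply defers to \citet[Thm.~1]{geist2019theory}, and your Step~3 rests on the same single-valuedness of the greedy map (Prop.~\ref{prop: convex analysis}) that the paper invokes via strong convexity of $\Omega_{v,\rop}$. The only difference is that yours is more self-contained, which is fine.
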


\begin{remark}
\label{rmk: sa rec}
An optimal \rr policy may be stochastic. This is due to the fact that our \rr MDP framework builds upon the general $s$-rectangularity assumption. Robust MDPs with $s$-rectangular uncertainty sets may similarly yield an optimal robust policy that is stochastic \cite[Table 1]{wiesemann2013robust}. Nonetheless, the \rr MDP formulation recovers a deterministic optimal policy in the more specific $(s,a)$-rectangular case, which is in accordance with the robust MDP setting (see proof in Appx.~\ref{apx: rr sa rectangular}). \footnote{The stochasticity of an optimal entropy-regularized policy as in the examples of Sec.~\ref{sec: equivalence reward robust} is not contradicting. Indeed, even though the corresponding uncertainty set is $(s,a)$-rectangular there, it is policy-dependent. }
\end{remark}

\section{Planning in \rr MDPs}
\subsection{\rr Modified Policy Iteration}
 
\begin{wrapfigure}{R}{0.33\linewidth}
\begin{minipage}{\linewidth}
\begin{algorithm}[H]
   \caption{\rr MPI}
   \label{algo mpi}
\begin{algorithmic}
   \STATE {\bfseries Initialize:}  $ v_k\in\R^{\St}$ 
   \REPEAT
   \STATE  $\pi_{k+1} \leftarrow \mathcal{G}_{\Omega_{\rop}}(v_k)$ \\
    \STATE $v_{k+1} \leftarrow (T^{\pi_{k+1},\rop})^m v_k$\\
   \UNTIL{convergence}\\
   \STATE {\bfseries Return: } $\pi_{k+1},  v_{k+1}$
\end{algorithmic}
\end{algorithm}
\end{minipage}
\end{wrapfigure}

All of the results above ensure convergence of MPI in \rr MDPs. We call that method \rr MPI and provide its pseudo-code in Alg.~\ref{algo mpi}. The convergence proof follows the same lines as in \citep{puterman2014markov}. Moreover, the contracting property of the \rr Bellman operator ensures the same convergence rate as in standard and robust MDPs, \ie a geometric convergence rate. On the other hand, \rr MPI reduces the computational complexity of robust MPI by avoiding to solve a max-min problem at each iteration, as this can take polynomial time for general convex programs. Advantageously, the only optimization involved in \rr MPI lies in the greedy step: it amounts to projecting onto the simplex, which can efficiently be performed in linear time \citep{duchi2008efficient}. Still, such projection is not even necessary in the $(s,a)$-rectangular case: as mentioned in Rmk.~\ref{rmk: sa rec}, it then suffices to choose a greedy action in order to eventually achieve an optimal \rr value function. 
 
\subsection{Planning on a Maze}
\label{sec: rr mpi experiment}
 
We aim to compare the computing time of \rr MPI with that of MPI \citep{puterman2014markov} and robust MPI \citep{kaufman2013robust}. The code is available at \url{https://github.com/EstherDerman/r2mdp}. To do so, we run experiments on an Intel(R) Core(TM) i7-1068NG7 CPU @ 2.30GHz machine, which we test on a $5\times5$ grid-world domain. In that environment, the agent starts from a random position and seeks to reach a goal state in order to maximize reward. Thus, the reward function is zero in all states but two: one provides a reward of 1 while the other gives 10.  An episode ends when either one of those two states is attained.  

The partial evaluation of each policy iterate is a building block of MPI. As a sanity check, we evaluate the uniform policy through both \rr and robust policy evaluation (PE) sub-processes, to ensure that the two value outputs coincide.
For simplicity, we focus on an $(s,a)$-rectangular uncertainty set and take the same ball radius $\alpha^r$ (resp.~$\alpha^P$) at each state-action pair for the reward function (resp. transition function). Parameter values and other implementation details are deferred to Appx.~\ref{apx: experiments}. We obtain the same value for \rr PE and robust PE, which numerically confirms Thm.~\ref{thm: transition robust -- reg}. On the other hand, both are strictly smaller than their non-robust, non-regularized counterpart, but as expected, they converge to the standard value function when all ball radii tend to 0 (see Appx.~\ref{apx: experiments}). More importantly, \rr PE converges in $0.02$ seconds, whereas robust PE takes $54.8$ seconds to converge, \ie $2740$ times longer. This complexity gap comes from the minimization problems being solved at each iteration of robust PE, something that \rr PE avoids thanks to regularization. \rr PE still takes $2.5$ times longer than its standard, non-regularized counterpart, because of the additional computation of regularization terms. Table \ref{tab: policy evaluation} shows the time spent by each algorithm until convergence. 
 
We then study the overall MPI process for each approach. We know that in vanilla MPI, the greedy step is achieved by simply searching over deterministic policies \citep{puterman2014markov}. Since we focus our experiments on an $(s,a)$-rectangular uncertainty set, the same applies to robust MPI \citep{wiesemann2013robust} and to \rr MPI, as already mentioned in Rmk.~\ref{rmk: sa rec}. We can see in Table \ref{tab: policy evaluation} that the increased complexity of robust MPI is even more prominent than its PE thread, as robust MPI takes 3953 (resp. $3270$) times longer than \rr MPI when $m=1$ (resp. $m=4$). Robust MPI with $m=4$ is a bit more advantageous than $m=1$, as it needs less iterations (31 versus 67), \ie less optimization solvers to converge. Interestingly, for both $m\in\{1,4\}$, progressing from PE to MPI did not cost much more computing time to either the vanilla or the \rr version: both take less than one second to run.   

\begin{table}[!h]

    \centering
\begin{center}
\begin{tabular}{ |c|c|c|c|} 
 \hline
  & \textbf{Vanilla} & \textbf{\rr} & \textbf{Robust} \\ \hline
 \textbf{PE} & $0.008\pm 0.$ & $0.02\pm 0.$ & $54.8\pm 1.2$ \\ \hline
 \textbf{MPI} ($m=1$)  & $0.01\pm 0.$ & $0.03\pm 0.$ & $118.6\pm 1.3$\\ \hline
 \textbf{MPI} ($m=4$)  & $0.01\pm 0.$ & $0.03\pm 0.$ & $98.1\pm 4.1$\\ \hline
\end{tabular}
\end{center}
% \vspace{-5pt}
\caption{Computing time (in sec.) of planning algorithms using vanilla, \rr and robust approaches. Each cell displays the mean $\pm$ standard deviation obtained from 5 running seeds.}
\label{tab: policy evaluation}
\end{table}

\section{Learning in \rr MDPs}

In general, we do not know the nominal model $(P_0,r_0)$ and can only interact with the underlying system. Therefore, in this section, we are interested in devising a model-free method that (i) achieves a robust optimal policy (ii) has low time complexity. We show that when the uncertainty set is $(s,a)$-rectangular, an \rr $q$-learning scheme provably converges to the optimal robust $q$-value. In the remaining part of this work, we thus assume that $\Uc = \times_{(s,a)\in\X}\Uc_{s,a}$ so there exists a deterministic policy which is \rr optimal. We first formalize \rr $q$-learning and its convergence guarantees in Sec.~\ref{sec: rr q learning}. Then, in Sec.~\ref{sec: grid exp rr q}, we numerically evaluate its properties on a tabular environment. We finally propose to extend \rr $q$-learning to a deep variant in Sec.~\ref{sec: scale batch norm}. In particular, we introduce an easy method for estimating the norm of the \rr value regularizer in non-tabular settings. The source code for \rr $q$-learning and its deep extension is available at \url{https://github.com/yevgm/r2rl}.   

\subsection{\rr $q$-learning}
\label{sec: rr q learning}

\rr $q$-learning is an \rr variant of vanilla $q$-learning \citep{watkins1992q} aiming to learn a robust optimal policy. Its pseudo-code can be found in Alg.~\ref{algo: r2 q tabular}. The only variation with standard $q$-learning is that we update an \rr temporal difference (TD) to target an \rr Bellman recursion. On the other hand, differently than robust $q$-learning \citep{roy2017reinforcement}, \rr $q$-learning does not involve any optimization problem in updating \rr TDs. Indeed, robust $q$-learning requires computing the support function of the current value, which demands a linear program solver. Therefore, the additional time complexity is of $\mathcal{O}(\abs{\St}^3)$ at least, \ie the time required for multiplying square matrices using standard methods \citep{cohen2019solving}. Instead, \rr $q$-learning computes the norm of the current value, thus involving $\mathcal{O}(\abs{\St})$ additional operations at most compared to vanilla $q$-learning.

\begin{algorithm}[H]
   \caption{\rr $q$-learning}
   \label{algo: r2 q tabular}
\begin{algorithmic}
   \STATE {\bfseries Input:} Uncertainty levels $\alpha^P,\alpha^r\in\R_+^{\X}$; 
   Learning rates $(\beta_t)_{t\in\N}$ with $\beta_t\in [0,1]^{\X}$;\\
   \STATE{\bfseries Initialize:} $t =0$; $q=q_0$ - Arbitrary $q$-function;\\
   \REPEAT
   \STATE  Act $\epsilon$-greedily according to $a_t \leftarrow \argmax_{b\in\A}q_t(s_t,b)$,
observe $s_{t+1}$ and obtain $r_{t}$ \\
\STATE Set $v_t= \max_{b\in\A}q_t(\cdot,b)$
    \STATE Set
    $\delta_{t}^{\rop}=r_{t} + \gamma \max_{b\in\A}q_t(s_{t+1},b) - \alpha_{s_t a_t}^r -\gamma\alpha_{s_t a_t}^P\norm{v_t}  - q_t(s_t,a_t)$\\
    \STATE Update
    $q_{t+1}(s_t,a_t) = q_t(s_t,a_t) + \beta_{t}(s_t,a_t)\delta_t^{\rop}$\\
   \UNTIL{convergence}\\
   \STATE {\bfseries Return: } \rr value $q$
\end{algorithmic}
\end{algorithm}

We set the convergence of \rr $q$-learning below. To prove it, we first construct an \rr Bellman operator over state-action values, then establish the robust $q$-function as its fixed point. The rest of the proof relies on stochastic approximation theory \citep{jaakkola1993convergence} -- see Appx.~\ref{apx: proof rr q}. \rr Bellman operators for \rr $q$-values are similarly defined in \citep[Appx. 12]{kumar2022efficient}. However, the robust $q$-value is established as a fixed point without proof there, whereas we show a formal equivalence in Appx.~\ref{apx: rr q is robust q}. Moreover, the $q$-learning algorithm derived in \citep{kumar2022efficient} is essentially a value iteration method that relies on a known nominal model. Here, \rr $q$-learning is model-free and guaranteed to converge to the robust $q$-function, as stated below.

\begin{theorem}[Convergence of \rr $q$-learning]
For any $(s,a)\in\X$, let a sequence of step-sizes $(\beta_t(s,a))_{t\in\N}$ satisfying $0\leq\beta_t(s,a)\leq 1$, $\sum_{t\in\N}\beta_t(s,a) = \infty$ and $\sum_{t\in\N}\beta_t^2(s,a)< \infty$. Then, the \rr $q$-learning algorithm as given in Alg.~\ref{algo: r2 q tabular} converges almost surely to the optimal robust $q$-function. 
\end{theorem}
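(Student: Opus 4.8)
The plan is to cast \rr $q$-learning as a stochastic approximation scheme and invoke the classical convergence result of \citet{jaakkola1993convergence} for iterates of the form $q_{t+1}(s,a) = (1-\beta_t(s,a))q_t(s,a) + \beta_t(s,a)\bigl((Hq_t)(s,a) + w_t(s,a)\bigr)$, where $H$ is a contraction in the supremum norm with fixed point equal to the robust optimal $q$-function and $w_t$ is a zero-mean noise term with conditionally bounded variance. First I would define, for $(s,a)$-rectangular ball-constrained uncertainty, the \rr Bellman optimality operator on $q$-values,
\begin{align*}
    (T^{*,\rop}_q q)(s,a) := r_0(s,a) + \gamma\innorm{P_0(\cdot|s,a), \max_{b\in\A}q(\cdot,b)} - \alpha^r_{s,a} - \gamma\alpha^P_{s,a}\norm{\max_{b\in\A}q(\cdot,b)},
\end{align*}
which is exactly the $q$-space analogue of $T^{*,\rop}$ from Sec.~\ref{sec: rr mdps}, composed with the greedy map $v\mapsto q$. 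I would then show $T^{*,\rop}_q$ is a $\gamma$-contraction in $\norm{\cdot}_\infty$: the first two terms are the usual $\gamma$-contracting part, and the last term is handled by the reverse triangle inequality $\bigl|\norm{\max_b q_1(\cdot,b)} - \norm{\max_b q_2(\cdot,b)}\bigr| \leq \norm{\max_b q_1(\cdot,b) - \max_b q_2(\cdot,b)}_\infty$-type bounds, after converting the $\ell_2$-norm term to a factor $\leq \sqrt{\abs{\St}}$ times the sup-norm and absorbing it — here I would lean on Asm.~\ref{asm: bound alpha} (via $\epsilon_*$) exactly as in the proof of Prop.~\ref{prop: r2 bellman evaluation operator}(iii) so that the effective contraction factor $1-\epsilon_*<1$. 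Its unique fixed point is $q^{*,\rop}$, which by the equivalence established in Appx.~\ref{apx: rr q is robust q} (and Thm.~\ref{thm: transition robust -- reg}, Cor.~\ref{cor: transition robust mdp -- reg}) coincides with the robust optimal $q$-function $q^{*,\Uc}$.

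Next I would identify the one-step update of Alg.~\ref{algo: r2 q tabular} with the stochastic approximation template. Writing $F_t(s,a) := r_t + \gamma\max_b q_t(s_{t+1},b) - \alpha^r_{s,a} - \gamma\alpha^P_{s,a}\norm{v_t}$, so that $q_{t+1}(s_t,a_t) = (1-\beta_t(s_t,a_t))q_t(s_t,a_t) + \beta_t(s_t,a_t)F_t(s_t,a_t)$, I would compute $\mathbb{E}[F_t(s,a)\mid \mathcal{F}_t] = (T^{*,\rop}_q q_t)(s,a)$, since $r_t$ and $s_{t+1}$ are drawn from the nominal model $(P_0,r_0)$ and the regularization terms $\alpha^r_{s,a}, \gamma\alpha^P_{s,a}\norm{v_t}$ are $\mathcal{F}_t$-measurable. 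The noise $w_t(s,a) := F_t(s,a) - (T^{*,\rop}_q q_t)(s,a)$ is then a martingale difference, and its conditional variance is bounded by $C(1 + \norm{q_t}_\infty^2)$ because the reward is bounded and $\max_b q_t(s_{t+1},b)$ enters linearly — the standard hypothesis of \citet{jaakkola1993convergence}.

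Finally I would check the remaining hypotheses: the step-size conditions $\sum_t \beta_t(s,a)=\infty$, $\sum_t \beta_t^2(s,a)<\infty$ are assumed in the statement; and $\epsilon$-greedy exploration guarantees every pair $(s,a)$ is visited infinitely often, so the per-coordinate step sizes are genuinely summable-to-infinity along the update times. Assembling these, \citet[Thm.~1]{jaakkola1993convergence} yields $q_t \to q^{*,\rop} = q^{*,\Uc}$ almost surely. The main obstacle I anticipate is the contraction argument for $T^{*,\rop}_q$: the value-regularization norm term couples all states, so $\gamma$-contraction in $\norm{\cdot}_\infty$ is not automatic and genuinely needs the bounded-radius assumption; I would either import Prop.~\ref{prop: r2 bellman evaluation operator}(iii) wholesale by noting $T^{*,\rop}_q$ is conjugate to $T^{*,\rop}$ through the affine maps $v\leftrightarrow q$, or redo the estimate directly on $q$-space. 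A secondary subtlety is making sure the identification $q^{*,\rop}=q^{*,\Uc}$ under mere $(s,a)$-rectangularity is clean, which is where the cross-reference to Appx.~\ref{apx: rr q is robust q} carries the weight.
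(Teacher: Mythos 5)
Your proposal follows essentially the same route as the paper's proof: cast Alg.~\ref{algo: r2 q tabular} as a stochastic approximation of the form handled by \citet{jaakkola1993convergence}, define the \rr Bellman optimality operator on $q$-values, show that the conditional expectation of the TD target equals $T^{*,\rop}q_t$ so the drift contracts with factor $1-\epsilon_*$ (via Asm.~\ref{asm: bound alpha}, not $\gamma$ alone), and identify the fixed point with the robust optimal $q$-function through the equivalence of Appx.~\ref{apx: rr q is robust q}. If anything, you are slightly more explicit than the paper about the martingale-difference noise, its conditional variance bound, and the infinite-visitation requirement, all of which the paper leaves implicit.
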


The assumption $\sum_{t\in\N}\beta_t(s,a) = \infty, \forall (s,a)\in\X,$ implicitly requires that we must visit all state-action pairs infinitely often, which is a standard conjecture for convergence proofs \citep{regehr2021elementary}. Also, we emphasize that even in the $(s,a)$-rectangular case, the $q$-function $q^{\pi, \rop}$ from Def.~\ref{def: rr value fcs} associated with $v^{\pi,\rop}$ is generally not the same as the robust $q$-function $q^{\pi, \Uc}$ obtained from the robust Bellman operators on state-action values (see proof in Appx.~\ref{apx: rr q vs robust q}). This is similar to the regularized MDP setting where in general, the regularized value $v^{\pi,\Omega}$ does not equal $q^{\pi,\Omega}\cdot\pi$ or equivalently, $q^{\pi,\Omega}\cdot\pi$ does not necessarily correspond to the fixed point of a regularized Bellman operator \citep{geist2019theory}. Instead, the regularized action-value function is solely defined from the regularized value, itself being a fixed point of the regularized Bellman operator. The same phenomenon arises in the robust setting under $s$-rectangular uncertainty, and in the \rr case regardless of rectangularity. 

% \begin{figure}[h]
% 	\centering
% 	\label{fig:grid cv and speed}
% 	\subfigure[]{\label{subfig: grid cv plot}
% 	\includegraphics[scale=.27]{}}\hspace{-1.7cm}
% 	\subfigure[]{\label{subfig: grid cv speed}
% 	\includegraphics[scale=.27]{}}
% 	% \subfigure[]{\label{subfig: grid cv speed zoom}
% 	% \includegraphics[scale=.15]{}}
% 	\caption{Convergence plots for Mars Rover. (a) Cumulative reward w.r.t. the number of iteration steps, averaged over 10 seeds. For \rr and robust $q$-learning, $\alpha_p = \alpha_r = 0.01$. (b) Cumulative reward w.r.t. time process in seconds. Performance peaks appear because data are sometimes logged in the middle of an episode, so the agent has accumulated negative rewards. } 
% \end{figure}

\begin{figure}
\begin{minipage}{.5\linewidth}
\centering
\subfloat[]{\label{subfig: grid cv plot}\includegraphics[scale=.27]{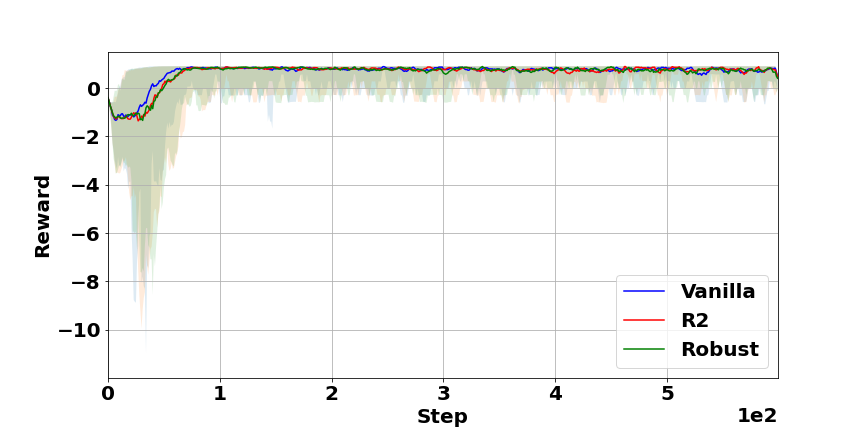}}
\end{minipage}%
\begin{minipage}{.5\linewidth}
\centering
\subfloat[]{\label{subfig: grid cv speed}\includegraphics[scale=.27]{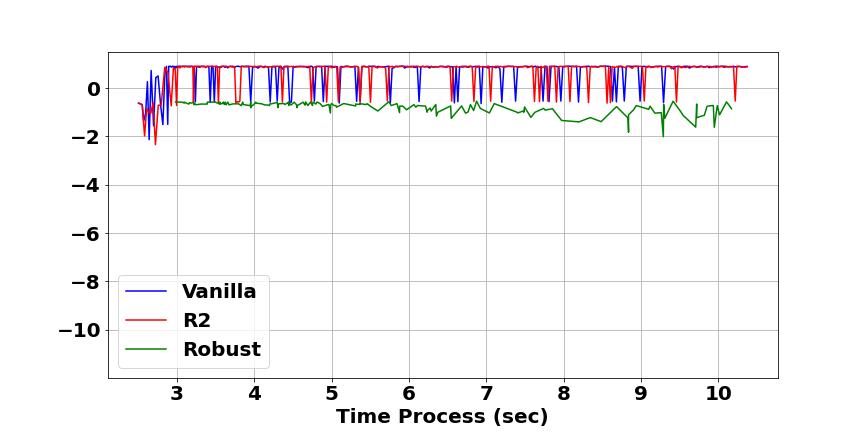}}
\end{minipage}
\caption{Convergence plots for Mars Rover. (a) Cumulative reward w.r.t. the number of iteration steps, averaged over 10 seeds. For \rr and robust $q$-learning, $\alpha_p = \alpha_r = 0.01$. (b) Cumulative reward w.r.t. time process in seconds. Performance peaks appear because data are sometimes logged in the middle of an episode, so the agent has accumulated negative rewards.}
\label{fig:grid cv and speed}
\end{figure}

\subsection{Learning on a grid}
\label{sec: grid exp rr q}
As proof of concept, we perform experiments in a tabular environment. Here, our goals are the following: (i) numerically illustrate the convergence of \rr $q$-learning; (ii) highlight its computational advantage over robust $q$-learning concurrently with its robustness properties.

We consider a Mars Rover domain as in \citep{tessler2018reward}. The objective is to find the shortest path to a goal state in a $10\times 10$ grid. However, taking a shorter path implies higher risk: the rover has a greater chance to hit a rocket and get a negative reward. The transition function is stochastic: the agent moves to the chosen direction with probability $1-\epsilon$, and randomly otherwise. At each step, it receives a small penalty $r_{\text{step}}$. An episode terminates whenever the rover reaches the goal state or hits a rock. The two scenarios yield a reward of $r_{\text{success}}$ and $r_{\text{fail}}$ respectively. We thus have $r_{\text{success}} > 0 > r_{\text{step}} > r_{\text{fail}}$. We compare our \rr $q$-learning algorithm with two baselines: \emph{vanilla} and \emph{robust} $q$-learning. Vanilla is the standard method that ignores model uncertainty and assumes the reward and dynamics are fixed. Robust $q$-learning trains a robust optimal policy using robust Bellman updates as in Eq.~\eqref{eq: robust bellman operators}, thus requiring solving an optimization problem at each iteration \citep{roy2017reinforcement}. 

Fig.~\ref{subfig: grid cv plot} shows the convergence plot across iteration steps for the three agents: vanilla, robust and \rrm. All of them have similar sample complexity and fulfill the task within 100 iteration steps. The difference between them arises when we look at the time complexity of each algorithm. As we can see in Fig.~\ref{subfig: grid cv speed}, robust $q$-learning takes more than 2 minutes to converge, whereas vanilla and \rr $q$-learning achieve the highest reward within 4 seconds (see also Fig.~\ref{fig: grid cv speed zoom} in Appx.~\ref{apx: additional figures}). Similarly, we calculated the average time necessary to perform one learning step in each algorithm: one \rr update takes $7.7\pm 5.9 \times 10^{-6}$ seconds to run, which is slightly slower than vanilla with $1.24\pm 0.89\times 10^{-6}$ seconds. On the other hand, a robust $q$-update takes $3 \pm 0.9 \times10^{-2}$ seconds, thus representing $10^4$ higher cost than the other two approaches. This highlights the clear advantage of \rr over robust $q$-learning in terms of computational cost. To check robustness, after training, we evaluate each policy under varying dynamics. In particular, we increase the value of $\epsilon$ to make the environment more adversarial. Fig.~\ref{fig: grid eval} shows that the \rr policy performs similarly to the robust one under more adversarial transitions \ie when $\epsilon$ tends to 1, both being less sensitive than vanilla.

\subsection{Scaling \rr $q$-learning}
\label{sec: scale batch norm}
The current expression of \rr TD (line 6 of Alg.~\ref{algo: r2 q tabular}) suggests that we have access to the whole $q$-table for computing the current value's norm. This no longer applies when the state space is infinite or even continuous. Instead, we need to estimate the norm based on sampled observations. We thus keep track of a replay buffer that memorizes and updates past information online. At each iteration, we randomly extract a batch $\mathcal{B}_t$
% $\mathcal{B}:=\{s_i^{j_i},a_i^{j_i},\}$ 
of samples thanks to which we derive an empirical estimate of the norm. Formally, $\norm{v_t}_{\mathcal{B}_t}^2 := \sum_{s\in\mathcal{B}_t}v_t(s)^2$, 
where the index in $\norm{\cdot}_{\mathcal{B}_t}$ indicates the empirical nature of the norm. Finally, our approximate setting motivates us to stabilize value norm estimates. Thus, in the same spirit as \citep{kakade2002approximately, vieillard2020deep}, we use a moving average mixing the previous estimate with the current one, \ie at iteration $t+1$, the value norm squared is given by $\beta\norm{v_t}_{\mathcal{B}_t}^2 + (1-\beta)\norm{v_{t+1}}_{\mathcal{B}_{t+1}}^2$.  
To evaluate the reliability of our norm estimate, we compared it to an oracle. Practically, in Mars Rover, we stored the tabular value function obtained after convergence of \rr $q$-learning, then measured how the norm estimated from batches evolves across iteration steps.\footnote{To avoid state duplicates in the norm expression (which may apply when \rr $q$-learning starts converging), we further re-normalized each state by its number of occurrences: $\norm{v_t}_{\mathcal{B}_t} = \sum_{s\in\mathcal{B}_t}\frac{n_s}{\abs*{\mathcal{B}_t}}v_t(s)^2$, where $n_s$ is the number of instances of state $s\in\St$ in the batch.} In Fig.~\ref{fig:betas grid}, we see that decreasing $\beta$ improves stability at the expense of convergence speed. This trade-off similarly occurs in \citep{vieillard2020deep}, although in the different context of conservative policy iteration. We could also think of a time-adaptive parameter $\beta$ to take the best of both worlds, but leave this for future work.

\begin{figure}
\begin{minipage}{.5\linewidth}
\centering
\subfloat[]{\label{fig: grid eval}\includegraphics[scale=.27]{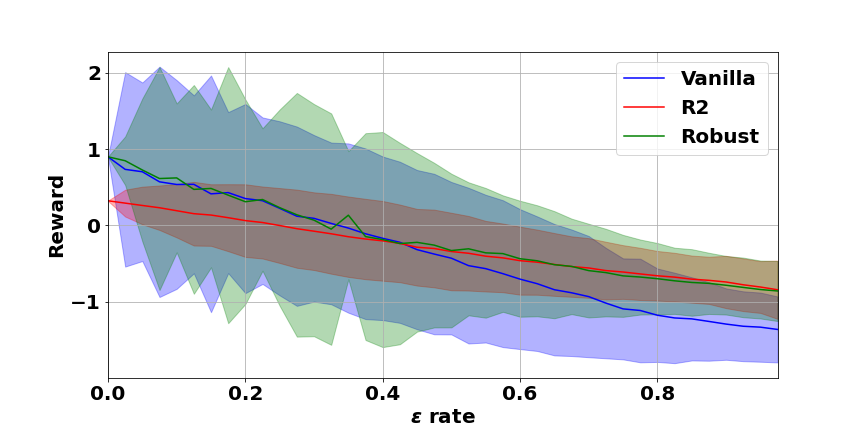}}
\end{minipage}%
\begin{minipage}{.5\linewidth}
\centering
\subfloat[]{\label{fig:betas grid}\includegraphics[scale=.27]{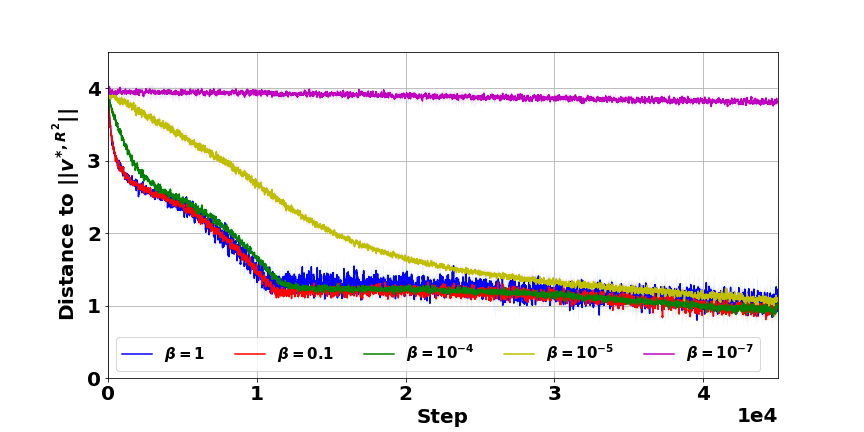}}
\end{minipage}
\caption{Mars Rover: (a) Evaluation of $q$-learning over new transition models. Each algorithm was trained over 10 seeds on nominal $\epsilon=0$. (b) Comparison of different $\beta$-values for moving average. Each $\beta$-value is run over 5 seeds (these are the same for the exact and the estimated case).}
\label{fig: marsrover eval beta}
\end{figure}
 
\section{Deep \rr Learning}
\label{sec: deep section}

\begin{figure}
\hspace*{-.3cm}
\begin{minipage}{.33\linewidth}
\centering
\subfloat{\includegraphics[scale=.18]{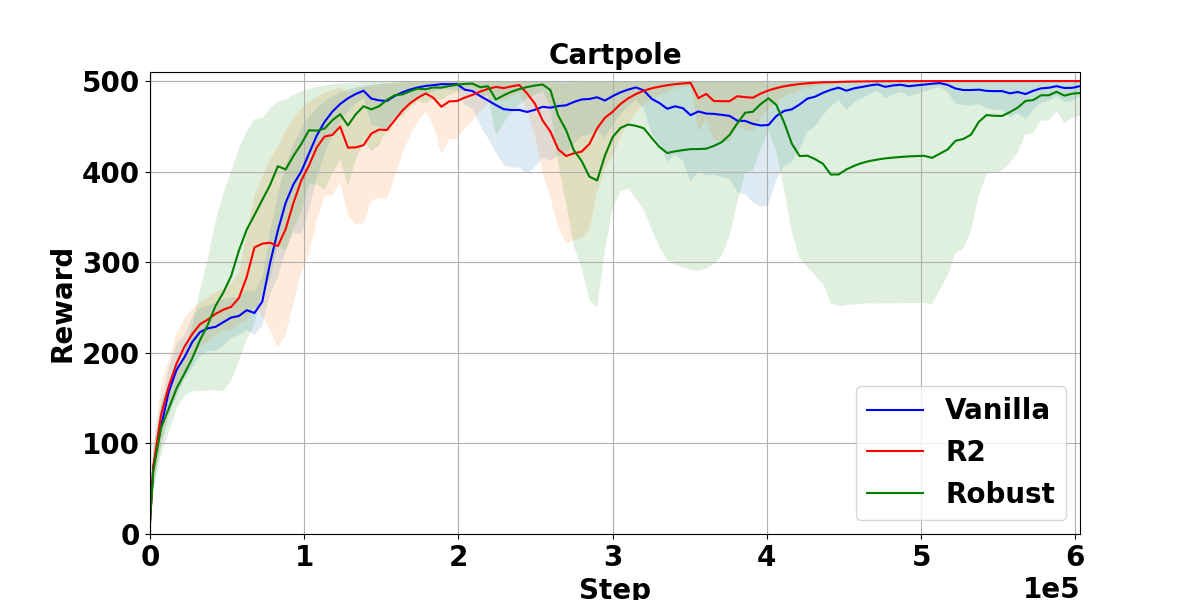}}
\end{minipage}%
\hspace*{-.2cm}
\begin{minipage}{.33\linewidth}
\centering
\subfloat{\includegraphics[scale=.18]{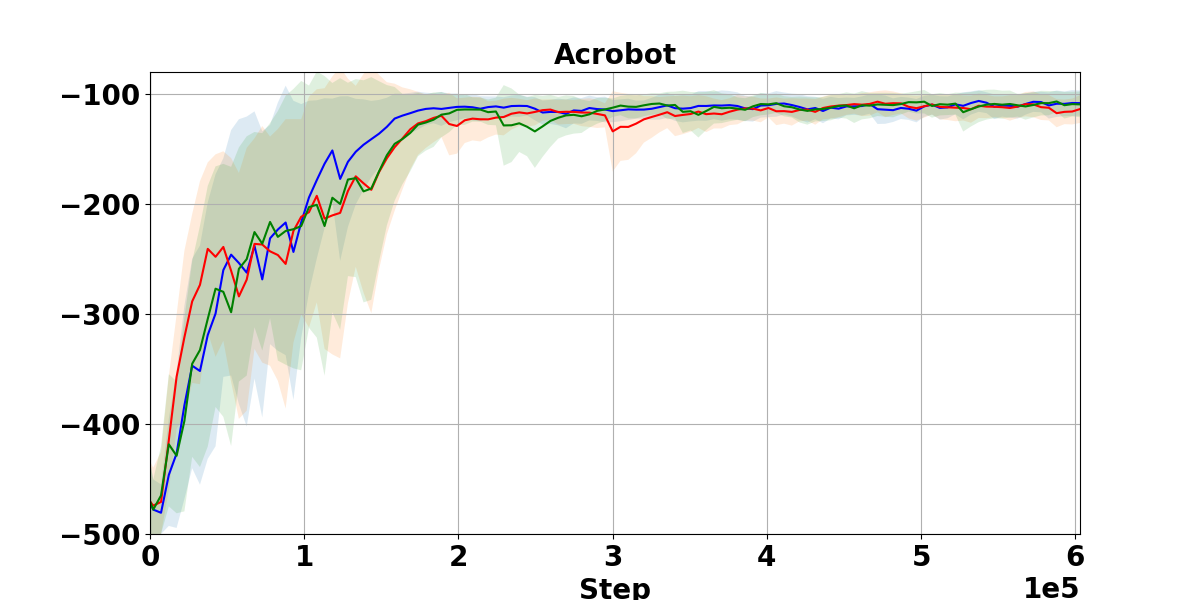}}
\end{minipage}
\begin{minipage}{.33\linewidth}
\centering
\subfloat{\includegraphics[scale=.18]{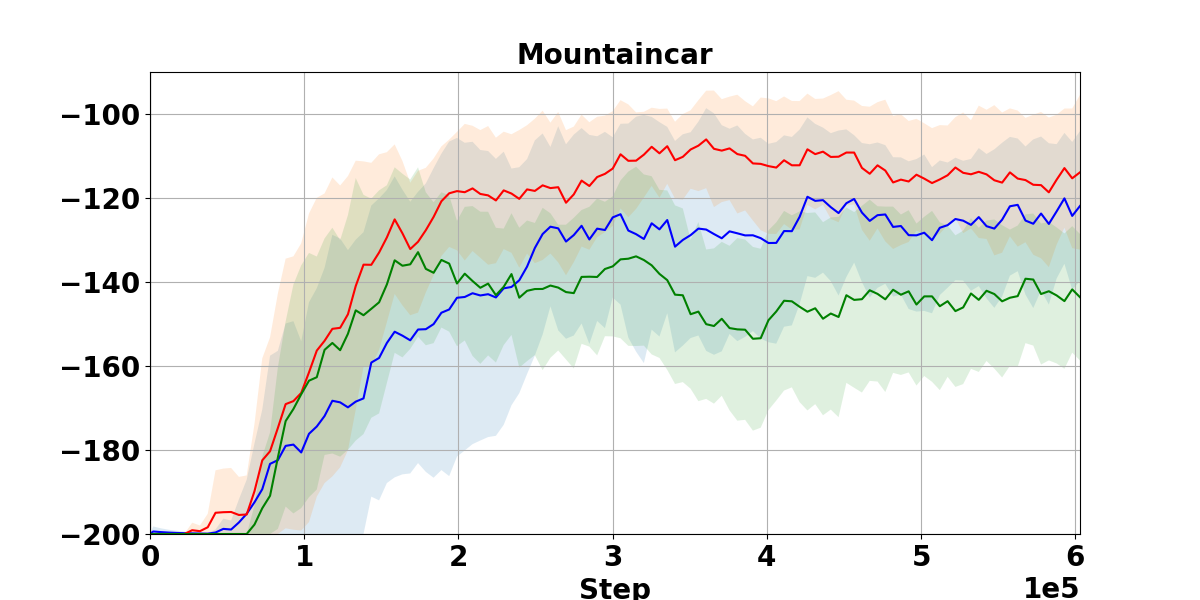}}
\end{minipage}
\caption{Convergence graphs of vanilla, \rr and robust DDQN algorithms. Each graph displays the mean $\pm$ standard deviation obtained from 5 running seeds in each environment. The graphs were smoothed with an exponential moving average.}
\label{fig:deep-score}
\end{figure}
 
We are now able to scale tabular \rr $q$-learning to a deep variant we name \rr double DQN (DDQN) and compare it to vanilla and robust baselines. \rr DDQN (resp. robust DDQN) is similar to DDQN \citep{ddqn}, except that it minimizes an \rr TD (resp. robust TD) when updating the $q$-network $q_{\theta}$. Defining the loss as:
\begin{align*}
    l_{\mathcal{B}_t}(\theta_t) = \frac{1}{\abs{\mathcal{B}_t}}\sum_{(s_j,a_j,s'_j,r_j)\in\mathcal{B}_t}\left(y_j^{\text{baseline}} - q_{\theta_t}(s_j, a_j)\right)^2, 
\end{align*}
the \rr target variable is:
\begin{align}
\label{eq: r2 target}
    y_{j}^{\rop}=r_j - \alpha^{r}+\gamma q_{\theta_{t-1}}\left(s'_j,\argmax_{b\in\A}q_{\theta'_{t-1}}(s_j',b)\right) 
    -\gamma\alpha^{P}\norm*{q_{\theta_{t-1}}\left(\cdot,\argmax_{b\in\A}q_{\theta'_{t-1}}(\cdot,b)\right)}_{\mathcal{B}_t}
\end{align}
and the robust target variable is:
\begin{align*}
% \label{eq: robust target}
y_{t}^{\text{robust}}=&r_j-\sigma_{\mathcal{R}\left(s_{j},a_{j}\right)}(-1)+\gamma q_{\theta_{t-1}}(s_j',\argmax_{b\in\A}q_{\theta'_{t-1}}(s'_j,b))\\
&-\gamma\sigma_{\mathcal{P}\left(s_j,a_j\right)}\left(-q_{\theta_{t-1}}\left(\cdot,\argmax_{b\in\A}q_{\theta'_{t-1}}(\cdot,b)\right)\right),
\end{align*}
where $\theta$ and $\theta'$ denote the weights of the $q$-network and the target $q$-network respectively. Recall that we follow the method described in Sec.~\ref{sec: scale batch norm} to estimate the norm in Eq.~\eqref{eq: r2 target}.
For the three algorithmic variants, \ie DDQN \citep{ddqn}, robust DDQN \citep{roy2017reinforcement, shashua2017deep, derman2018soft} and \rr DDQN, we use a fully connected $q$-network with an input size corresponding to the dimension of the state space, 2 hidden layers of size 256, and an output size corresponding to the dimension of the action space. 

We selected three physical environments from OpenAI Gym: Cartpole, Acrobot, and Mountaincar \citep{brockman2016openai}. In Cartpole, the longer the episode, the higher the cumulative reward: at each step, the agent receives a reward of 1 if it maintains a small inclination angle and stays close to the starting x-position, and 0 otherwise. Oppositely, in Acrobot and MountainCar, the longer the episode, the lower the cumulative reward: the agent incurs a negative reward of -1 if it did not reach the goal area, and 0 otherwise--in which case the episode terminates. In each environment, the underlying transition model is directly affected by the physical properties assigned to the agent. Therefore, changing the environment properties implicitly introduces transition uncertainty into the MDP. 

We train the three agents on one nominal environment and five different seeds. For a fair comparison, each seed set is taken to be the same for vanilla, robust and \rr DDQN. Robust and \rr DDQN are trained under the same uncertainty level, namely, $\alpha^P=\alpha^r=10^{-4}$. Fig.~\ref{fig:deep-score} shows that all three agents converge to similar performance, except in Mountaincar where \rr DDQN outperforms both vanilla and robust DDQN. 

\begin{table}[htp] 
\centering
\begin{tabular}{|l|l|l|l|l}
\hline
\textbf{Environment} & \textbf{Vanilla} & \textbf{Robust} & \textbf{R2} \\
\hline\hline
Cartpole      & $2.5 \pm 0.1$ & $76.9 \pm 15.3$ & $8.3 \pm 1.0$\\
Acrobot       & $2.3 \pm 0.1$ & $73.0 \pm 15.3$ & $8.1 \pm 0.2$\\
Mountaincar   & $2.5 \pm 0.8$ & $77.6 \pm 16.0$ & $8.2 \pm 0.5$\\
\hline
\end{tabular}
\caption{Average computing time (in $0.1\times$ms) of a learning step for vanilla DDQN, robust DDQN and \rr DDQN. Each cell displays the mean $\pm$ standard deviation obtained from 1000 iterations, all run on a GeForce GTX 1080 Ti GPU machine.}
\label{tbl:deep-runtime-comp}
\end{table}

\begin{figure}[!!h]
\centering
% \tiny
   \subfloat[Evaluation on Cartpole: Left - 'mass pole', right - 'pole length' parameter.\label{tbl:cartpole-eval}]{
    \centering % not center
    \resizebox{\textwidth}{!}{%
      \renewcommand{\arraystretch}{0}%
      \begin{tabular}{@{}c@{\hspace{1pt}}c@{}}
      \includegraphics[scale=.3]{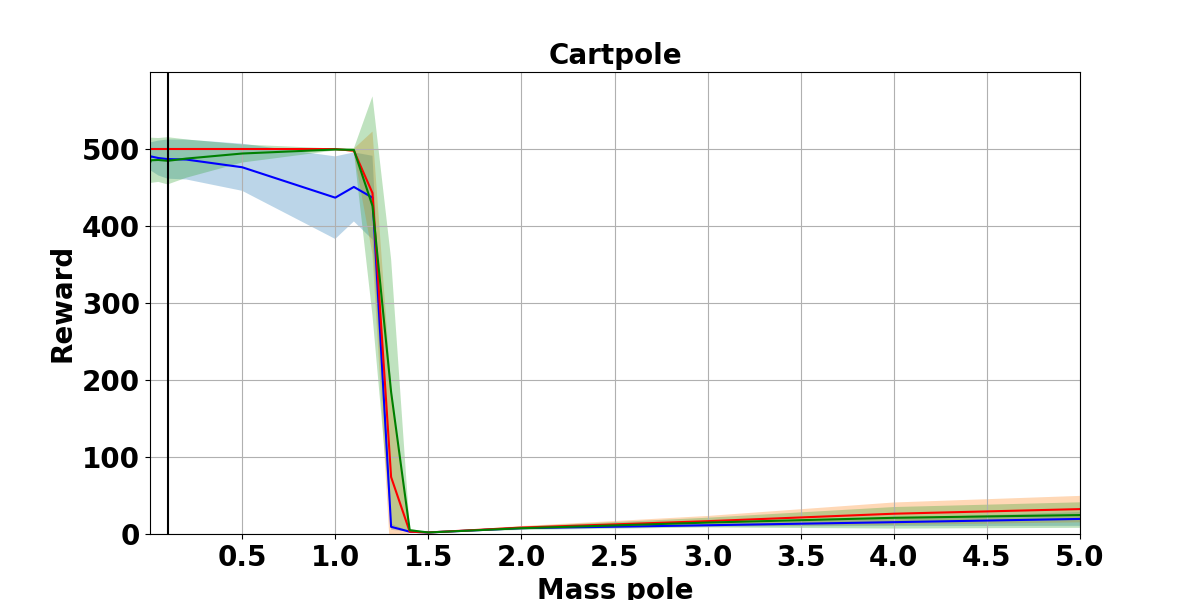} 
      &
      \includegraphics[scale=.3]{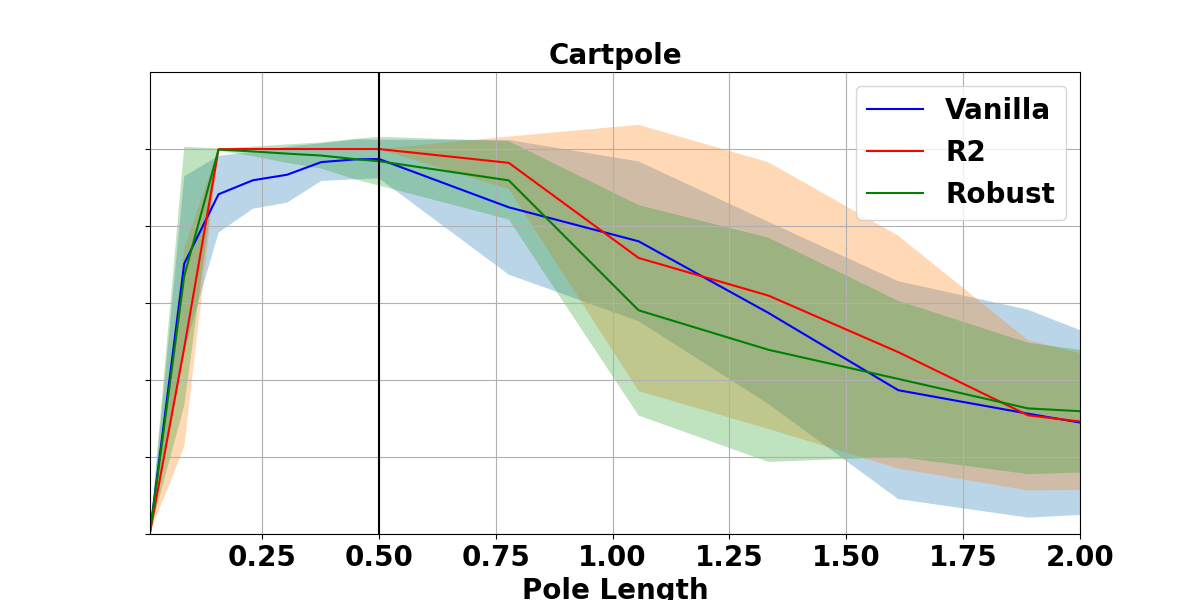} 
      \end{tabular}
    }
   }\hfil
   \subfloat[Evaluation on Acrobot: Left - 'link com pos 2', right - 'link mass 2' parameter.\label{tbl:acrobot-nominal}]{
    \centering % not center
    
    \resizebox{\textwidth}{!}{%
      \renewcommand{\arraystretch}{0}%
      \begin{tabular}{@{}c@{\hspace{1pt}}c@{}}
      \includegraphics[scale=.3]{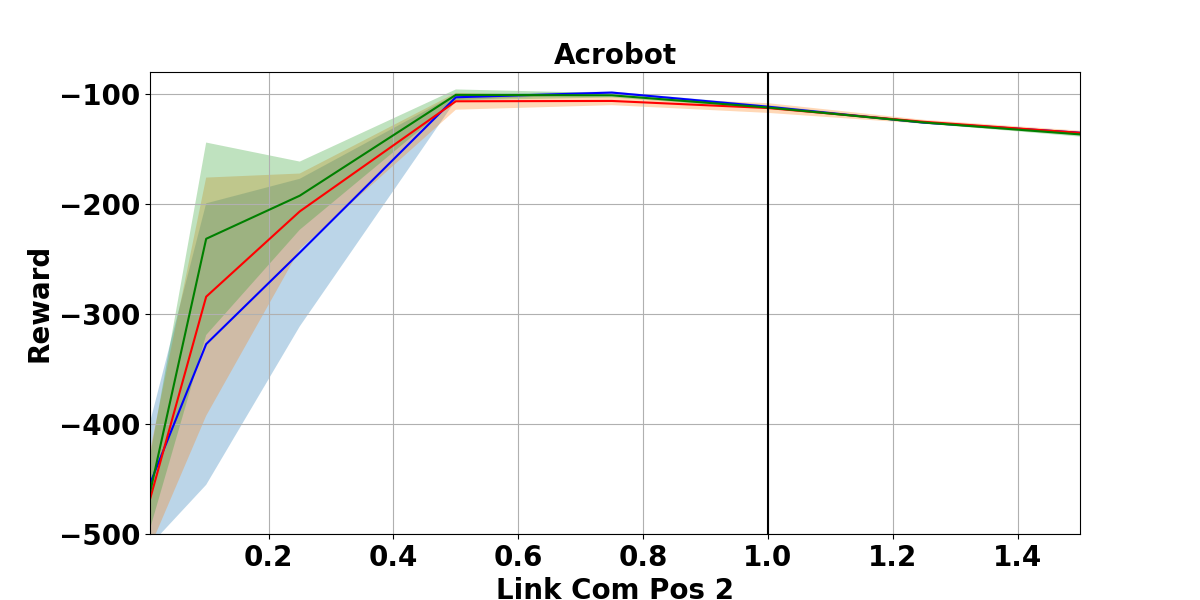} &
      \includegraphics[scale=.3]{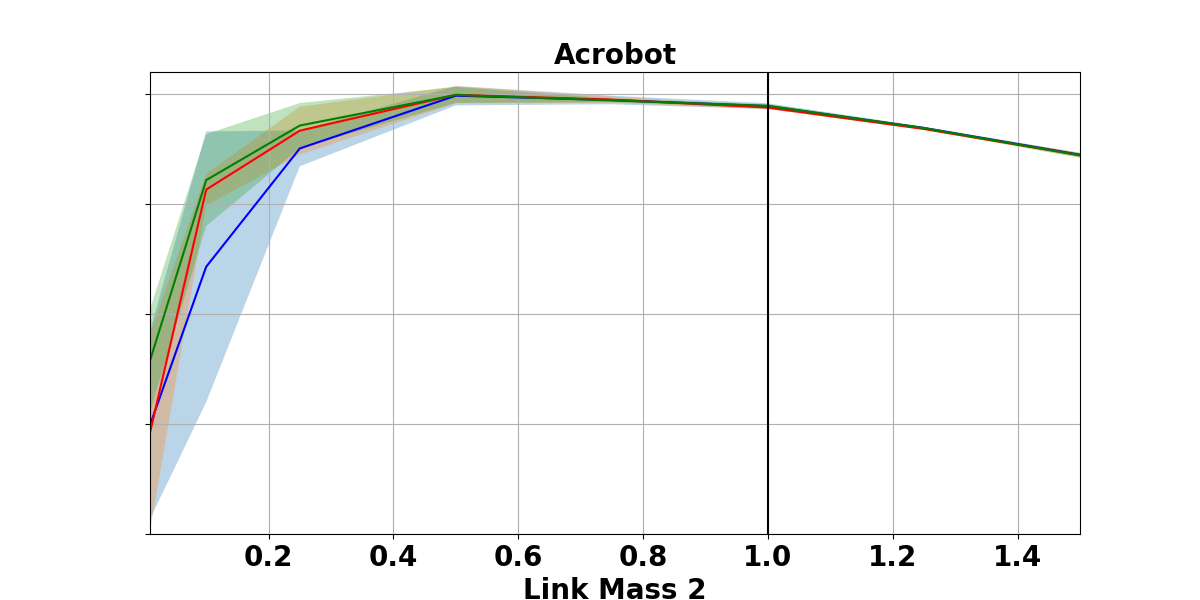}
      \end{tabular}
    }
   }\hfil
   \subfloat[Evaluation on Mountaincar: Left - 'gravity', right - 'force' parameter.\label{tbl:mountaincar-nominal}]{
    \centering % not center
    \resizebox{\textwidth}{!}{%
      \renewcommand{\arraystretch}{0}%
      \begin{tabular}{@{}c@{\hspace{1pt}}c@{}}
      \includegraphics[scale=.3]{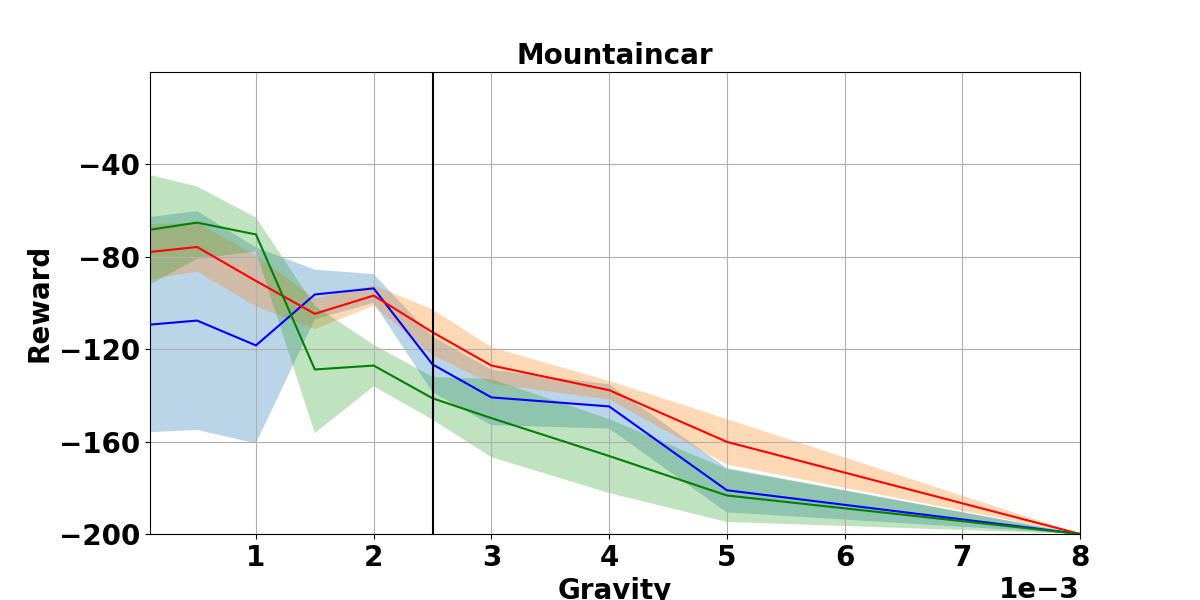} &
      \includegraphics[scale=.3]{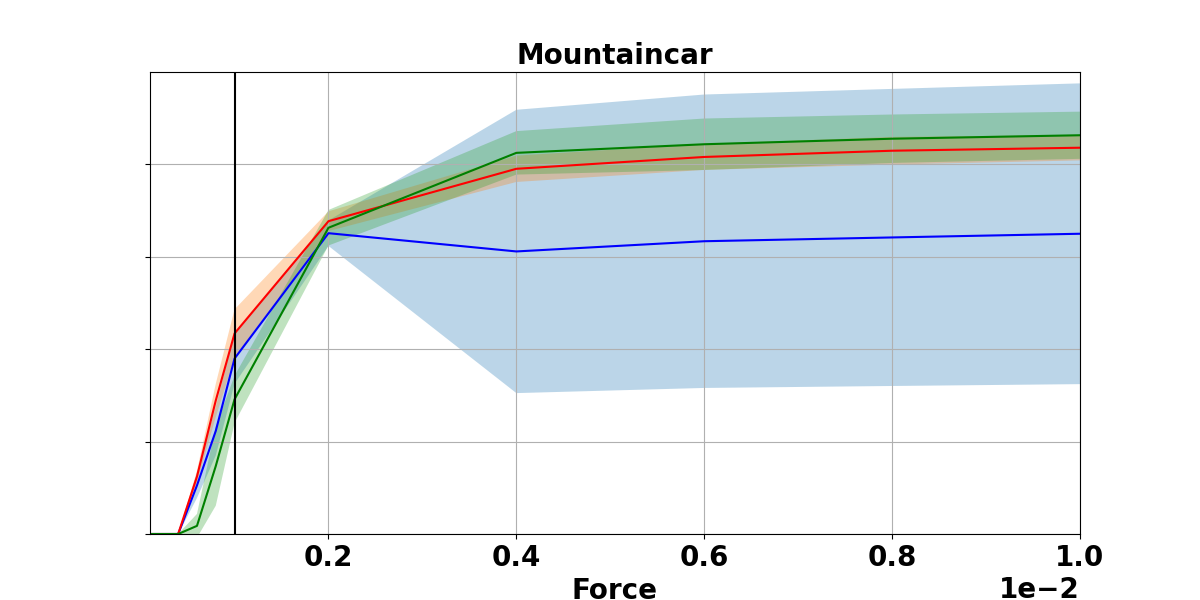} 
      \end{tabular}
    }
   }
   \caption{Comparison of the average reward over 5 seeds of Vanilla, \rr and Robust algorithms in all three environments. The black vertical line represents the nominal parameter value each algorithm was trained on.}\label{fig:robustness graphs}
\end{figure}

To check the computational advantage of \rr DDQN over robust DDQN, we calculate the average time each algorithm takes to perform one update of the $q$-network. As we see in Tab.~\ref{tbl:deep-runtime-comp}, one learning step of robust DDQN is slower than one \rr update by an order of magnitude. This is expected, as robust DDQN must solve an optimization problem at each step. On the other hand, one \rr update is approximately four times slower than vanilla because of the additional computations it requires. This confirms the results we obtained previously for \rr MPI and \rr $q$-learning: robust updates take much longer than \rr updates, themselves being slightly slower than standard, non-robust updates.

We finally aim to check the robustness of each algorithm to unseen or changing dynamics. After training, we select two environment parameters across a range of values and evaluate the average performance over several episodes run under the corresponding dynamics. We repeat the same procedure for all seeds and environments. Fig.~\ref{fig:robustness graphs} displays the performance obtained by each agent undergoing such treatment. In all environments, \rr achieves similar performance as robust DDQN. Both exhibit more robust behavior than vanilla DDQN except for mass pole changes in Cartpole, where the three algorithms demonstrate similar results. We also notice that in Mountaincar, \rr is much more robust to changing gravity than the other two agents. To additionally evaluate their robustness to shock, we conduct the following experiment: each agent starts the task under the same nominal parameters as those it has been trained on but at time step $t=20$, the dynamics abruptly change and remain in this tweaked environment until the episode ends. As we can see in Fig.~\ref{fig:shock plot}, \rr and robust DDQN achieve higher performance than vanilla DDQN as the change becomes more abrupt or equivalently, as we look further away from the cell corresponding to the nominal parameters.

 \begin{figure}[!!h] 
\centering
   \subfloat[Cartpole with abrupt changes of 'mass pole' and 'pole length' \label{tbl:cartpole-shock}]{
    \centering % not center
    \resizebox{\textwidth}{!}{%
      \renewcommand{\arraystretch}{0}%
      \begin{tabular}{@{}c@{\hspace{1pt}}c@{}}
      \includegraphics[width=1\linewidth]{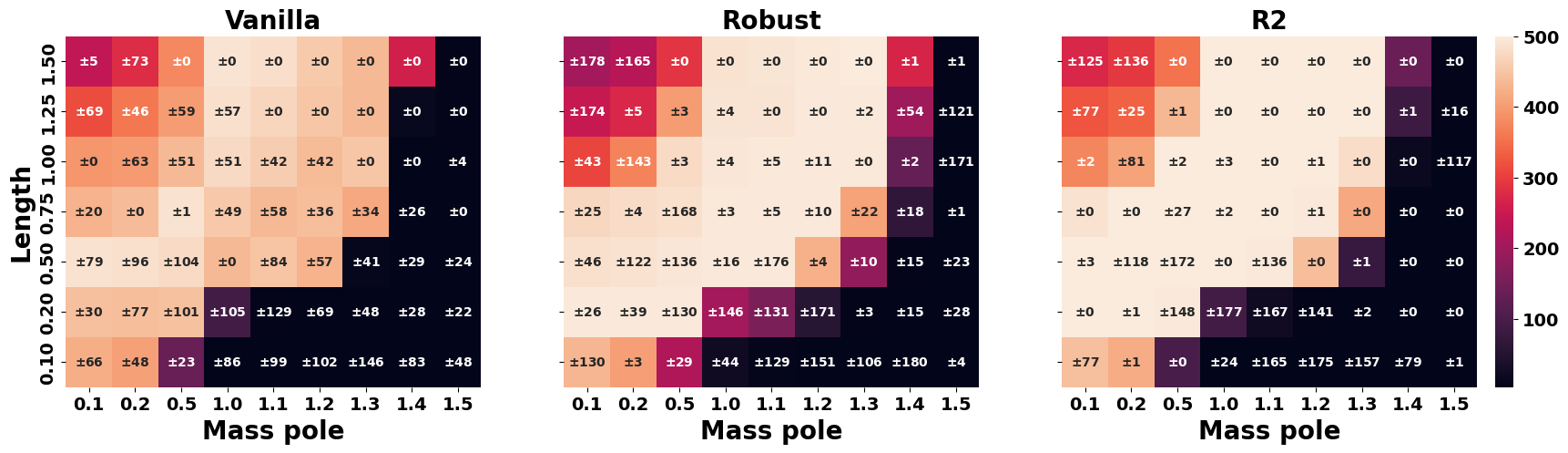} 
      \end{tabular}
    }
   }\hfil
   \subfloat[Acrobot with abrupt changes of 'link com pos 2' and 'link mass 2' \label{tbl:acrobot-shock}]{
    \centering % not center
    \resizebox{\textwidth}{!}{%
      \renewcommand{\arraystretch}{0}%
      \begin{tabular}{@{}c@{\hspace{1pt}}c@{}}
      \includegraphics[width=1\linewidth]{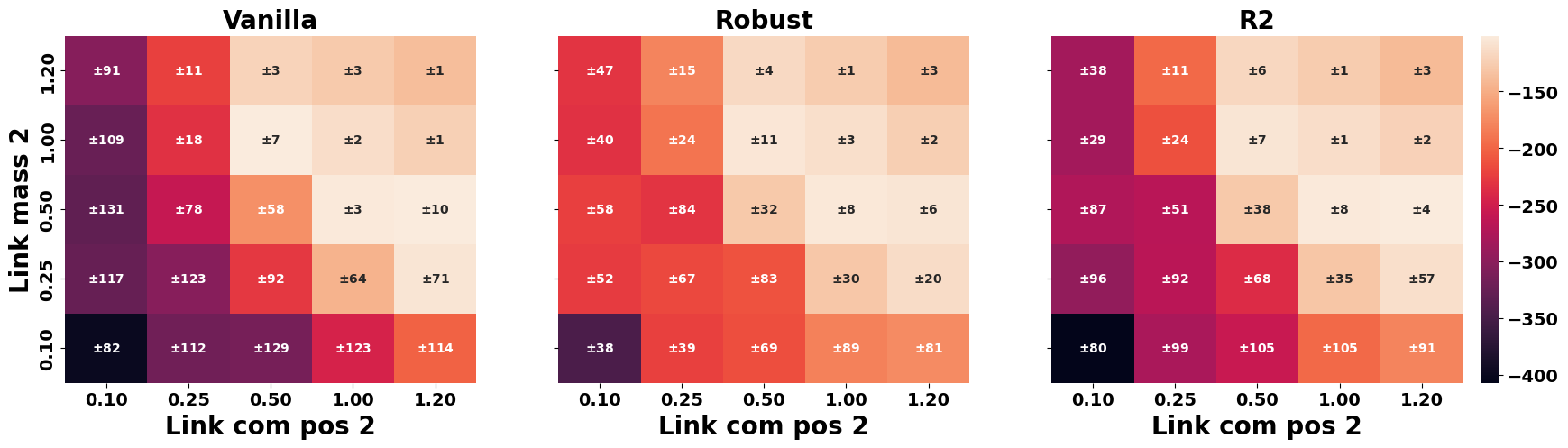} 
      \end{tabular}
    }
   }\hfil
   \subfloat[Mountaincar with abrupt changes of 'gravity' and 'force' \label{tbl:mountaincar-shock}]{
    \centering % not center
    
    \resizebox{\textwidth}{!}{%
      \renewcommand{\arraystretch}{0}%
      \begin{tabular}{@{}c@{\hspace{1pt}}c@{}}
      \includegraphics[width=1\linewidth]{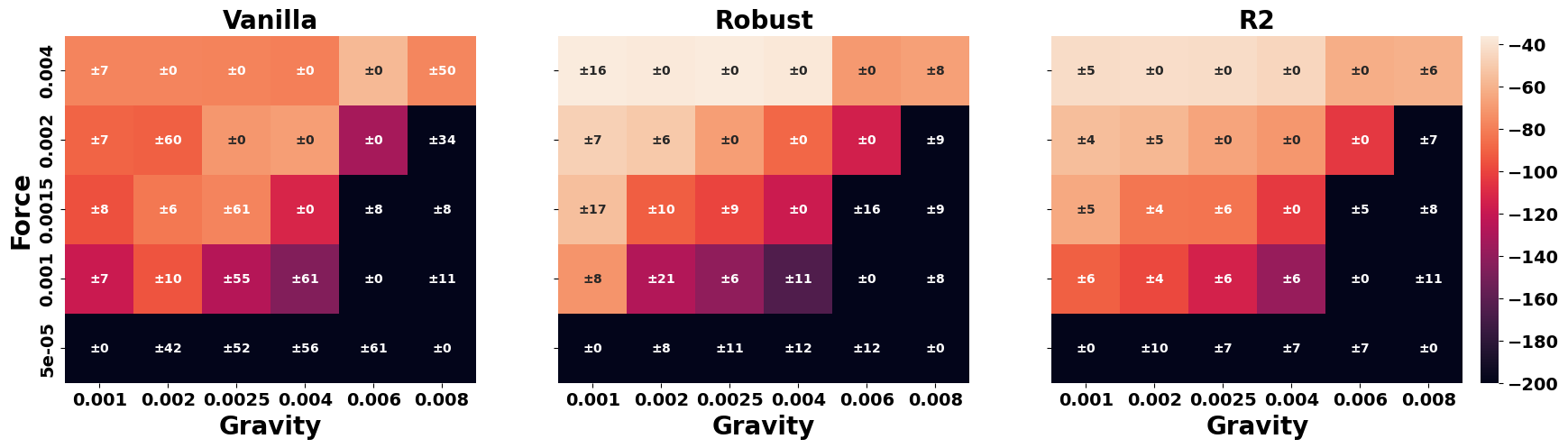}
      \end{tabular}
    }
   }
   \caption{Evaluation of robustness to shock. The color of each cell indicates the average cumulative reward and the printed value is the standard deviation obtained from 5 running seeds.}\label{fig:shock plot}
\end{figure}
 
\section{Related Work}
\label{sec: related work}

In statistical learning, regularization has long been used as a computationally low-cost tool for reducing over-fitting  \citep{hastie2009elements}. Later on, the theory of robust optimization \citep{bertsimas2022robust} has enabled establishing formal connections between regularization and robustness in standard learning settings such as support vector machines \citep{xu2009robustness}, logistic regression \citep{shafieezadeh2015distributionally} or maximum likelihood estimation \citep{kuhn2019wasserstein}. As stated in Sec.~\ref{sec: intro}, these represent particular RL problems as they concern a single-stage decision-making process. In that regard, the generalization of robustness-regularization duality to sequential decision-making has seldom been studied in the RL literature. 

It is only recently that policy regularization started being viewed from a robustness perspective \citep{husain2021regularized, brekelmans2022your, eysenbach2021maximum, derman2021twice, kumar2022efficient}. In \citep{husain2021regularized}, regularization is applied to the dual objective instead of the primal. This has two shortcomings: (i) It prevents the formulation of regularized Bellman operators and dynamic programming methods; (ii) The feasible set is that of occupancy measures, so the connection with standard policy regularization remains unclear. Whilst \citet{brekelmans2022your} address this shortcoming by providing relevant reward sets, their focus on the dual problem yields larger uncertainty. As mentioned in Sec.~\ref{sec: related algos}, although it is consistent with our results at optimality, using approximate solvers may result in overly conservative performance. Moreover, the optimization problem studied there is unrelated to robust dynamic programming, which hinders the facilitation of robust RL. Also, both of these works focus on reward robustness. Differently, \citet{eysenbach2021maximum} address reward \emph{and} transition uncertainty by showing that policies with maximum entropy regularization solve a particular type of robust MDP. Yet, their analysis separately treats the uncertainty on $P$ and $r$, which questions the robustness of the resulting policy when the whole model $(P,r)$ is adversarial. Moreover, the dual relation they establish between entropy regularization and transition-robust MDPs is weak and applies to specific uncertainty sets. Both of these works treat robustness as a side-effect of regularization more than an objective on its own, whereas we aim to do the opposite, namely, use regularization to solve robust RL problems. 

Two works that do use regularization as a tool for achieving robust policies are \citep{derman2020distributional, kumar2022efficient}. Through distributionally robust MDPs, \citet{derman2020distributional} show upper and lower bounds between transition-robustness and regularization. There again, duality is weak and reward uncertainty is not addressed. Moreover, since the exact regularization term has no explicit form, it is usable through its upper bound only. Finally, regularization is applied on the mean of several value functions $v^{\pi}_{(\hat{P}_i,r)}$, where each $\hat{P}_i$ is a transition model estimated from an episode run. Computing this quantity requires as many policy evaluations as the number of model estimates available, which results in a linear complexity blowup at least. \citet{kumar2022efficient} extend the conference version of this work \citep{derman2021twice} by additionally constraining all kernels in $P_0+\Pc$ to reside in a probability simplex. This yields less conservative policies at the expense of additional computational cost. Moreover, similarly as \citet{derman2021twice}, \citet{kumar2022efficient} focus on model-based methods, whereas we provably extend \rr regularization to model-free learning. 

Previous studies analyze robust planning algorithms to provide convergence guarantees. The works \citep{bagnell2001solving, nilim2005robust} propose robust value iteration, while \citet{iyengar2005robust, wiesemann2013robust} introduce robust policy iteration. \citet{kaufman2013robust} generalize both schemes by proposing a robust MPI and determine the conditions under which it converges. The polynomial time within which all these works guarantee a robust solution is often insufficient, as the complexity of a Bellman update grows cubically in the number of states \citep{ho2018fast}. 

In order to reduce the time complexity of robust planning algorithms, \citet{ho2018fast, ho2021partial} propose faster methods able to compute robust Bellman updates in $\mathcal{O}(\abs{\St}\abs{\A}\log(\abs{\St}\abs{\A}))$ operations for $\ell_1$-constrained uncertainty sets, while the algorithms from \citep{behzadian2021fast} require  $\mathcal{O}(\abs{\St}\abs{\A}\log(\abs{\St}))$ operations for $\ell_{\infty}$-robust updates. Advantageously, our regularization approach reduces each such update to its standard, non-robust complexity of $\mathcal{O}(\abs{\St}\abs{\A})$.
Moreover, although \citet{ho2018fast, ho2021partial, behzadian2021fast} address both $(s,a)$ and $s$-rectangular uncertainty sets, \ie sets that are independently defined over each state-action pair or state only, they focus on transition uncertainty, whereas we tackle both reward and transition uncertainties in the general $s$-rectangular case. Finally, their contribution relies on LP formulations that necessitate restricting to one predefined norm while our method applies to any norm. This may come from the fact that our main Theorems (Thms.~\ref{thm: reward robust -- reg} and \ref{thm: transition robust -- reg}) use Fenchel-Rockafellar duality \citep{rockafellar1970convex, borwein2010convex}, a generalization of LP duality (see Appx.~\ref{apx: reward robust} and \ref{apx: transition robust}). More recently, \citet{grand2021scalable} proposed a first-order method to accelerate robust value iteration under $s$-rectangular uncertainty sets that are either ellipsoidal or KL-constrained, while \citet{ho2022robust} provide an algorithmic speedup for general $f$-divergence robust MDPs. To our knowledge, our study is the first one reducing the time complexity of robust to standard planning when the uncertainty set is $s$-rectangular and constrained with an arbitrary norm. 

Despite its theoretical guarantees, robust RL has been sparingly applied to deep settings. We believe this is due to its computational challenges, which this work aims to address. As mentioned in Sec.~\ref{sec: pg for robust mdps}, previous methods require samples from several models to train a robust policy \citep{pinto2017robust, roy2017reinforcement, mankowitz2018learning, derman2018soft}. Besides their increased sample complexity, these parallel simulations bind the learning process to finite uncertainty sets. Differently, the robust fitted value iteration method of \citet{lutter2021robust} learns a robust policy based on the nominal system's observations only. It derives the worst-case model in closed form, so it can iterate over the value function in a standard way. Yet, the dynamics should be deterministic and satisfy structural assumptions while the nominal model needs to be known. In contrast, we study a general MDP setting without structure in the environment and propose a model-free algorithm that is guaranteed to converge to a robust optimal policy. 

Recently, \citet{zhang2022robust} leveraged the \rr regularization technique to derive robust policies. There, as in \citep{kumar2022efficient}, an \rr Bellman equation is provided for $q$-values without a formal equivalence to its robust analog being established. In fact, the uncertainty they consider here is not rectangular: the transition model is parameterized by uncertain parameters (like in \citep{wiesemann2013robust}), but these lie in a ball regardless of the state or action. Finally, while several notions of robustness may be addressed in deep RL methods, \eg action perturbation \citep{tessler2019action}, execution delay \citep{derman2021acting} or adversarial observations \citep{lutter2021robust, zhang2020robust}, these are outside the scope of our study which focuses on robustness w.r.t. the transition and/or reward functions. 
 
\section{Conclusion and future work}
\label{sec: discussion}
 
In this work, we established a strong duality between robust MDPs and twice regularized MDPs. This revealed that the regularized MDPs of \cite{geist2019theory} are in fact robust MDPs with uncertain reward, which enabled us to derive a policy-gradient theorem for reward-robust MDPs. When extending this robustness-regularization duality to general robust MDPs, we found that the regularizer depends on the value function besides the policy. We thus introduced \rr MDPs, a generalization of regularized MDPs with both policy and value regularization. The related \rr Bellman operators lead us to propose a converging \rr MPI (resp. \rr $q$-learning) algorithm that achieves the optimal robust value function within a similar computing time as standard MPI (resp. $q$-learning), as confirmed by our tabular experiments. Then, we devised a scalable method to estimate the value regularizer in an approximate setting. Finally, we tested the resulting \rr DDQN algorithm on physical domains. 

This study settles the theoretical foundations for scalable robust RL. We should note that our results naturally extend to continuous but compact action spaces in the same manner as standard MDPs do \citep{puterman2014markov}. Theoretical extension to infinite state space would be more involved because of the state-dependent regularizer in \rr MDPs. In fact, it would be interesting to study the \rr MDP setting under function approximation, as such approximation would have a direct effect on the regularizer. Similarly, one could analyze approximate dynamic programming for \rr MDPs in light of its robust analog \citep{tamar2014scaling, badrinath2021robust}. 
Apart from its practical effect, we believe our work opens the path to more theoretical contributions in robust RL. For example, extending \rr MPI to the approximate case \citep{scherrer2015approximate} would be an interesting problem to solve because of the \rr evaluation operator being non-linear. So would be a sample complexity analysis for \rr MDPs with a comparison to robust MDPs \citep{yang2021non}. Another line of research is to extend policy-gradient to \rr MDPs, as this would avoid parallel learning of adversarial models \citep{derman2018soft, tessler2019action} and be very useful for continuous control.

% Acknowledgements should go at the end, before appendices and references

\acks{We would like to thank Raphael Derman for his useful suggestions that improved the clarity of the text, and Navdeep Kumar for noticing a mistake in the proof of Prop.~\ref{prop: iyengar} from \citep{derman2021twice}. Although it has no effect on our results, this manuscript fixes it. Thanks to Roee Ben-Shlomo for taking part in the numerical experiments. Thanks also to Stav Belogolovsky for reviewing a previous version of this paper. Funding in direct support of this work: ISF grant. }

% Manual newpage inserted to improve layout of sample file - not
% needed in general before appendices/bibliography.

\newpage

\appendix
% \section{Appendix: Twice regularized MDPs and the equivalence between robustness and regularization}
This appendix provides proofs for all of the results stated in the paper. 
We first recall the following theorem used in the sequel and referred to as Fenchel-Rochafellar duality \citep[Thm 3.3.5]{borwein2010convex}.

\begin{theorem*}[Fenchel-Rockafellar duality]
\label{thm: fr duality}
Let $X, Y$ two Euclidean spaces, $f:X\to \overline{\R} $ and $g: Y\to \overline{\R}$ two proper, convex functions, and $A: X\to Y$ a linear mapping such that 
$0\in \mathrm{core}(\mathrm{dom}(g) - A(\mathrm{dom}(f)))$.\footnote{Given $C\subseteq\R^{\St}$, we say that $x\in\mathrm{core}(C)$ if for all $d\in \R^{\St}$ there exists a small enough $t\in\R$ such that $x+td\in C$ \citep{borwein2010convex}.}
Then, it holds that
\begin{align}
\label{eq: fr duality}
    \min_{x\in X}f(x) + g(A x) = \max_{y\in Y} -f^*(-A^*y) - g^*(y).
\end{align}
\end{theorem*}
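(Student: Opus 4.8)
The statement is the classical Fenchel--Rockafellar duality theorem, so the plan is the standard perturbation-function argument from convex analysis. I would split the proof into weak duality (the inequality $\geq$, which needs no constraint qualification) and strong duality (equality together with attainment of the right-hand supremum), the latter being where the hypothesis $0\in\mathrm{core}(\mathrm{dom}(g)-A(\mathrm{dom}(f)))$ enters.

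For weak duality, I would invoke the Fenchel--Young inequality twice. For any $x\in X$ and $y\in Y$, applying it to $f$ at the pair $(x,-A^*y)$ gives $f(x)+f^*(-A^*y)\geq\langle x,-A^*y\rangle=-\langle Ax,y\rangle$, and applying it to $g$ at $(Ax,y)$ gives $g(Ax)+g^*(y)\geq\langle Ax,y\rangle$. Summing and rearranging yields $f(x)+g(Ax)\geq -f^*(-A^*y)-g^*(y)$ for all $x,y$, hence $\inf_x\{f(x)+g(Ax)\}\geq\sup_y\{-f^*(-A^*y)-g^*(y)\}$.

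For strong duality, I would introduce the value (perturbation) function $p(u):=\inf_{x\in X}\{f(x)+g(Ax+u)\}$ on $Y$. Two facts need checking: $p$ is convex (it is the marginal over $x$ of the jointly convex map $(x,u)\mapsto f(x)+g(Ax+u)$), and its effective domain is exactly $\mathrm{dom}(p)=\mathrm{dom}(g)-A(\mathrm{dom}(f))$, so the hypothesis reads $0\in\mathrm{core}(\mathrm{dom}(p))$. A direct conjugate computation — substituting $w=Ax+u$ and separating the two suprema — gives $p^*(y)=f^*(-A^*y)+g^*(y)$, so the right-hand side of the theorem equals $-\inf_y p^*(y)=p^{**}(0)$, while the left-hand side is $p(0)$. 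Thus strong duality is precisely the biconjugation identity $p(0)=p^{**}(0)$ at the origin, whereas weak duality is the general inequality $p^{**}(0)\leq p(0)$.

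It remains to upgrade this to equality with attainment, which is the crux and where I expect the main difficulty. The constraint qualification places $0$ in the core — hence, in finite dimensions, in the interior — of the convex set $\mathrm{dom}(p)$. A convex function that is finite at an interior point of its domain (finiteness at $0$ follows from nonemptiness of $\mathrm{dom}(f)\cap A^{-1}\mathrm{dom}(g)$, with the degenerate case $p(0)=-\infty$ forcing both sides to $-\infty$) is continuous there, and a finite convex function continuous at a point is subdifferentiable there. I would therefore pick $\bar y\in\partial p(0)$; the subgradient inequality together with the definition of the conjugate yields the Fenchel equality $p(0)+p^*(\bar y)=\langle 0,\bar y\rangle=0$, i.e. $-p^*(\bar y)=p(0)$. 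Hence the right-hand supremum is attained at $\bar y$ with value $p(0)$, which combined with weak duality gives the claimed equality and justifies writing $\max$ rather than $\sup$. The delicate step throughout is the passage from the core/interior condition to subdifferentiability of the value function — morally a Hahn--Banach / supporting-hyperplane argument — which is exactly what the constraint qualification is designed to supply.
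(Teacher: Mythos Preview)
The paper does not actually prove this theorem: it is stated at the start of the appendix as a known result, with a citation to Borwein and Lewis, \emph{Convex Analysis and Nonlinear Optimization}, Theorem~3.3.5, and is then invoked as a black box in the proofs of Theorems~\ref{thm: reward robust -- reg} and~\ref{thm: transition robust -- reg}. So there is no ``paper's own proof'' to compare against.

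That said, your argument is correct and is exactly the standard perturbation-function proof one finds in the cited reference (and in Rockafellar). The identification $p^*(y)=f^*(-A^*y)+g^*(y)$, the rewriting of the two sides as $p(0)$ and $p^{**}(0)$, and the use of the core condition to force subdifferentiability of $p$ at the origin are all the right moves. One small remark: you note parenthetically that in finite dimensions core equals interior for convex sets --- this is worth stating explicitly since it is what lets you pass from the core hypothesis to continuity of $p$ at $0$; in the general Banach-space version of the theorem one needs a bit more care (lower semicontinuity of $p$, or the stronger ``strong core'' condition), but here $X$ and $Y$ are Euclidean so you are fine.
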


\section{Reward-Robust MDPs}
\subsection{Proof of Proposition~\ref{prop: iyengar}}

\begin{proposition*}
\label{apx: iyengar}
For any policy $\pi\in\Delta_{\A}^{\St}$, the robust value function $v^{ \pi, \Uc}$ is the optimal solution of the robust optimization problem:
\begin{align}
\label{eq: iyengar ro primal apx}
    \max_{v\in\R^{\St}} \innorm{v, \mu_0}  \text{ s. t. } v\leq T_{(P,r)}^{\pi}v \text{ for all }   (P,r)\in\Uc. \tag{P${}_{\Uc}$}
\end{align}
\end{proposition*}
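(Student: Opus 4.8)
The statement asserts that the robust value function $v^{\pi,\Uc}$ is \emph{the} optimal solution of the robust LP~\eqref{eq: iyengar ro primal apx}. I would prove this in two halves: (1) $v^{\pi,\Uc}$ is robust feasible and (2) it dominates every feasible point, so in particular it maximizes $\innorm{v,\mu_0}$ (recall $\mu_0>0$, so pointwise dominance of nonnegative-weighted sums gives the argmax). The key tool is the robust Bellman evaluation operator $T^{\pi,\Uc}$ with $[T^{\pi,\Uc}v](s)=\min_{(P,r)\in\Uc}T^\pi_{(P,r)}v(s)$, which is a $\gamma$-contraction (stated in Sec.~\ref{sec: prelim robust mdps}) whose unique fixed point is $v^{\pi,\Uc}$.

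\textbf{Step 1 (feasibility).} Since $v^{\pi,\Uc}=T^{\pi,\Uc}v^{\pi,\Uc}$, for every state $s$ we have $v^{\pi,\Uc}(s)=\min_{(P,r)\in\Uc}T^\pi_{(P,r)}v^{\pi,\Uc}(s)\le T^\pi_{(P,r)}v^{\pi,\Uc}(s)$ for all $(P,r)\in\Uc$. Hence $v^{\pi,\Uc}\le T^\pi_{(P,r)}v^{\pi,\Uc}$ for every $(P,r)\in\Uc$, i.e.\ $v^{\pi,\Uc}$ is robust feasible for~\eqref{eq: iyengar ro primal apx}. Equivalently, $v^{\pi,\Uc}\le T^{\pi,\Uc}v^{\pi,\Uc}$ with equality, so feasibility is immediate once we phrase the constraint through $T^{\pi,\Uc}$.

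\textbf{Step 2 (dominance/optimality).} Take any feasible $v$, so $v\le T^\pi_{(P,r)}v$ for all $(P,r)\in\Uc$, hence $v\le \min_{(P,r)\in\Uc}T^\pi_{(P,r)}v = T^{\pi,\Uc}v$. Now use that $T^{\pi,\Uc}$ is monotone (it is a pointwise min of the monotone affine operators $T^\pi_{(P,r)}$) and a $\gamma$-contraction in sup-norm. From $v\le T^{\pi,\Uc}v$, monotonicity gives $v\le (T^{\pi,\Uc})^n v$ for all $n$ by induction; letting $n\to\infty$ and invoking contraction (the iterates converge to the fixed point $v^{\pi,\Uc}$ from any starting point), we get $v\le v^{\pi,\Uc}$ pointwise. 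Since $\mu_0>0$, this yields $\innorm{v,\mu_0}\le\innorm{v^{\pi,\Uc},\mu_0}$ for every feasible $v$, and $v^{\pi,\Uc}$ is itself feasible, so it is the optimal solution; uniqueness of the optimizer follows because pointwise dominance with strictly positive weights forces any other optimizer to equal $v^{\pi,\Uc}$ coordinatewise.

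\textbf{Main obstacle.} The only delicate point is justifying the limit argument in Step 2 cleanly: one must know that $\min_{(P,r)\in\Uc}T^\pi_{(P,r)}v$ is well-defined (attained) and that $T^{\pi,\Uc}$ is genuinely monotone and contracting. These are quoted in Sec.~\ref{sec: prelim robust mdps}, but a careful writeup should note that attainment of the min requires $\Uc$ compact (or at least that the inf is achieved), and that monotonicity of a pointwise infimum of monotone maps is the elementary fact being used. A subtlety worth flagging (indeed the acknowledgments mention a corrected mistake in this very proof in the conference version) is that one should argue through the robust operator $T^{\pi,\Uc}$ rather than manipulating a single adversarial $(P,r)$ chosen independently per state without rectangularity — the monotone-iteration route above sidesteps that pitfall entirely since it never fixes a model.
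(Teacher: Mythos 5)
Your proof is correct, and its overall skeleton (feasibility of $v^{\pi,\Uc}$ plus pointwise dominance of every feasible point, then positivity of $\mu_0$) matches the paper's. The dominance step, however, is carried out by a genuinely different mechanism. You start from $v\le T^{\pi,\Uc}v$ and iterate the \emph{robust} operator itself, invoking its monotonicity (pointwise infimum of monotone affine maps) and its $\gamma$-contraction --- a fact the paper quotes in Sec.~2.5 --- so that $(T^{\pi,\Uc})^n v \to v^{\pi,\Uc}$ squeezes out $v\le v^{\pi,\Uc}$. The paper instead never iterates the nonlinear robust operator: it picks an $\epsilon$-approximate worst-case model $(P_\epsilon,r_\epsilon)$, derives the affine recursion $v - v^{\pi,\Uc}\le T^{\pi}_{(P_\epsilon,r_\epsilon)}(v-v^{\pi,\Uc})+\epsilon$, iterates that \emph{linear} $\gamma$-contraction to accumulate the geometric error $\tfrac{1-\gamma^{n+1}}{1-\gamma}\epsilon$, and then sends $\epsilon\to 0$. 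Your route is shorter and cleaner but leans on the contraction of $T^{\pi,\Uc}$ as an external fact; the paper's route is more self-contained, needing only the contraction of the standard evaluation operator for a fixed model, and the $\epsilon$-minimizer device also dispenses with any attainment assumption on the min over $\Uc$ --- the very issue you correctly flag as the delicate point (and which the acknowledgments indicate was the source of the error in the conference version). Both arguments are sound; your remark that the monotone-iteration route "never fixes a model" is exactly the right diagnosis of why it sidesteps that pitfall.
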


\begin{proof}
Let $v^*$ an optimal point of \eqref{eq: iyengar ro primal}. By definition of the robust value function, $v^{\pi, \Uc} = T^{\pi, \Uc}v^{\pi, \Uc} =  \min_{(P,r)\in\Uc} T_{(P,r)}^{\pi}v^{\pi, \Uc}$. In particular, $v^{\pi, \Uc}\leq T_{(P,r)}^{\pi}v^{\pi, \Uc}$ for all $(P,r)\in\Uc$, so the robust value is feasible and by optimality of $v^*$, we get $\innorm{v^*,\mu_0} \geq \innorm{v^{\pi, \Uc}, \mu_0}.$ Now, we aim to show that any feasible $v\in\R^{\St}$ satisfies $v\leq v^{\pi, \Uc}.$ 
Let an arbitrary $\epsilon > 0$. By definition of $T^{\pi, \Uc}$, there exists $(P_{\epsilon},r_{\epsilon})\in\Uc$ such that
\begin{align}
\label{eq: epsilon inf}
    T^{\pi, \Uc}v^{\pi, \Uc} +\epsilon  > T_{(P_{\epsilon},r_{\epsilon})}^{\pi}v^{\pi, \Uc}.
\end{align}
This yields:
\begin{align*}
    v - v^{\pi, \Uc} &= v - T^{\pi, \Uc}v^{\pi, \Uc} &[v^{\pi, \Uc} = T^{\pi, \Uc}v^{\pi, \Uc}]\\
    &<  v + \epsilon - T_{(P_{\epsilon},r_{\epsilon})}^{\pi}v^{\pi, \Uc} &[\text{By Eq.~\eqref{eq: epsilon inf}}]\\
    &\leq T^{\pi, \Uc}v+ \epsilon - T_{(P_{\epsilon},r_{\epsilon})}^{\pi}v^{\pi, \Uc} &[v \text{ is feasible for }\eqref{eq: iyengar ro primal}]\\
    &\leq T_{(P_{\epsilon},r_{\epsilon})}^{\pi}v+ \epsilon - T_{(P_{\epsilon},r_{\epsilon})}^{\pi}v^{\pi, \Uc} &[T^{\pi, \Uc}v \leq T_{(P,r)}^{\pi}v \text{ for all } (P,r)\in\Uc]\\
    &= T_{(P_{\epsilon},r_{\epsilon})}^{\pi}(v-v^{\pi, \Uc})+ \epsilon. &[\text{By linearity of }T_{(P_{\epsilon},r_{\epsilon})}^{\pi}]\\
    % &\leq T^{\pi, \Uc}v+ \epsilon - T^{\pi, \Uc}v^{\pi, \Uc} &[T^{\pi, \Uc}u = \min_{(P,r)\in\Uc} T_{(P,r)}^{\pi}u\leq T_{(P_{\epsilon},r_{\epsilon})}^{\pi}u\quad \forall u\in\R^{\St}]\\
    % &= T^{\pi, \Uc}(v-v^{\pi, \Uc} ) + \epsilon.
\end{align*}
Thus, $v - v^{\pi, \Uc}\leq T_{(P_{\epsilon},r_{\epsilon})}^{\pi}(v-v^{\pi, \Uc}) + \epsilon$, which we iteratively apply as follows:
\begin{align*}
    v - v^{\pi, \Uc} &\leq T_{(P_{\epsilon},r_{\epsilon})}^{\pi}(v-v^{\pi, \Uc}) + \epsilon\\
    &\leq T_{(P_{\epsilon},r_{\epsilon})}^{\pi}(T_{(P_{\epsilon},r_{\epsilon})}^{\pi}(v-v^{\pi, \Uc}) + \epsilon) + \epsilon &[u\leq w\implies T_{(P_{\epsilon},r_{\epsilon})}^{\pi}u\leq T_{(P_{\epsilon},r_{\epsilon})}^{\pi}w]\\
    &=(T_{(P_{\epsilon},r_{\epsilon})}^{\pi})^2(v-v^{\pi, \Uc} ) + \gamma\epsilon + \epsilon\\
    &\leq (T_{(P_{\epsilon},r_{\epsilon})}^{\pi})^2(T_{(P_{\epsilon},r_{\epsilon})}^{\pi}(v-v^{\pi, \Uc} ) + \epsilon)  + \gamma\epsilon + \epsilon\\
    &\quad\vdots\\
    &\leq (T_{(P_{\epsilon},r_{\epsilon})}^{\pi})^{n+1}(v-v^{\pi, \Uc} )  + \sum_{k = 0}^{n}\gamma^k\epsilon \\
    &= (T_{(P_{\epsilon},r_{\epsilon})}^{\pi})^{n+1}(v-v^{\pi, \Uc} )  + \frac{1-\gamma^{n+1}}{1-\gamma}\epsilon.
    % &= (T_{(P_{\epsilon},r_{\epsilon})}^{\pi})^{n+1}v - v^{\pi, \Uc}+ \sum_{k = 0}^{n}\gamma^k\epsilon. &[v^{\pi, \Uc} = T^{\pi, \Uc}v^{\pi, \Uc}]
\end{align*}
By definition of the sup-norm and applying the triangular inequality we obtain:
\begin{align*}
    v - v^{\pi, \Uc} &\leq \norm*{(T_{(P_{\epsilon},r_{\epsilon})}^{\pi})^{n+1}(v-v^{\pi, \Uc} )}_{\infty} + \frac{1-\gamma^{n+1}}{1-\gamma}\epsilon \\
    &\leq \gamma^{n+1} \norm{v - v^{\pi, \Uc}}_{\infty} + \frac{1-\gamma^{n+1}}{1-\gamma}\epsilon &[T_{(P_{\epsilon},r_{\epsilon})}^{\pi} \text{ is } \gamma\text{-contracting}]
\end{align*}
Setting $n\to\infty$ yields $v - v^{\pi, \Uc} \leq \frac{\epsilon}{1-\gamma}$. Since both $\epsilon>0$ and $v$ were taken arbitrarily, $v^* - v^{\pi, \Uc}\leq 0$, while we have already shown that $\innorm{v^*,\mu_0} \geq \innorm{v^{\pi, \Uc}, \mu_0}.$
By positivity of the probability distribution $\mu_0$, it results that $\innorm{v^*,\mu_0} = \innorm{v^{\pi, \Uc}, \mu_0}$, and since $\mu_0 >0$, $v^{\pi, \Uc} = v^*$.  
\end{proof}

\subsection{Proof of Theorem \ref{thm: reward robust -- reg}}
\label{apx: reward robust}

\begin{theorem*}[Reward-robust MDP]
Assume that $\Uc = \{P_0\} \times  (r_0+ \Rc).$ Then, for any policy $\pi\in\Delta_{\A}^{\St}$, the robust value function $v^{ \pi, \Uc}$ is the optimal solution of the convex optimization problem:
\begin{align*}
    \max_{v\in\R^{\St}} \innorm{v, \mu_0}  \text{ s. t. } v(s)\leq T_{(P_0,r_0)}^{\pi}v(s) - \sigma_{\Rc_s}(-\pi_s) \text{ for all } s\in\St. 
\end{align*}
\end{theorem*}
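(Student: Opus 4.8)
The plan is to reduce this to Proposition~\ref{prop: iyengar} by showing that, under the reward-robust assumption $\Uc = \{P_0\}\times(r_0+\Rc)$, the robust feasibility constraint ``$v\leq T^{\pi}_{(P,r)}v$ for all $(P,r)\in\Uc$'' is \emph{equivalent}, state by state, to the single constraint ``$v(s)\leq T^{\pi}_{(P_0,r_0)}v(s) - \sigma_{\Rc_s}(-\pi_s)$''. Since Proposition~\ref{prop: iyengar} already identifies $v^{\pi,\Uc}$ as the optimal solution of \eqref{eq: iyengar ro primal}, rewriting the feasible set without changing it immediately gives the theorem.

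First I would fix a state $s$ and unpack the robust constraint at $s$. Because the transition kernel is fixed at $P_0$ and only the reward varies in $r_0+\Rc$, and because the uncertainty set is $s$-rectangular (so the reward component at state $s$ ranges over $r_0(s,\cdot)+\Rc_s$ independently of other states), the family of constraints $\{v(s)\leq T^{\pi}_{(P_0,r_0+r)}v(s): r\in\Rc\text{ with }r_s\in\Rc_s\}$ is equivalent to requiring $v(s)$ to be below the \emph{infimum} over $r_s\in\Rc_s$ of $T^{\pi}_{(P_0,r_0+r)}v(s)$. Expanding the Bellman operator, $T^{\pi}_{(P_0,r_0+r)}v(s) = \innorm{\pi_s, r_0(s,\cdot)} + \innorm{\pi_s, r_s} + \gamma\innorm{P_0^\pi(\cdot|s), v} = T^{\pi}_{(P_0,r_0)}v(s) + \innorm{\pi_s, r_s}$, so the only $r_s$-dependent piece is the linear term $\innorm{\pi_s, r_s}$. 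Hence
\[
\inf_{r_s\in\Rc_s} T^{\pi}_{(P_0,r_0+r)}v(s) = T^{\pi}_{(P_0,r_0)}v(s) + \inf_{r_s\in\Rc_s}\innorm{\pi_s, r_s} = T^{\pi}_{(P_0,r_0)}v(s) - \sup_{r_s\in\Rc_s}\innorm{-\pi_s, r_s} = T^{\pi}_{(P_0,r_0)}v(s) - \sigma_{\Rc_s}(-\pi_s),
\]
using the definition of the support function $\sigma_{\Rc_s}(\y)=\max_{r_s\in\Rc_s}\innorm{r_s,\y}$ from Sec.~\ref{sec: notations}. Collecting this over all $s\in\St$ shows the two feasible sets coincide, and applying Proposition~\ref{prop: iyengar} finishes the argument.

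The main obstacle — or rather the point that needs care — is justifying that the ``for all $(P,r)\in\Uc$'' constraint decouples cleanly into a per-state infimum: this is exactly where $s$-rectangularity of $\Rc$ is used, and one should state that the theorem's hypothesis implicitly assumes $\Rc = \times_{s\in\St}\Rc_s$ (otherwise $\sigma_{\Rc_s}$ is not even well-defined). A secondary point is whether the infimum over $\Rc_s$ is attained; if $\Rc_s$ is compact (e.g.\ a closed ball, as in Corollary~\ref{cor: reg for ball reward uncertainty}) the $\inf$ is a $\min$ and $\sigma_{\Rc_s}$ is finite-valued, but even without compactness the support-function identity holds in $\overline{\R}$ and the equivalence of constraint sets is unaffected. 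Everything else is routine bookkeeping with the Bellman operator and the definition of $\sigma$. One could alternatively derive the same statement via Fenchel--Rockafellar duality applied to \eqref{eq: iyengar ro primal} — writing the LP with $f = \delta_{\{\,\cdot\,\geq 0\}}$-type terms and identifying $g^*$ with the support function — which is presumably the route the authors take given the appendix preamble, but the elementary per-state reduction above is cleaner and self-contained once Proposition~\ref{prop: iyengar} is in hand.
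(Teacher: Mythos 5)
Your proposal is correct and follows essentially the same route as the paper: reduce to Proposition~\ref{prop: iyengar}, use rectangularity to decouple the robust feasibility constraint state by state, and identify the resulting per-state maximization of $-\innorm{r_s,\pi_s}$ over $\Rc_s$ as the support function $\sigma_{\Rc_s}(-\pi_s)$. The only cosmetic difference is that the paper reaches this last identity by formally invoking Fenchel--Rockafellar duality on $\min_{r_s}\{\innorm{r_s,\pi_s}+\delta_{\Rc_s}(r_s)\}$ (the machinery it reuses for the harder transition-robust case), whereas you read it off directly from the definition of $\sigma_{\Rc_s}$, which is equivalent here since the objective is linear in $r_s$.
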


\begin{proof}
For all $s\in\St$, define: $F(s):= \max_{(P, r)\in\Uc}\left\{v(s) - r^{\pi}(s) - \gamma P^{\pi}v(s) \right\}$.
It corresponds to the robust counterpart of \eqref{eq: iyengar ro primal} at $s\in\St$. Thus, the robust value function $v^{ \pi, \Uc}$ is the optimal solution of:
\begin{align}
\label{eq: robust counterpart reward}
    \max_{v\in\R^{\St}} \innorm{v, \mu_0}  \text{ s. t. } F(s)\leq 0 \text{ for all }   s\in\St. 
\end{align}
Based on the structure of the uncertainty set $\Uc = \{P_0\} \times  (r_0+ \Rc)$, we compute the robust counterpart:
\begin{align*}
    F(s) &= \max_{r'\in r_0 + \Rc}\left\{v(s) - r'^{\pi}(s) - \gamma P_0^{\pi}v(s) \right\}\\
    &=\max_{r': r' = r_0 + r, r\in \Rc}\left\{v(s) - r'^{\pi}(s) - \gamma P_0^{\pi}v(s) \right\}\\
    &=\max_{ r\in \Rc}\left\{v(s) - (r_0^{\pi}(s)  + r^{\pi}(s))- \gamma P_0^{\pi}v(s) \right\} &[(r_0 + r)^{\pi} = r_0^{\pi} + r^{\pi} \quad\forall \pi\in\Delta_{\A}^{\St}]\\
    &= \max_{ r\in \Rc}\left\{v(s) - r^{\pi}(s)-r_0^{\pi}(s) -  \gamma P_0^{\pi}v(s) \right\}\\
    &= \max_{ r\in \Rc}\left\{v(s) - r^{\pi}(s)-T_{(P_0,r_0)}^{\pi}v(s) \right\} &[T_{(P_0,r_0)}^{\pi}v(s)  = r_0^{\pi}(s) +  \gamma P_0^{\pi}v(s)]\\
    &= \max_{r\in \Rc}\{- r^{\pi}(s)\} + v(s) - T_{(P_0,r_0)}^{\pi}v(s)\\
    &= \max_{r\in \R^{\X}}\{- r^{\pi}(s) - \delta_{\Rc}(r')\}+ v(s) - T_{(P_0,r_0)}^{\pi}v(s)\\
    &= -\min_{r\in \R^{\X}}\{ r^{\pi}(s) +\delta_{\Rc}(r)\}+ v(s) - T_{(P_0,r_0)}^{\pi}v(s)\\
    &= -\min_{r\in \R^{\X}}\{ \innorm{r_s, \pi_s} +\delta_{\Rc}(r)\}+ v(s) - T_{(P_0,r_0)}^{\pi}v(s). &[r^{\pi}(s) = \innorm{r_s, \pi_s}]
\end{align*}
By the rectangularity assumption, $\Rc = \times_{s\in\St}\Rc_s$ and for all $r:= (r_s)_{s\in\St}\in\R^{\X}$, we have
$\delta_{\Rc}(r) = \sum_{s'\in\St}\delta_{\Rc_{s'}}(r_{s'})$. As such, 
\begin{align*}
    F(s)
    &= -\min_{r\in \R^{\X}}\{\innorm{r_s, \pi_s} +\sum_{s'\in\St}\delta_{\Rc_{s'}}(r_{s'})\}+ v(s) - T_{(P_0,r_0)}^{\pi}v(s)\\
    &= -\min_{r\in \R^{\X}}\{\innorm{r_s, \pi_s} +\delta_{\Rc_{s}}(r_{s})\}+ v(s) - T_{(P_0,r_0)}^{\pi}v(s),
\end{align*}
where the last equality holds since the objective function is minimal if and only if $r_s\in\Rc_s$. 

We now aim to apply Fenchel-Rockafellar duality to the minimization problem. Let the function
$f :  \R^{\A}  \to  \R$ defined as $r_s  \mapsto \innorm{r_s, \pi_s}$, and consider 
the support function $\delta_{\Rc_s}: \R^{\A} \to \overline{\R}$ together with
the identity mapping $\mathbf{Id}_{\A}: \R^{\A} \to \R^{\A}$. 
Clearly, $\mathrm{dom}(f) = \R^{\A}$, $\mathrm{dom}(\delta_{\Rc_s}) = \Rc_s$, and $\mathrm{dom}(\delta_{\Rc_s}) - \mathbf{Id}_{\A}(\mathrm{dom}(f)) = \Rc_s - \R^{\A} = \R^{\A}$. Therefore, $\mathrm{core}(\mathrm{dom}(\delta_{\Rc_s}) - A(\mathrm{dom}(f))) = \mathrm{core}(\R^{\A}) = \R^{\A}$ and $0\in \R^{\A}$. We can thus apply Fenchel-Rockafellar duality: noting that $\mathbf{Id}_{\A} = (\mathbf{Id}_{\A})^*$ and $(\delta_{\Rc_s})^*(y) = \sigma_{\Rc_s}(y)$, we get
\begin{align*}
    \min_{r_s\in \R^{\A}}\{ f(r_s) + \delta_{\Rc_s}(r_s)\}
    &= -\min_{y\in\R^{\A}} \{f^*(-y)+(\delta_{\Rc_s})^*(y)\} = -\min_{y\in\R^{\A}} \{f^*(-y)+\sigma_{\Rc_s}(y)\}.
\end{align*}
It remains to compute
\begin{align*}
    f^*(-y) 
    = \max_{r_s\in\R^{\A}} - \innorm{r_s, y} -\innorm{r_s, \pi_s} 
    = \max_{r_s\in\R^{\A}}\innorm{r_s, -y-\pi_s}
    =\begin{cases}
    0 \text{ if } -y-\pi_s = 0\\
    +\infty \text{ otherwise}
    \end{cases},
\end{align*}
and obtain
\begin{align*}
    F(s) &= \min_{y\in\R^{\A}} \{f^*(-y)+\sigma_{\Rc_s}(y)\} + v(s) - T_{(P_0,r_0)}^{\pi}v(s)
    = \sigma_{\Rc_s}(-\pi_s) + v(s) - T_{(P_0,r_0)}^{\pi}v(s).
\end{align*}
We can thus rewrite the optimization problem \eqref{eq: robust counterpart reward} as:
\begin{align*}
    \max_{v\in\R^{\St}} \innorm{v, \mu_0}  \text{ s. t. } \sigma_{\Rc_s}(-\pi_s) + v(s) - T_{(P_0,r_0)}^{\pi}v(s)\leq 0 \text{ for all }   s\in\St,
\end{align*}
which concludes the proof.
\end{proof}

\subsection{Proof of Corollary \ref{cor: reg for ball reward uncertainty}}
\label{apx: cor reward robust}

\begin{corollary*}
Let $\pi\in\Delta_{\A}^{\St}$ and $\Uc = \{P_0\} \times  (r_0+ \Rc)$. Further assume that for all $s\in\St$, the reward uncertainty set at $s$ is $\Rc_s:= \{r_s\in\R^{\A}: \norm{r_s}\leq \alpha_s^r\}$. Then,  the robust value function $v^{ \pi, \Uc}$ is the optimal solution of the convex optimization problem:
\begin{align*}
    \max_{v\in\R^{\St}} \innorm{v, \mu_0}  \text{ s. t. } v(s)\leq T_{(P_0,r_0)}^{\pi}v(s) - \alpha_s^r\norm{\pi_s} \text{ for all } s\in\St. 
\end{align*}
\end{corollary*}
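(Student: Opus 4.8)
The plan is to invoke Theorem~\ref{thm: reward robust -- reg} and then simply evaluate its support-function term for the specific ball-shaped uncertainty set. By Theorem~\ref{thm: reward robust -- reg}, under $\Uc = \{P_0\}\times(r_0+\Rc)$ the robust value function $v^{\pi,\Uc}$ is the optimal solution of
\[
\max_{v\in\R^{\St}}\innorm{v,\mu_0}\ \text{ s.t. }\ v(s)\le T^{\pi}_{(P_0,r_0)}v(s) - \sigma_{\Rc_s}(-\pi_s)\ \text{ for all } s\in\St,
\]
so it suffices to show that $\sigma_{\Rc_s}(-\pi_s) = \alpha_s^r\norm{\pi_s}$ when $\Rc_s = \{r_s\in\R^{\A}:\norm{r_s}\le\alpha_s^r\}$.

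Next I would compute this support function directly. Recall from Sec.~\ref{sec: notations} that $\sigma_{\Rc_s}(y) = \max_{r_s\in\Rc_s}\innorm{r_s,y}$. For the centered $\ell_2$-ball of radius $\alpha_s^r$, Cauchy--Schwarz gives $\innorm{r_s,y}\le\norm{r_s}\,\norm{y}\le\alpha_s^r\norm{y}$, and equality is attained at $r_s = \alpha_s^r\, y/\norm{y}\in\Rc_s$ whenever $y\neq 0$ (and trivially at $r_s = 0$ when $y = 0$, the value being $0 = \alpha_s^r\norm{0}$). Hence $\sigma_{\Rc_s}(y) = \alpha_s^r\norm{y}$; that is, the support function of a centered ball of radius $\alpha_s^r$ is $\alpha_s^r$ times the dual norm, which here coincides with $\norm{\cdot}$ itself since the $\ell_2$-norm is self-dual ($\norm{\cdot} = \norm{\cdot}_*$, as noted in Sec.~\ref{sec: notations}).

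Finally I would substitute $y = -\pi_s$ and use $\norm{-\pi_s} = \norm{\pi_s}$ to obtain $\sigma_{\Rc_s}(-\pi_s) = \alpha_s^r\norm{\pi_s}$, so the constraint in Theorem~\ref{thm: reward robust -- reg} becomes $v(s)\le T^{\pi}_{(P_0,r_0)}v(s) - \alpha_s^r\norm{\pi_s}$ for all $s\in\St$, which is the claimed convex program.

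There is no genuine obstacle: the corollary is an immediate specialization of Theorem~\ref{thm: reward robust -- reg} combined with the elementary fact that the support function of a norm ball is a scaled dual norm. The only points meriting a line of care are the degenerate case $y = 0$ (the maximizer is non-unique but the value is still correct) and the observation that the very same argument, carried out with an arbitrary norm in place of $\ell_2$, produces the dual norm $\norm{\cdot}_*$ instead of $\norm{\cdot}$ --- which is precisely why the corollary is stated for the $\ell_2$-norm only, for notational convenience.
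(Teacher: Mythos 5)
Your proposal is correct and follows the same route as the paper: specialize Theorem~\ref{thm: reward robust -- reg} and evaluate $\sigma_{\Rc_s}(-\pi_s)=\alpha_s^r\norm{\pi_s}$ as the support function of a norm ball via the (self-)dual norm. The only difference is cosmetic — you spell out the Cauchy--Schwarz attainment argument where the paper simply cites the definition of the dual norm.
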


\begin{proof}
We evaluate the support function:
\begin{align*}
   \sigma_{\Rc_s}(-\pi_s) &=  \max_{r_s\in\R^{\A}: \norm{r_s}\leq \alpha_s^r}\innorm{ r_s,-\pi_s} \overset{(1)}{=}
   \alpha_s^r \norm{-\pi_s} = \alpha_s^r \norm{\pi_s},
\end{align*}
where equality $(1)$ holds by definition of the dual norm. 
Applying Thm.~\ref{thm: reward robust -- reg}, the robust value function $v^{ \pi, \Uc}$ is the optimal solution of:
$
    \max_{v\in\R^{\St}} \innorm{v, \mu_0}  \text{ s. t. }   \alpha_s^r \norm{\pi_s} + v(s) - T_{(P_0,r_0)}^{\pi}v(s)\leq 0 \text{ for all } s\in\St, 
$
which concludes the proof. 

\textit{\underline{Ball-constraint with arbitrary norm.} } In the case where reward ball-constraints are defined according to an arbitrary norm $\norm{\cdot}_{a}$ with dual norm $\norm{\cdot}_{a^*}$, the support function becomes:
\begin{align*}
    \sigma_{\Rc_s}(-\pi_s) &=  \max_{r_s\in\R^{\A}: \norm{r_s}_a\leq \alpha_s^r}\innorm{ r_s,-\pi_s}=
   \alpha_s^r \norm{-\pi_s}_{a^*} = \alpha_s^r \norm{\pi_s}_{a^*}.
\end{align*}
\end{proof}

\subsection{Related Algorithms: Uncertainty sets from regularizers}
\label{apx: uncertainty sets for regularizers} 

\begin{table}[p]
    \centering
    \def\arraystretch{1.5} \small
\begin{tabular}{|p{2.cm}|p{2.8cm}|p{2.8cm}|p{3.4cm}|p{2.5cm}|}
\hline
& \textbf{Negative  Shannon} & \textbf{KL divergence} & \textbf{Negative Tsallis} & \textbf{\rr function}\\\hline
\textbf{Regularizer $\Omega$}& $$\sum_{a\in\A}\pi_s(a)\ln(\pi_s(a))$$  & $$\sum_{a\in\A}\pi_s(a)\ln\left(\frac{\pi_s(a)}{d(a)}\right)$$  & $$\frac{1}{2}(\norm{\pi_s}^2-1)$$ & $$\norm{\pi_s} (\alpha_s^r + \alpha_s^P \gamma \norm{v})$$\\\hline
 \textbf{Conjugate} $\Omega^*$ & $$\ln\left(\sum_{a\in\A}e^{q_s(a)}\right)$$ &  $$\ln\left(\sum_{a\in\A}d(a)e^{q_s(a)}\right)$$  & $$ \frac{1}{2} + \frac{1}{2}\sum_{a\in\mathfrak{A}} (q_s(a)^2 - \tau(q_s)^2)$$&  Not in closed-form\\ \hline
  \textbf{Gradient} $\nabla\Omega^*$ & $$\pi_s(a) = \frac{e^{q_s(a)}}{\sum_{b\in\A}e^{q_s(b)}}$$ & $$\pi_s(a)= \frac{e^{q_s(a)}}{\sum_{b\in\A}e^{q_s(b)}}$$ & $$\pi_s(a) = (q_s(a) - \tau(q_s))_+$$& Not in closed-form\\ \hline
  \textbf{Reward Uncertainty}& $(s,a)$-rectangular& $(s,a)$-rectangular& $(s,a)$-rectangular& $s$-rectangular\\ \cline{2-5}
     &  $\Rc_{s,a}^{\textsc{NS}}(\pi)= $ $\left[\ln\left(\frac{1}{\pi_s(a)}\right), +\infty\right)$
 &  $$\ln\left(d(a)\right) + \Rc_{s,a}^{\textsc{NS}}(\pi)$$ & $$ \left[\frac{1-\pi_s(a)}{2}, +\infty\right)$$  & $$\mathbf{B}_{\norm{\cdot}}({r_0}_s, \alpha_s^{r})$$\\ \hline
   \textbf{Transition Uncertainty}&$(s,a)$-rectangular& $(s,a)$-rectangular& $(s,a)$-rectangular& $s$-rectangular\\ \cline{2-5}
   & $$\{P_0(\cdot|s,a)\}$$ & $$\{P_0(\cdot|s,a)\}$$ & $$\{P_0(\cdot|s,a)\}$$  &  $$\mathbf{B}_{\norm{\cdot}}({P_0}_s, \alpha_s^{P})$$\\ \hline
\end{tabular}
\caption{Summary table of existing policy regularizers and generalization to our \rr function. }
    \label{tab: regularizers}
\end{table}

\underline{\textit{Negative Shannon entropy.}} Each $(s,a)$-reward uncertainty set is $\Rc_{s,a}^{\textsc{NS}}(\pi):= \left[ \ln\left(\nicefrac{1}{\pi_s(a)}\right), +\infty\right)$.
We compute the associated support function:
\begin{align}
\label{eq: shannon}
    \sigma_{\Rc_s^{\textsc{NS}}(\pi)}(-\pi_s) \nonumber
    &= \max_{r_s\in\Rc_s^{\textsc{NS}}(\pi)}\innorm{r_s, -\pi_s} \nonumber\\
    &= \max_{\substack{r(s,a'):  r(s,a')\in\Rc_{s,a'}^{\textsc{NS}}(\pi), a'\in\A}}
    \sum_{a\in\A} -r(s,a)\pi_s(a) \nonumber\\
    &= \max_{\substack{r(s,a'):  r(s,a')\geq  \ln\left(\nicefrac{1}{\pi_s(a)}\right), a'\in\A}}
    -\sum_{a\in\A} \pi_s(a)r(s,a) \nonumber\\
    &=  \sum_{a\in\A}\pi_s(a)\ln(\pi_s(a)),
\end{align}
where the last equality results from the fact that $\pi_s\geq 0, $ and $-r(s,a)\pi_s(a)$ is maximal when $r(s,a)$ is minimal. 
We thus obtain the negative Shannon entropy.

\underline{\textit{KL divergence.}} Similarly, given $d\in\Delta_{\A}$, let $\Rc_{s,a}^{\textsc{KL}}(\pi):= \ln(d(a)) + \Rc_{s,a}^{\textsc{NS}}(\pi)\quad\forall (s,a)\in\X$. Then
\begin{align*}
    \sigma_{\Rc_s^{\textsc{KL}}(\pi)}(-\pi_s) 
    &=\max_{\substack{r(s,a'):  r(s,a')\in\Rc_{s,a'}^{\textsc{KL}}(\pi), a'\in\A}} \sum_{a\in\A} -r(s,a)\pi_s(a)\\
    &=\max_{\substack{r(s,a')+ \ln(d(a)):\\  r(s,a')\in\Rc_{s,a'}^{\textsc{NS}}(\pi), a'\in\A}} \sum_{a\in\A} -r(s,a)\pi_s(a)\\
    &= \max_{\substack{r(s,a'): \\
    r(s,a')\in\Rc_{s,a'}^{\textsc{NS}}(\pi), a'\in\A}} \sum_{a\in\A} - (r(s,a) + \ln(d(a))\pi_s(a)\\
    &= \max_{\substack{r(s,a'): \\
    r(s,a')\in\Rc_{s,a'}^{\textsc{NS}}(\pi), a'\in\A}} \{-\sum_{a\in\A}\pi_s(a) r(s,a) \}-\sum_{a\in\A}\pi_s(a) \ln(d(a))\\
    &= \sum_{a\in\A}\pi_s(a)\ln(\pi_s(a)) - \sum_{a\in\A}\pi_s(a)\ln(d(a)),
\end{align*}
where the last equality uses Eq.~\eqref{eq: shannon}. 
We thus recover the  KL divergence $\Omega(\pi_s)= \sum_{a\in\A}\pi_s(a)\ln\left(\nicefrac{\pi_s(a)}{d(a)}\right)$.

\underline{\textit{Negative Tsallis entropy.}} Given $\Rc_{s,a}^{\textsc{T}}(\pi):= \left[\frac{1-\pi_s(a)}{2}, +\infty\right)\quad\forall (s,a)\in\X$, we compute:
\begin{align}
\label{eq: tsallis}
    \sigma_{\Rc_s^{\textsc{T}}(\pi)}(-\pi_s)&=  
    \max_{\substack{r(s,a'):  r(s,a')\in\Rc_{s,a'}^{\textsc{T}}(\pi), a'\in\A}} \sum_{a\in\A} -r(s,a)\pi_s(a) \nonumber\\
    &= \max_{\substack{r(s,a'):  r(s,a')\in \left[\frac{1-\pi_s(a')}{2}, +\infty\right), a'\in\A}} \sum_{a\in\A} -r(s,a)\pi_s(a)\nonumber\\
    &=  \sum_{a\in\A} -\frac{1-\pi_s(a)}{2}\pi_s(a)\\
    &= -\frac{1}{2}\sum_{a\in\A}\pi_s(a) + \frac{1}{2}\sum_{a\in\A}\pi_s(a)^2
    = -\frac{1}{2}+ \frac{1}{2}\norm{\pi_s}^2,\nonumber
\end{align}
where Eq.~\eqref{eq: tsallis} also comes from the fact that $\pi_s\geq 0, $ and $-r(s,a)\pi_s(a)$ is maximal when $r(s,a)$ is minimal. 
We thus obtain the negative Tsallis entropy $\Omega(\pi_s) = \frac{1}{2}(\norm{\pi_s}^2-1)$.

The reward uncertainty sets associated to both KL and Shannon entropy are similar, as the former amounts to translating the latter 
by a negative constant (translation to the left). As such, both yield reward values that can be either positive or negative. This is not the case of the negative Tsallis, as its minimal reward is $0$, attained for a deterministic action policy, \ie when $\pi_s(a) = 1$.

Table \ref{tab: regularizers} summarizes the properties of each regularizer. 
For the Tsallis entropy, we denote by $\tau: \R^{\A} \to \R$ the function $q_s\mapsto \frac{\sum_{a\in\mathfrak{A}(q_s)}q_s(a)-1}{\abs{\mathfrak{A}(q_s)}}$, where $\mathfrak{A}(q_s)\subseteq \A$ is a subset of actions:
$\mathfrak{A}(q_s) = \{a\in\A: 1 + i q_s(a_{(i)}) > \sum_{j=0}^{i}q_s(a_{(j)}), i \in \{1,\cdots, \abs{\A}\}\}$, and $a_{(i)} \text{ is the action with the } i\text{-th maximal value }$\citep{lee2018sparse}.

\subsection{Proof of Proposition~\ref{prop: policy gradient reward robust mdps}}
\label{apx: policy gradient}

\begin{proposition*}
Assume that $\Uc = \{P_0\} \times  (r_0+ \Rc)$ with $\Rc_s= \{r_s\in\R^{\A}: \norm{r_s}\leq \alpha_s^r\}$. Then, the gradient of the reward-robust objective $J_{\Uc}(\pi):= \innorm{v^{\pi, \Uc}, \mu_0}$ is given by
\begin{align*}
        \nabla J_{\Uc}(\pi) = \mathbb{E}_{(s,a)\sim \mu_{\pi}}\left[ \nabla\ln \pi_s(a)\left(q^{\pi, \Uc}(s,a) - \alpha_s^r\frac{\pi_s(a)}{\norm{\pi_s}}\right)\right],
\end{align*}
where $\mu_{\pi}$ is the occupancy measure under the nominal model $P_0$ and policy $\pi$. 
\end{proposition*}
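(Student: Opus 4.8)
The plan is to reduce the reward-robust objective to a regularized MDP and then invoke the known policy-gradient theorem for regularized MDPs \citep[Appx.~D.3]{geist2019theory}. By Corollary~\ref{cor: reg for ball reward uncertainty}, under the stated ball-constrained reward uncertainty $\Uc = \{P_0\}\times(r_0+\Rc)$ with $\Rc_s = \{r_s\in\R^{\A}:\norm{r_s}\leq\alpha_s^r\}$, the robust value function $v^{\pi,\Uc}$ coincides with the regularized value function $v^{\pi,\Omega}_{(P_0,r_0)}$ for the policy regularizer $\Omega_s(\pi_s) := \alpha_s^r\norm{\pi_s}$, which is convex (indeed, a norm rescaled by a nonnegative constant) and — on the simplex — can be treated in the strongly convex framework. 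Consequently $J_{\Uc}(\pi) = \innorm{v^{\pi,\Uc},\mu_0} = \innorm{v^{\pi,\Omega}_{(P_0,r_0)},\mu_0}$, so it suffices to differentiate the regularized objective.

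First I would recall the regularized policy-gradient identity: for a regularized MDP with reward $r$, transition $P_0$, and state-dependent regularizer $\Omega_s$, one has
\begin{align*}
    \nabla_\pi \innorm{v^{\pi,\Omega}_{(P_0,r)},\mu_0} = \mathbb{E}_{(s,a)\sim\mu_\pi}\left[\nabla\ln\pi_s(a)\, q^{\pi,\Omega}(s,a) - \nabla\Omega_s(\pi_s)\right],
\end{align*}
where $\mu_\pi$ is the occupancy measure under $P_0$ and $\pi$, and $q^{\pi,\Omega}$ is the regularized $q$-function (which here equals the robust $q$-function $q^{\pi,\Uc}$ since $P=P_0$ is fixed and only $r$ is regularized in expectation). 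The second step is to compute $\nabla_{\pi_s}\Omega_s(\pi_s) = \alpha_s^r\nabla_{\pi_s}\norm{\pi_s} = \alpha_s^r\,\pi_s/\norm{\pi_s}$, so the correction term in state $s$ is the vector $\bigl(\alpha_s^r\,\pi_s(a)/\norm{\pi_s}\bigr)_{a\in\A}$. The third step is a bookkeeping manipulation to rewrite this term in the same log-derivative form as the main term: using $\sum_{a}\pi_s(a)\nabla\ln\pi_s(a) = \nabla\sum_a\pi_s(a) = 0$ (or more directly, pulling the per-state gradient into the occupancy-measure expectation), one absorbs $\nabla\Omega_s(\pi_s)$ into $\mathbb{E}_{(s,a)\sim\mu_\pi}[\nabla\ln\pi_s(a)\,(-\alpha_s^r\pi_s(a)/\norm{\pi_s})]$, yielding exactly the claimed expression.

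The main obstacle is a technical one: $\norm{\pi_s}$ is not differentiable at $\pi_s = 0$, but on the probability simplex $\Delta_{\A}$ we always have $\norm{\pi_s}\geq 1/\sqrt{\abs{\A}} > 0$, so differentiability holds throughout the relevant domain and $\nabla\norm{\pi_s} = \pi_s/\norm{\pi_s}$ is well-defined. A secondary subtlety is justifying the interchange of the per-state gradient with the occupancy-measure sum when moving from $\sum_s\mu_\pi(s)\nabla\Omega_s(\pi_s)$ to the $\nabla\ln\pi_s(a)$ form; this is routine since $\mu_\pi$ depends on $\pi$ only through the on-policy visitation and the standard regularized policy-gradient derivation already accounts for it — one simply specializes the generic statement in \citep{geist2019theory} to $\Omega_s = \alpha_s^r\norm{\cdot}$. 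I would also remark, as the paper does in the surrounding text, that this derivation crucially uses $P = P_0$ fixed; the transition-uncertain case is genuinely harder because the regularizer then depends on $v$ itself, breaking the clean separation exploited here.
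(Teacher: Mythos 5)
Your proposal is correct and follows essentially the same route as the paper: the paper's appendix proves a slightly more general statement for the twice-regularized objective $\Omega_{v,\rop}(\pi_s)=\norm{\pi_s}(\alpha_s^r+\alpha_s^P\gamma\norm{v})$ by re-deriving the regularized policy-gradient recursion from the fixed-point equation and then sets $\alpha_s^P=0$, which is exactly your reduction to a regularized MDP with $\Omega_s(\pi_s)=\alpha_s^r\norm{\pi_s}$ followed by the regularized policy-gradient theorem of \citet{geist2019theory} and the computation $\nabla\Omega_s(\pi_s)=\alpha_s^r\pi_s/\norm{\pi_s}$. One cosmetic quibble: the absorption of $\nabla\Omega_s(\pi_s)$ into the log-derivative form rests on the chain rule $\nabla\Omega_s(\pi_s)=\sum_{a}\frac{\partial\Omega_s}{\partial\pi_s(a)}\nabla\pi_s(a)$ combined with $\nabla\pi_s(a)=\pi_s(a)\nabla\ln\pi_s(a)$, not on the zero-mean identity $\sum_a\pi_s(a)\nabla\ln\pi_s(a)=0$ you cite first --- though your parenthetical alternative ("pulling the per-state gradient into the occupancy-measure expectation") is the correct mechanism.
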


We prove the following more general result. To establish Prop.~\ref{prop: policy gradient reward robust mdps}, we then set $\alpha_s^P = 0$ and apply Thm.~\ref{thm: transition robust -- reg} to replace $v^{\pi, \rop} = v_{\pi, \Uc}$ and $q^{\pi, \rop} = q^{\pi, \Uc}$.

\begin{theorem*}
Set $\Omega_{v, \rop}(\pi_s):= \norm{\pi_s} (\alpha_s^r + \alpha_s^P \gamma \norm{v})$. Then, the gradient of the \rr objective $J_\rop(\pi):= \innorm{v_{\pi, \rop}, \mu_0}$ is given by
\begin{align*}
    \nabla J_\rop(\pi) = \mathbb{E}_{s\sim d_{\mu_0,\pi}}\left[ \sum_{a\in\A} \pi_s(a)\nabla\ln \pi_s(a)q^{\pi, \rop}(s,a) - \nabla\Omega_{v, \rop}(\pi_s)\right],    
\end{align*}
where $d_{\mu_0,\pi}:=\mu_0^{\top}(\mathbf{I}_{\St} - \gamma P_0^{\pi})^{-1}$, with $\mu_0\in\R^{\St\times 1}$ the initial state distribution.
\end{theorem*}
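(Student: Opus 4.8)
The plan is to run the classical policy-gradient argument, but on the $\rop$-regularized fixed-point equation rather than a plain Bellman equation. Write $\nabla$ for the gradient with respect to the policy parameters $\theta$ (assuming a differentiable parametrization $\pi=\pi_\theta$), and abbreviate $v:=v_{\pi,\rop}$, $q:=q^{\pi,\rop}$. The starting point is Def.~\ref{def: rr value fcs}, which unfolds state-wise as
\[
v(s)=\sum_{a\in\A}\pi_s(a)\,q(s,a)-\Omega_{v,\rop}(\pi_s),\qquad q(s,a)=r_0(s,a)+\gamma\innorm{P_0(\cdot|s,a),v}.
\]
First I would record that $v$ is differentiable in $\theta$: at the fixed point, the Jacobian in $v$ of the map $v\mapsto v-T^{\pi,\rop}v$ is invertible because $T^{\pi,\rop}$ is a contraction (Prop.~\ref{prop: r2 bellman evaluation operator}, under Asm.~\ref{asm: bound alpha}), so the implicit function theorem applies, $\Omega_{v,\rop}$ being smooth away from $v=0$ and $\pi_s=0$.

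Next I would differentiate the displayed identity. Since $r_0(s,a)$ is parameter-free, $\nabla q(s,a)=\gamma\innorm{P_0(\cdot|s,a),\nabla v}$, and the log-derivative identity $\nabla\pi_s(a)=\pi_s(a)\nabla\ln\pi_s(a)$ gives
\[
\nabla v(s)=\underbrace{\sum_{a\in\A}\pi_s(a)\nabla\ln\pi_s(a)\,q(s,a)-\nabla\Omega_{v,\rop}(\pi_s)}_{=:\phi(s)}+\gamma\sum_{a\in\A}\pi_s(a)\innorm{P_0(\cdot|s,a),\nabla v},
\]
that is, $\nabla v=\phi+\gamma P_0^\pi\nabla v$ in vector form, with $P_0^\pi$ the linear policy-transition operator of Sec.~\ref{sec: discounted mdps}. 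As $\gamma\in(0,1)$ and $P_0^\pi$ is row-stochastic, $\mathbf{I}_{\St}-\gamma P_0^\pi$ is invertible, hence $\nabla v=(\mathbf{I}_{\St}-\gamma P_0^\pi)^{-1}\phi$. Taking the inner product with $\mu_0$ and recognizing $d_{\mu_0,\pi}=\mu_0^\top(\mathbf{I}_{\St}-\gamma P_0^\pi)^{-1}$ yields $\nabla J_\rop(\pi)=\innorm{\nabla v,\mu_0}=\sum_s d_{\mu_0,\pi}(s)\,\phi(s)$, which is the asserted identity.

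To obtain Prop.~\ref{prop: policy gradient reward robust mdps} I would set $\alpha_s^P=0$. Then $\Omega_{v,\rop}(\pi_s)=\alpha_s^r\norm{\pi_s}$ no longer depends on $v$, so $\nabla\Omega_{v,\rop}(\pi_s)=\alpha_s^r\sum_a\frac{\pi_s(a)}{\norm{\pi_s}}\nabla\pi_s(a)=\sum_a\pi_s(a)\nabla\ln\pi_s(a)\,\alpha_s^r\frac{\pi_s(a)}{\norm{\pi_s}}$, and Thm.~\ref{thm: transition robust -- reg} with $\Pc=\{0\}$ identifies $v_{\pi,\rop}=v^{\pi,\Uc}$ and $q^{\pi,\rop}=q^{\pi,\Uc}$ for $\Uc=\{P_0\}\times(r_0+\Rc)$. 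Substituting into the general formula and rewriting $\sum_s d_{\mu_0,\pi}(s)\sum_a\pi_s(a)(\cdot)$ as $\mathbb{E}_{(s,a)\sim\mu_\pi}[\cdot]$ through $\mu_\pi(s,a)=d_{\mu_0,\pi}(s)\pi_s(a)$ gives precisely the stated expression.

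The delicate point is the $v$-dependence of the regularizer in the general theorem: because $\Omega_{v,\rop}(\pi_s)$ carries the scalar $\norm{v_{\pi,\rop}}$, the term $\nabla\Omega_{v,\rop}(\pi_s)$ inside $\phi(s)$ secretly contains $\nabla\norm{v_{\pi,\rop}}=\innorm{v_{\pi,\rop},\nabla v_{\pi,\rop}}/\norm{v_{\pi,\rop}}$, so, unlike the plain regularized case, the identity is only implicit in $\nabla v_{\pi,\rop}$ unless one agrees to stop unrolling at $\nabla\Omega$; I would therefore state explicitly that $\nabla\Omega_{v,\rop}(\pi_s)$ denotes the total $\theta$-derivative of $\pi_s\mapsto\Omega_{v_{\pi,\rop},\rop}(\pi_s)$, and note that for Prop.~\ref{prop: policy gradient reward robust mdps} ($\alpha_s^P=0$) this subtlety disappears and the expression is fully explicit. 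The remaining work — the Neumann inversion of $\mathbf{I}_{\St}-\gamma P_0^\pi$ and the bookkeeping turning double sums over $(s,a)$ into expectations under $\mu_\pi$ — is routine.
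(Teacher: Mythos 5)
Your proposal is correct and follows essentially the same route as the paper: differentiate the \rop-fixed-point identity, use $\nabla\pi_s=\pi_s\nabla\ln\pi_s$, recognize the resulting linear Bellman-type equation for $\nabla v^{\pi,\rop}$, invert $\mathbf{I}_{\St}-\gamma P_0^{\pi}$, and pair with $\mu_0$ to surface $d_{\mu_0,\pi}$. The only additions on your side — the implicit-function-theorem justification of differentiability and the explicit warning that $\nabla\Omega_{v,\rop}(\pi_s)$ secretly contains $\nabla v^{\pi,\rop}$ so the formula is only implicit unless $\alpha_s^P=0$ — are welcome refinements of points the paper either omits or only raises in the discussion following its proof.
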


\begin{proof}
By linearity of the gradient operator, $\nabla J_\rop(\pi) = \innorm{\nabla v^{\pi, \rop}, \mu_0}.$ We thus need to compute $\nabla v^{\pi, \rop}.$ Using the fixed point property of $v^{\pi, \rop}$ w.r.t. the \rr Bellman operator yields:
\begin{align*}
    &\nabla v^{\pi, \rop}(s)\\
    =& \nabla \left(r_0^{\pi}(s) + \gamma P_0^{\pi}v^{\pi, \rop}(s) - \Omega_{v, \rop}(\pi_s)\right)\\
    =& \nabla \left(\sum_{a\in\A}\pi_s(a)(r_0(s, a) + \gamma \innorm{P_0(\cdot| s,a), v^{\pi, \rop}}) - \Omega_{v, \rop}(\pi_s)\right)\\
    =& \sum_{a\in\A}\nabla \pi_s(a) \left(r_0(s, a) + \gamma \innorm{P_0(\cdot| s,a), v^{\pi, \rop}}\right)\\
    &\qquad + \gamma \sum_{a\in\A}\pi_s(a)\innorm{P_0(\cdot| s,a), \nabla v^{\pi, \rop}} - \nabla\Omega_{v, \rop}(\pi_s) \quad[\text{Linearity of gradient \& product rule}]\\
    =& \sum_{a\in\A}\nabla \pi_s(a)q^{\pi, \rop}(s,a)+ \gamma \sum_{a\in\A}\pi_s(a)\innorm{P_0(\cdot| s,a), \nabla v^{\pi, \rop}} - \nabla\Omega_{v, \rop}(\pi_s) \\
    &\qquad\qquad\qquad\qquad\qquad\qquad\qquad\qquad\qquad\qquad\quad[q^{\pi, \rop}(s,a) = r_0(s, a) + \gamma \innorm{P_0(\cdot| s,a), v^{\pi, \rop}}]\\
    =& \sum_{a\in\A} \pi_s(a)( \nabla\ln \pi_s(a)q^{\pi, \rop}(s,a)+ \gamma \innorm{P_0(\cdot| s,a), \nabla v^{\pi, \rop}}) - \nabla\Omega_{v, \rop}(\pi_s)
    \quad[\nabla\pi_s = \pi_s\nabla\ln(\pi_s)]\\
    =& \sum_{a\in\A} \pi_s(a)( \nabla\ln \pi_s(a)q^{\pi, \rop}(s,a) - \nabla\Omega_{v, \rop}(\pi_s) + \gamma \innorm{P_0(\cdot| s,a), \nabla v^{\pi, \rop}}). 
\end{align*}
Thus, the components of $\nabla v^{\pi, \rop}$ are the non-regularized value functions corresponding to the modified reward $R(s,a):=  \nabla\ln \pi_s(a)q^{\pi, \rop}(s,a) - \nabla\Omega_{v, \rop}(\pi_s)$. By the fixed point property of the standard Bellman operator, it results that:
\begin{align*}
   \nabla v^{\pi, \rop}(s) &=  (\mathbf{I}_{\St} - \gamma P_0^{\pi})^{-1}
   \left(\sum_{a\in\A}\pi_{\cdot}(a)(\nabla\ln \pi_{\cdot}(a)q^{\pi, \rop}({\cdot},a) - \nabla\Omega_{v, \rop}(\pi_{\cdot}))\right)(s)
\end{align*}
and
\begin{align*}
    \nabla J_\rop(\pi) &= \sum_{s\in\St}\mu_0(s)\nabla v^{\pi, \rop}(s)\\
    &= \sum_{s\in\St}\mu_0(s)(\mathbf{I}_{\St} - \gamma P_0^{\pi})^{-1}
   \left(\sum_{a\in\A}\pi_{\cdot}(a)(\nabla\ln \pi_{\cdot}(a)q^{\pi, \rop}({\cdot},a) - \nabla\Omega_{v, \rop}(\pi_{\cdot}))\right)(s)\\
    &= \sum_{s\in\St} d_{\mu_0, \pi}(s)\left(\sum_{a\in\A} \pi_s(a)\nabla\ln \pi_s(a)q^{\pi, \rop}(s,a) - \nabla\Omega_{v, \rop}(\pi_s)\right),
\end{align*}
by definition of $d_{\mu_0, \pi}$.
\end{proof}
The subtraction by $\nabla\Omega_{v, \rop}(\pi_s)$ also appears in \citep{geist2019theory}. However, here, the gradient includes partial derivatives that depend on both the policy and the value itself. Let's try to compute the gradient of the double regularizer $\Omega_{v, \rop}(\pi_s) = \norm{\pi_s} (\alpha_s^r + \alpha_s^P \gamma \norm{v^{\pi, \rop}})$. By the chain-rule we have that:
\begin{align*}
   \nabla\Omega_{v, \rop}(\pi_s) &= \sum_{a\in\A}\frac{\partial \Omega_{v, \rop}}{\partial \pi_s(a)}\nabla\pi_s(a)
   + \sum_{s\in\St} \frac{\partial \Omega_{v, \rop}}{\partial v^{\pi, \rop}(s)}\nabla v^{\pi, \rop}(s)\\
   &=\sum_{a\in\A}(\alpha_s^r + \alpha_s^P \gamma \norm{v^{\pi, \rop}})\frac{\pi_s(a)}{\norm{\pi_s}}\nabla\pi_s(a)
   + \sum_{s\in\St} \alpha_s^P \gamma\norm{\pi_s} \frac{v^{\pi, \rop}(s)}{\norm{v^{\pi, \rop}}} \nabla v^{\pi, \rop}(s).
   % &= \sum_{a\in\A}\pi_s(a) \left(\frac{\alpha_s^r + \alpha_s^P \gamma \norm{v^{\pi, \rop}}}{\norm{\pi_s}}\nabla\pi_s(a)
   % + \sum_{s\in\St} \alpha_s^P \gamma\norm{\pi_s} \frac{q^{\pi, \rop}(s,a)}{\norm{v^{\pi, \rop}}} \nabla v^{\pi, \rop}(s)\right).
\end{align*}
We remark here an interdependence between $\nabla\Omega_{v, \rop}(\pi_s)$ and $\nabla v^{\pi, \rop}(s)$: computing the gradient $\nabla\Omega_{v, \rop}(\pi_s)$ requires to know $\nabla v^{\pi, \rop}(s)$ and vice versa. There may be a recursion that still enables to compute these gradients, which we leave for future work.

\section{General robust MDPs}
\subsection{Proof of Theorem \ref{thm: transition robust -- reg}}
\label{apx: transition robust}

\begin{theorem*}[General robust MDP]
Assume that $\Uc =   (P_0+ \Pc)\times(r_0+ \Rc)$. Then, for any policy $\pi\in\Delta_{\A}^{\St}$, the robust value function $v^{ \pi, \Uc}$ is the optimal solution of the convex optimization problem:
\begin{align*}
        \max_{v\in\R^{\St}} \innorm{v, \mu_0}  \text{ s. t. } v(s)\leq T_{(P_0,r_0)}^{\pi}v(s)  -\sigma_{\Rc_s}(-\pi_s) -\sigma_{\Pc_s}(-\gamma v\cdot\pi_s) \text{ for all } s\in\St,   
\end{align*}
where $[v\cdot\pi_s](s',a):=v(s')\pi_s(a)\quad \forall (s',a)\in\X$.
\end{theorem*}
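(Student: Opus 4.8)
The plan is to start from Proposition~\ref{prop: iyengar}, which already expresses $v^{\pi,\Uc}$ as the optimal solution of the robust optimization problem \eqref{eq: iyengar ro primal}; the only work left is to rewrite the robust feasibility constraint ``$v\le T_{(P,r)}^{\pi}v$ for all $(P,r)\in\Uc$'' as the single value-dependent regularized constraint appearing in the statement. To this end I would, for each fixed $v\in\R^{\St}$ and $s\in\St$, introduce the \emph{robust counterpart} $F(s):=\max_{(P,r)\in\Uc}\{v(s)-r^{\pi}(s)-\gamma P^{\pi}v(s)\}$, so that $v$ is robust feasible if and only if $F(s)\le 0$ for all $s$, exactly as in the proof of Theorem~\ref{thm: reward robust -- reg} in Appx.~\ref{apx: reward robust}.

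Next I would exploit the $s$-rectangular, nominal-centered structure $\Uc=(P_0+\Pc)\times(r_0+\Rc)$ with $\Pc=\times_{s}\Pc_s$, $\Rc=\times_{s}\Rc_s$. Writing $r=r_0+\bar r$ and $P=P_0+\bar P$, the quantity inside the max at state $s$ depends only on the ``blocks'' $(\bar P_s,\bar r_s)\in\Pc_s\times\Rc_s$, and $\bar r$ and $\bar P$ enter additively, so the joint maximization factorizes:
\begin{align*}
F(s)=v(s)-T_{(P_0,r_0)}^{\pi}v(s)+\max_{\bar r_s\in\Rc_s}\innorm{-\pi_s,\bar r_s}+\max_{\bar P_s\in\Pc_s}\innorm{-\gamma\, v\cdot\pi_s,\bar P_s},
\end{align*}
after identifying $r^{\pi}(s)=r_0^{\pi}(s)+\innorm{\bar r_s,\pi_s}$ and $\gamma P^{\pi}v(s)=\gamma P_0^{\pi}v(s)+\gamma\innorm{\bar P_s,v\cdot\pi_s}$, where $[v\cdot\pi_s](s',a)=v(s')\pi_s(a)$ flattens the transition slice so that $P^{\pi}v(s)=\innorm{P_s,v\cdot\pi_s}$. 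By the very definition of a support function (Sec.~\ref{sec: notations}), the two maxima are $\sigma_{\Rc_s}(-\pi_s)$ and $\sigma_{\Pc_s}(-\gamma v\cdot\pi_s)$, so the constraint $F(s)\le 0$ is precisely $v(s)\le T_{(P_0,r_0)}^{\pi}v(s)-\sigma_{\Rc_s}(-\pi_s)-\sigma_{\Pc_s}(-\gamma v\cdot\pi_s)$. Plugging this back into \eqref{eq: iyengar ro primal} gives the claimed problem, and it is convex: the objective is linear, and each constraint's left-hand side $v(s)-T_{(P_0,r_0)}^{\pi}v(s)+\sigma_{\Rc_s}(-\pi_s)+\sigma_{\Pc_s}(-\gamma v\cdot\pi_s)$ is convex in $v$, being affine plus a support function precomposed with the affine map $v\mapsto -\gamma v\cdot\pi_s$. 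Alternatively, to stay uniform with Appx.~\ref{apx: reward robust}, one can obtain each support function via Fenchel--Rockafellar duality by rewriting the inner maximizations through the characteristic functions $\delta_{\Rc_s}$, $\delta_{\Pc_s}$; this is the write-up I would actually present.

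The main obstacle is bookkeeping rather than a genuine difficulty: carefully flattening the transition kernel so $P^{\pi}v(s)=\innorm{P_s,v\cdot\pi_s}$ over $\X$, keeping the nominal shift $P=P_0+\bar P$, $r=r_0+\bar r$ straight through both terms, and verifying that $s$-rectangularity really does let the state-$s$ constraint ``see'' only $(\Pc_s,\Rc_s)$ (this is where the factorization of the max is legitimate). One should also note—or assume—that $P_0+\Pc$ remains inside $\Delta_{\St}^{\X}$ so that $\Uc$ is a legitimate transition-uncertainty set; this is orthogonal to the algebra. Everything else, namely convexity of the feasible region and the transfer of optimality from Proposition~\ref{prop: iyengar}, is then immediate, and the corollaries for ball-constrained sets follow by evaluating $\sigma_{\Pc_s},\sigma_{\Rc_s}$ via dual norms.
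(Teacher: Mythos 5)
Your proposal is correct and follows essentially the same route as the paper's proof in Appx.~\ref{apx: transition robust}: form the robust counterpart $F(s)$, use the nominal-centered $s$-rectangular structure to split the maximization over $\bar r_s$ and $\bar P_s$, and identify the resulting maxima as $\sigma_{\Rc_s}(-\pi_s)$ and $\sigma_{\Pc_s}(-\gamma v\cdot\pi_s)$ via the flattening $P^{\pi}v(s)=\innorm{P_s,v\cdot\pi_s}$. Your observation that the two maxima are support functions \emph{by definition} is a legitimate shortcut past the Fenchel--Rockafellar step the paper uses (which only re-derives the support function as the conjugate of the characteristic function), so either write-up is fine.
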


\begin{proof}
The robust value function $v^{ \pi, \Uc}$ is the optimal solution of:
\begin{align}
\label{eq: robust counterpart transition reward}
    \max_{v\in\R^{\St}} \innorm{v, \mu_0}  \text{ s. t. } F(s)\leq 0 \text{ for all }   s\in\St, 
\end{align}
where $F(s):= \max_{(P, r)\in\Uc}\left\{v(s) - r^{\pi}(s) - \gamma P^{\pi}v(s) \right\}$ is the robust counterpart of \eqref{eq: iyengar ro primal} at $s\in\St$. Let's compute it based on the structure of the uncertainty set  $\Uc =   (P_0+ \Pc)\times(r_0+ \Rc)$: 
\begin{align*}
    F(s)&= \max_{(P',r')\in (P_0+ \Pc)\times(r_0+ \Rc)}\left\{v(s) - r'^{\pi}(s) - \gamma P'^{\pi}v(s) \right\} \\
    &= \max_{\substack{P': P' = P_0 + P, P\in \Pc \\ r': r' = r_0 + r, r\in \Rc}} \left\{v(s) - r'^{\pi}(s) - \gamma P'^{\pi}v(s) \right\}\\
    &= \max_{P\in \Pc , r\in \Rc}  \left\{v(s) - (r_0^{\pi}(s) +r^{\pi}(s)) - \gamma (P_0^{\pi} + P^{\pi} )v(s) \right\} \quad[(P_0 + P)^{\pi} = P_0^{\pi} + P^{\pi},\\
    &\qquad\qquad\qquad\qquad\qquad\qquad\qquad\qquad\qquad\qquad\qquad\qquad\qquad(r_0+r)^{\pi} = r_0^{\pi}+r^{\pi}]\\
    &= \max_{P\in \Pc, r\in\Rc} \left\{v(s) - r_0^{\pi}(s) -r^{\pi}(s) - \gamma P_0^{\pi}v(s) -\gamma P^{\pi} v(s) \right\}\\
    &= \max_{P\in \Pc, r\in\Rc} \left\{v(s) - T_{(P_0,r_0)}^{\pi}v(s)-r^{\pi}(s) -\gamma P^{\pi} v(s) \right\}\quad[T_{(P_0,r_0)}^{\pi}v(s)  = r_0^{\pi}(s) +  \gamma P_0^{\pi}v(s)]\\
    &= \max_{P\in \Pc}\{-\gamma P^{\pi} v(s)\} + \max_{r\in \Rc}\{-r^{\pi}(s)\} +v(s) - T_{(P_0,r_0)}^{\pi}v(s) \\
    &= -\min_{P\in \Pc}\{\gamma P^{\pi} v(s) \}- \min_{r\in \Rc}\{r^{\pi}(s)\}+v(s) - T_{(P_0,r_0)}^{\pi}v(s)\\
    &= -\min_{P\in \R^{\X\times\St}}\{\gamma P^{\pi} v(s) + \delta_{\Pc}(P)\} - \min_{r\in \R^{\X}}\{r^{\pi}(s) + \delta_{\Rc}(r)\}\\
    &\qquad+v(s) - T_{(P_0,r_0)}^{\pi}v(s)\\
    &= -\min_{P\in \R^{\X\times\St}}\{\gamma \innorm{P^{\pi}_s, v} + \delta_{\Pc}(P)  \}
    -\min_{r\in \R^{\X}}\{ \innorm{r_s, \pi_s} +\delta_{\Rc}(r)\}\\
    &\qquad+v(s) - T_{(P_0,r_0)}^{\pi}v(s).\qquad\qquad\qquad\qquad\qquad[P^{\pi} v(s) = \innorm{P^{\pi}_s, v}, r^{\pi}(s) = \innorm{r_s, \pi_s}]
\end{align*}
As shown in the proof of Thm.~\ref{thm: reward robust -- reg}, $\min_{r\in \R^{\X}}\{\innorm{r_s, \pi_s} +\delta_{\Rc}(r)\} = \min_{r_s\in \R^{\A}}\{\innorm{r_s, \pi_s} + \delta_{\Rc_s}(r_s)\}$ thanks to the rectangularity assumption. Similarly, by rectangularity of the transition uncertainty set, for all $P:= (P_s)_{s\in\St}\in\R^{\X}$, we have
$\delta_{\Pc}(P) = \sum_{s'\in\St}\delta_{\Pc_{s'}}(P_{s'})$. As such, 
\begin{align*}
    \min_{P\in \R^{\X\times\St}}\{\gamma \innorm{P^{\pi}_s, v} + \delta_{\Pc}(P)  \}
    &= \min_{P\in \R^{\X\times\St}}\{ \gamma \innorm{P^{\pi}_s, v} +  \sum_{s'\in\St}\delta_{\Pc_{s'}}(P_{s'})\} \\
    &= \min_{P_s\in \R^{\X}}\{ \gamma \innorm{P^{\pi}_s, v} +  \delta_{\Pc_{s}}(P_{s})\},
\end{align*}
where the last equality holds since the objective function is minimal if and only if $P_s\in\Pc_s$. 
Finally,
\begin{align*}
    F(s) &= - \min_{P_s\in \R^{\X}}\{ \gamma \innorm{P^{\pi}_s, v} +  \delta_{\Pc_{s}}(P_{s})\} - \min_{r_s\in \R^{\A}}\{\innorm{r_s, \pi_s} + \delta_{\Rc_s}(r_s)\}+v(s) - T_{(P_0,r_0)}^{\pi}v(s).
\end{align*}
Referring to the proof of Thm.~\ref{thm: reward robust -- reg}, we know that
$-\min_{r\in \R^{\X}}\{ \innorm{r_s, \pi_s} +\delta_{\Rc}(r)\} = \sigma_{\Rc_s}(-\pi_s)$, so 
\begin{align*}
    F(s) &= - \min_{P_s\in \R^{\X}}\{ \gamma \innorm{P^{\pi}_s, v} +  \delta_{\Pc_{s}}(P_{s})\} + \sigma_{\Rc_s}(-\pi_s)+v(s) - T_{(P_0,r_0)}^{\pi}v(s).
\end{align*}
Let the matrix $v\cdot\pi_s\in\R^{\X}$ defined as
$[v\cdot\pi_s](s',a):=v(s')\pi_s(a)$ for all $(s',a)\in\X$. Further define
 $\varphi(P_s) := \gamma \innorm{P^{\pi}_s, v}$, which we can rewrite as
$\varphi( P_s) =  \gamma\innorm{P_s, v\cdot\pi_s}.$
Then, we have that:
\begin{align*}
    \min_{P_s\in \R^{\X}}\{ \gamma \innorm{P^{\pi}_s, v} +  \delta_{\Pc_{s}}(P_{s})\} = \min_{P_s\in \R^{\X}}\{ \varphi( P_s) + \delta_{\Pc_s}(P_s)\} = -\min_{\B\in\R^{\X}}\{ \varphi^*( -\B) + \sigma_{\Pc_s}(\B)\}, 
\end{align*}
where the last equality results from Fenchel-Rockafellar duality and the fact that $(\delta_{\Pc_s})^* = \sigma_{\Pc_s}$.
It thus remains to compute the convex conjugate of $\varphi$:
\begin{align*}
   \varphi^*(-\B) &= \max_{P_s\in\R^{\X}} \left\{\innorm{P_s, -\B} - \varphi( P_s) \right\}\\
   &=  \max_{P_s\in\R^{\X}} \left\{\innorm{P_s,- \B} -\gamma\innorm{P_s, v\cdot\pi_s}\right\}\\
   &=\max_{P_s\in\R^{\X}}\innorm{P_s, -\B - \gamma v\cdot\pi_s} \\
   &= \begin{cases}
   0 \text{ if } -\B - \gamma v\cdot\pi_s = 0 \\
   +\infty \text{ otherwise},
   \end{cases}
\end{align*}
which yields $\min_{\B\in\R^{\X}}\{ \varphi^*( -\B) + \sigma_{\Pc_s}(\B)\} = \sigma_{\Pc_s}(- \gamma v\cdot\pi_s)$. 
Finally, the robust counterpart rewrites as: $F(s) = \sigma_{\Pc_s}(-\gamma v\cdot\pi_s)+ \sigma_{\Rc_s}(-\pi_s) + v(s) - T_{(P_0,r_0)}^{\pi}v(s),$
and plugging it into the optimization problem~\eqref{eq: robust counterpart transition reward} yields the desired result.
\end{proof}

\subsection{Proof of Corollary \ref{cor: transition robust mdp -- reg}}
\label{apx: cor transition robust}

\begin{corollary*}
Assume that $\Uc =   (P_0+ \Pc)\times(r_0+ \Rc)$ with $\Pc_s:= \{P_s\in\R^{\X}: \norm{P_s}\leq \alpha_s^P\}$ and $\Rc_s:= \{r_s\in\R^{\A}: \norm{r_s}\leq \alpha_s^r\}$ for all $s\in\St$. Then, the robust value function $v^{ \pi, \Uc}$ is the optimal solution of the convex optimization problem:
\begin{align*}
    \max_{v\in\R^{\St}} \innorm{v, \mu_0}  \text{ s. t. } v(s)\leq T_{(P_0,r_0)}^{\pi}v(s) -\alpha_s^r\norm{\pi_s} - \alpha_s^P\gamma \norm{v} \norm{\pi_s}  \text{ for all } s\in\St. 
\end{align*}
\end{corollary*}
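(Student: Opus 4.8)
The plan is to apply Theorem~\ref{thm: transition robust -- reg} verbatim and then specialize the two support functions appearing in its constraint, $\sigma_{\Rc_s}(-\pi_s)$ and $\sigma_{\Pc_s}(-\gamma v\cdot\pi_s)$, to the case where $\Rc_s$ and $\Pc_s$ are balls. Since the only difference between the program of Theorem~\ref{thm: transition robust -- reg} and the claimed one lies in these two terms, the whole corollary reduces to a pair of dual-norm evaluations; no new optimality or fixed-point argument is needed.

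First I would treat the reward term exactly as in the proof of Corollary~\ref{cor: reg for ball reward uncertainty}: by definition of the support function and of the dual norm,
\begin{align*}
\sigma_{\Rc_s}(-\pi_s) = \max_{r_s\in\R^{\A}:\,\norm{r_s}\leq \alpha_s^r}\innorm{r_s, -\pi_s} = \alpha_s^r\norm{-\pi_s}_* = \alpha_s^r\norm{\pi_s},
\end{align*}
using that $\ell_2$ is self-dual. For the transition term, the same identity gives $\sigma_{\Pc_s}(-\gamma v\cdot\pi_s) = \alpha_s^P\,\norm{-\gamma v\cdot\pi_s} = \alpha_s^P\gamma\norm{v\cdot\pi_s}$, the norm now being over $\R^{\X}$. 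The one step with content is to factor $\norm{v\cdot\pi_s}$: since $[v\cdot\pi_s](s',a) = v(s')\pi_s(a)$,
\begin{align*}
\norm{v\cdot\pi_s}^2 = \sum_{s'\in\St}\sum_{a\in\A} v(s')^2\pi_s(a)^2 = \Big(\sum_{s'\in\St}v(s')^2\Big)\Big(\sum_{a\in\A}\pi_s(a)^2\Big) = \norm{v}^2\norm{\pi_s}^2,
\end{align*}
so $\norm{v\cdot\pi_s} = \norm{v}\norm{\pi_s}$. Plugging $\sigma_{\Rc_s}(-\pi_s) = \alpha_s^r\norm{\pi_s}$ and $\sigma_{\Pc_s}(-\gamma v\cdot\pi_s) = \alpha_s^P\gamma\norm{v}\norm{\pi_s}$ into the constraint of Theorem~\ref{thm: transition robust -- reg} gives exactly Eq.~\eqref{eq: co norm robust}.

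The main (really the only) obstacle is this factorization of the norm of the rank-one object $v\cdot\pi_s$. For the $\ell_2$-norm it is the elementary computation above. For the arbitrary-norm variant noted just after the corollary, one instead invokes multiplicativity of $\ell_p$-norms on tensor/outer products, $\norm{v\cdot\pi_s}_{*} = \norm{v}_{*}\norm{\pi_s}_{*}$ where $\norm{\cdot}_{*}$ is the dual of the norm defining $\Pc_s$; this is precisely why a dual norm $\norm{\cdot}_*$ surfaces in that generalization, and why one may even take different norms for $\Rc_s$ and $\Pc_s$. Everything else is a direct specialization of Theorem~\ref{thm: transition robust -- reg}, so the proof amounts to the two support-function computations followed by the substitution.
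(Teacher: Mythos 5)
Your proposal is correct and follows essentially the same route as the paper's proof: invoke Theorem~\ref{thm: transition robust -- reg}, evaluate each support function of a ball via the dual-norm identity, and factor $\norm{v\cdot\pi_s} = \norm{v}\norm{\pi_s}$ by the same elementary rank-one computation. The only (inessential) quibble concerns your aside on the arbitrary-norm variant: the paper deliberately leaves $\norm{-\gamma v\cdot\pi_s}_{b^*}$ unfactored there, since multiplicativity on outer products holds for $\ell_p$-norms but not for an arbitrary norm, and the dual norm appears because of the support function of the ball rather than because of any such factorization.
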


\begin{proof}
As we already showed in Cor.~\ref{cor: reg for ball reward uncertainty}, the support function of the reward uncertainty set is
$\sigma_{\Rc_s}(-\pi_s)  = \alpha_s^r \norm{\pi_s}.$ For the transition uncertainty set, we similarly have:
\begin{align*}
    \sigma_{\Pc_s}(-\gamma v\cdot\pi_s) &= \max_{\substack{P_s\in\R^{\X}: \\\norm{P_s}\leq \alpha_s^P}}\innorm{ P_s,-\gamma v\cdot\pi_s}\\
    &= \alpha_s^P \norm{-\gamma v\cdot\pi_s}\\
    &= \alpha_s^P \gamma \norm{ v\cdot\pi_s}\\
    &= \alpha_s^P \gamma \norm{ v}\norm{\pi_s}. &[\norm{ v\cdot\pi_s}^2 = \sum_{(s',a)\in\X}(v(s')\pi_s(a))^2 \\
    & &= \sum_{s'\in\St}v(s')^2\sum_{a\in\A}\pi_s(a)^2 = \norm{ v}^2\norm{\pi_s}^2]
\end{align*}
Now we apply Thm.~\ref{thm: reward robust -- reg} and replace each support function by their explicit form to get that the robust value function $v^{ \pi, \Uc}$ is the optimal solution of:
\begin{align*}
    \max_{v\in\R^{\St}} \innorm{v, \mu_0}  \text{ s. t. } v(s)\leq T_{(P_0,r_0)}^{\pi}v(s) - \alpha_s^r\norm{\pi_s} - \alpha_s^P\norm{\pi_s} \cdot \gamma \norm{v} \text{ for all } s\in\St.
\end{align*}
\textit{\underline{Ball-constraints with arbitrary norms.} } As seen in the proof of Thm.~\ref{thm: reward robust -- reg} and Cor.~\ref{cor: reg for ball reward uncertainty}, 
for ball-constrained rewards defined with an arbitrary norm $\norm{\cdot}_a$ of dual $\norm{\cdot}_{a^*}$, the corresponding support function is $\sigma_{\Rc_s}(-\pi_s) = \alpha_s^r \norm{\pi_s}_{a^*}.$ Similarly, for ball-constrained transitions based on a norm $\norm{\cdot}_b$ of dual $\norm{\cdot}_{b^*}$, we have:
\begin{align*}
    \sigma_{\Pc_s}(-\gamma v\cdot\pi_s) &= \max_{\substack{P_s\in\R^{\X}: \\\norm{P_s}_b\leq \alpha_s^P}}\innorm{ P_s,-\gamma v\cdot\pi_s}= \alpha_s^P \norm{-\gamma v\cdot\pi_s}_{b^*},
\end{align*}
in which case the robust value function $v^{ \pi, \Uc}$ is the optimal solution of:
\begin{align*}
    \max_{v\in\R^{\St}} \innorm{v, \mu_0}  \text{ s. t. } v(s)\leq T_{(P_0,r_0)}^{\pi}v(s) - \alpha_s^r\norm{\pi_s}_{a^*} - \alpha_s^P \norm{-\gamma v\cdot\pi_s}_{b^*} \text{ for all } s\in\St.
\end{align*}
\end{proof}

\section{\rr MDPs}
\subsection{Proof of Proposition \ref{prop: r2 bellman evaluation operator}}
\label{apx: rr operators}

\begin{proposition*}
Suppose that Asm.~\ref{asm: bound alpha} holds. Then, we have the following properties:\\
\begin{itemize*}
    \item[(i)] Monotonicity: For all $v_1, v_2\in\R^{\St}$ such that $v_1\leq v_2$, we have $T^{\pi,\rop}v_1 \leq T^{\pi,\rop}v_2$ and $ T^{*, \rop}v_1 \leq T^{*, \rop}v_2$.  \\
    \item[(ii)] Sub-distributivity: For all $v_1\in\R^{\St}, c\in\R$, we have $T^{\pi,\rop}(v_1 + c\mathbbm{1}_{\St})\leq T^{\pi,\rop}v_1 + \gamma c\mathbbm{1}_{\St}$ and $ T^{*, \rop}(v_1 + c\mathbbm{1}_{\St})\leq T^{*, \rop}v_1 + \gamma c\mathbbm{1}_{\St}$, $\forall c\in\R$. \\
    \item[(iii)] Contraction: Let $\epsilon_*:= \min_{s\in\St}\epsilon_s>0$. Then, for all $v_1, v_2\in\R^{\St}$,\\
    $\norm{T^{\pi,\rop}v_1 -  T^{\pi,\rop}v_2}_{\infty} \leq (1-\epsilon_*)\norm{v_1-v_2}_\infty$ and
    $\norm{ T^{*, \rop}v_1 -  T^{*, \rop}v_2}_{\infty} \leq (1-\epsilon_*)\norm{v_1-v_2}_\infty$.
\end{itemize*}
\end{proposition*}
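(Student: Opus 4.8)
The plan is to prove all three properties first for the evaluation operator $T^{\pi,\rop}$ pointwise in $s\in\St$, and then lift each one to the optimality operator $T^{*,\rop}$ using that a pointwise $\max$ over $\pi_s\in\Delta_{\A}$ preserves monotonicity, preserves the additive sub-distributivity bound, and is $1$-Lipschitz in the sup-norm (via the elementary facts that $f\le g$ implies $\sup_{\pi}f\le\sup_{\pi}g$, and $\lvert\sup_{\pi}f-\sup_{\pi}g\rvert\le\sup_{\pi}\lvert f-g\rvert$). Throughout I would keep the operator in the form $[T^{\pi,\rop}v](s)=r_0^{\pi}(s)+\gamma P_0^{\pi}v(s)-\norm{\pi_s}\alpha_s^r-\gamma\alpha_s^P\norm{\pi_s}\norm{v}$, so that when comparing $T^{\pi,\rop}v_1$ and $T^{\pi,\rop}v_2$ at the same $\pi$ the reward and $\alpha^r$ terms cancel and only $\gamma P_0^{\pi}(v_1-v_2)(s)-\gamma\alpha_s^P\norm{\pi_s}(\norm{v_1}-\norm{v_2})$ survives. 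The only structural inputs are: $P_0^{\pi}$ has nonnegative entries and is row-stochastic, so $P_0^{\pi}\mathbbm{1}_{\St}=\mathbbm{1}_{\St}$; $\norm{\pi_s}\le 1$ for $\pi_s\in\Delta_{\A}$; $\norm{v}\le\sqrt{\abs{\St}}\,\norm{v}_{\infty}$; the triangle and reverse-triangle inequalities for $\norm{\cdot}$; and the two clauses of Asm.~\ref{asm: bound alpha}.

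For contraction (iii), I would bound, for each $s$,
\begin{align*}
\bigl\lvert[T^{\pi,\rop}v_1](s)-[T^{\pi,\rop}v_2](s)\bigr\rvert
&\le\gamma\bigl\lvert P_0^{\pi}(v_1-v_2)(s)\bigr\rvert+\gamma\alpha_s^P\norm{\pi_s}\bigl\lvert\norm{v_1}-\norm{v_2}\bigr\rvert\\
&\le\gamma\norm{v_1-v_2}_{\infty}+\gamma\alpha_s^P\sqrt{\abs{\St}}\,\norm{v_1-v_2}_{\infty},
\end{align*}
using $\norm{\pi_s}\le 1$, $\lvert\norm{v_1}-\norm{v_2}\rvert\le\norm{v_1-v_2}\le\sqrt{\abs{\St}}\,\norm{v_1-v_2}_{\infty}$, and row-stochasticity of $P_0^{\pi}$. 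The first clause of Asm.~\ref{asm: bound alpha}, $\alpha_s^P\le(1-\gamma-\epsilon_s)/(\gamma\sqrt{\abs{\St}})$, collapses the coefficient to $\gamma+(1-\gamma-\epsilon_s)=1-\epsilon_s\le 1-\epsilon_*$; taking $\sup_s$ gives the claim for $T^{\pi,\rop}$, and the $\max$-Lipschitz fact gives it for $T^{*,\rop}$.

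For monotonicity (i), given $v_1\le v_2$ set $w:=v_2-v_1\ge 0$ (the case $w=0$ being trivial). Then $\gamma P_0^{\pi}(v_1-v_2)(s)=-\gamma P_0^{\pi}w(s)$, and the reverse-triangle inequality gives $-\gamma\alpha_s^P\norm{\pi_s}(\norm{v_1}-\norm{v_2})\le\gamma\alpha_s^P\norm{\pi_s}\norm{w}$, so it suffices to show $P_0^{\pi}w(s)\ge\alpha_s^P\norm{\pi_s}\norm{w}$. Writing $P_0^{\pi}w(s)=\pi_s^{\top}P_0(\cdot\mid s,\cdot)\,w$ and normalizing $\mathbf{u}_{\A}:=\pi_s/\norm{\pi_s}\in\R^{\A}_{+}$ and $\mathbf{v}_{\St}:=w/\norm{w}\in\R^{\St}_{+}$ to unit vectors, this is exactly $\mathbf{u}_{\A}^{\top}P_0(\cdot\mid s,\cdot)\mathbf{v}_{\St}\ge\alpha_s^P$, which is precisely the second clause of Asm.~\ref{asm: bound alpha}. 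Monotonicity of $T^{*,\rop}$ then follows by evaluating $T^{*,\rop}v_1(s)$ at its greedy policy and comparing with $T^{*,\rop}v_2(s)$. I expect this to be the main obstacle: unlike a standard Bellman operator, the map $v\mapsto-\gamma\alpha_s^P\norm{\pi_s}\norm{v}$ is not monotone at all (nor does the closed form $T^{*,\rop}v=\Omega_{v,\rop}^{*}(q)$ help, since the regularizer itself moves with $v$), so monotonicity becomes a genuine condition on the nominal kernel, and spotting the normalization that matches the generalized-Rayleigh-quotient bound in Asm.~\ref{asm: bound alpha} is the crux.

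For sub-distributivity (ii), $P_0^{\pi}\mathbbm{1}_{\St}=\mathbbm{1}_{\St}$ gives $\gamma P_0^{\pi}(v_1+c\mathbbm{1}_{\St})(s)=\gamma P_0^{\pi}v_1(s)+\gamma c$, whence $[T^{\pi,\rop}(v_1+c\mathbbm{1}_{\St})](s)-\bigl([T^{\pi,\rop}v_1](s)+\gamma c\bigr)=-\gamma\alpha_s^P\norm{\pi_s}\bigl(\norm{v_1+c\mathbbm{1}_{\St}}-\norm{v_1}\bigr)$; controlling the change in the norm term by a triangle inequality yields $[T^{\pi,\rop}(v_1+c\mathbbm{1}_{\St})](s)\le[T^{\pi,\rop}v_1](s)+\gamma c$, and applying $\max_{\pi_s}$ to both sides transfers the bound to $T^{*,\rop}$. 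Assembling (i)--(iii) for both operators completes the proof.
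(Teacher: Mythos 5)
Your treatment of (i) and (iii) is correct and is essentially the paper's own argument: the same cancellation of the reward and $\alpha^r$ terms, the same bound $\abs{\norm{v_1}-\norm{v_2}}\le\norm{v_1-v_2}$, the same use of the Rayleigh-quotient clause of Asm.~1 via the normalization $\pi_s/\norm{\pi_s}$ and $(v_2-v_1)/\norm{v_2-v_1}$ for monotonicity, and the same collapse $\gamma+\gamma\alpha_s^P\sqrt{\abs{\St}}\le\gamma+(1-\gamma-\epsilon_s)=1-\epsilon_s$ for contraction (the paper carries out the general $\ell_q$ case and specializes to $q=2$, but the computation is identical). The lifts to $T^{*,\rop}$ via the standard monotonicity and $1$-Lipschitzness of pointwise maxima are also what the paper invokes.

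The gap is in (ii). Your identity $[T^{\pi,\rop}(v_1+c\mathbbm{1}_{\St})](s)-[T^{\pi,\rop}v_1](s)-\gamma c=-\gamma\alpha_s^P\norm{\pi_s}\bigl(\norm{v_1+c\mathbbm{1}_{\St}}-\norm{v_1}\bigr)$ is right, but no triangle inequality makes this quantity nonpositive: you would need $\norm{v_1+c\mathbbm{1}_{\St}}\ge\norm{v_1}$, whereas the triangle inequality only gives $\norm{v_1}-\norm{c\mathbbm{1}_{\St}}\le\norm{v_1+c\mathbbm{1}_{\St}}\le\norm{v_1}+\norm{c\mathbbm{1}_{\St}}$, and the lower bound is too weak. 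Concretely, take $v_1=-c\mathbbm{1}_{\St}$ with $c\neq 0$: then $\norm{v_1+c\mathbbm{1}_{\St}}=0<\norm{v_1}$ and the difference above is strictly positive whenever $\alpha_s^P>0$. To be fair, the paper's own proof of (ii) has the same defect --- it asserts $-\alpha_s^P\gamma\norm{v+c\mathbbm{1}_{\St}}\norm{\pi_s}\le-\alpha_s^P\gamma\norm{\pi_s}(\norm{v}+\norm{c\mathbbm{1}_{\St}})$, which is the triangle inequality written backwards --- so you are not deviating from the paper's route, but the step as you describe it does not close. What the reverse triangle inequality actually yields is $[T^{\pi,\rop}(v_1+c\mathbbm{1}_{\St})](s)\le[T^{\pi,\rop}v_1](s)+\gamma c+\gamma\alpha_s^P\norm{\pi_s}\,\abs{c}\sqrt{\abs{\St}}$, i.e., under Asm.~1 a sub-distributivity bound with additive constant $\gamma c+(1-\gamma-\epsilon_s)\abs{c}$ rather than $\gamma c$; recovering the statement exactly as written would require an extra hypothesis (e.g., $v_1\ge 0$ and $c\ge 0$, which do guarantee $\norm{v_1+c\mathbbm{1}_{\St}}\ge\norm{v_1}$).
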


\begin{proof}
\textit{Proof of (i). } Consider the evaluation operator and let $v_1,v_2\in\R^{\St}$ such that $v_1\leq v_2$.
 For all $s\in\St$,
\begin{align*}
  &[T^{\pi,\rop}v_1 -  T^{\pi,\rop}v_2](s) \\
  &=  T_{(P_0,r_0)}^{\pi}v_1(s) - \alpha_s^r\norm{\pi_s} - \alpha_s^P \gamma \norm{v_1}\norm{\pi_s} \\
  &\qquad- (T_{(P_0,r_0)}^{\pi}v_2(s) - \alpha_s^r\norm{\pi_s} - \alpha_s^P \gamma \norm{v_2}\norm{\pi_s})\\
  &= T_{(P_0,r_0)}^{\pi}v_1(s) - T_{(P_0,r_0)}^{\pi}v_2(s) + \alpha_s^P \gamma\norm{\pi_s} (\norm{v_2} - \norm{v_1})\\
  &= \gamma P_0^{\pi}(v_1 - v_2)(s) + \alpha_s^P \gamma\norm{\pi_s} (\norm{v_2} - \norm{v_1})\\
  &= \gamma\innorm{\pi_s, {P_0}_s(v_1 - v_2)}+ \alpha_s^P \gamma\norm{\pi_s} (\norm{v_2} - \norm{v_1})
  \quad[\forall v\in\R^{\St}, P_0^{\pi}v(s) = \sum_{(s',a)\in\X}\pi_s(a)P_0(s'|s,a)v(s') \\
  & \qquad\qquad\qquad\qquad\qquad\qquad\qquad\qquad\qquad\qquad\qquad= \sum_{a\in\A}\pi_s(a)[{P_0}_s v](a) =\innorm{\pi_s, {P_0}_sv}]\\
  &=  \gamma\norm{\pi_s} \left( \innorm*{\frac{\pi_s}{\norm{\pi_s}}, {P_0}_s(v_1 - v_2)}  + \alpha_s^P  (\norm{v_2} - \norm{v_1})\right)\\
  &\leq  \gamma\norm{\pi_s} \left( \innorm*{\frac{\pi_s}{\norm{\pi_s}}, {P_0}_s(v_1 - v_2)}  + \alpha_s^P  (\norm{v_2- v_1})\right) \quad[\forall v, w\in\R^{\St}, \norm{v} - \norm{w} \leq \abs*{\norm{v} - \norm{w}} \leq \norm{v-w}].
\end{align*}
By Asm.~\ref{asm: bound alpha}, we also have
\begin{align*}
    \alpha_s^P&\leq \min_{\substack{\mathbf{u}_{\A}\in\R^{\A}_+, \norm{\mathbf{u}_{\A}}=1\\
\mathbf{v}_{\St}\in\R^{\St}_+, \norm{\mathbf{v}_{\St}}=1}} \mathbf{u}_{\A}^\top P_0(\cdot| s,\cdot)\mathbf{v}_{\St} 
&=\min_{\substack{\mathbf{u}_{\A}\in\R^{\A}_+, \norm{\mathbf{u}_{\A}}=1\\
\mathbf{v}_{\St}\in\R^{\St}_+, \norm{\mathbf{v}_{\St}}=1}} \innorm*{\mathbf{u}_{\A}, P_0(\cdot| s,\cdot)\mathbf{v}_{\St}}
&\leq \innorm*{\frac{\pi_s}{\norm{\pi_s}}, P_0(\cdot| s,\cdot)\frac{(v_2-v_1)}{\norm{v_2-v_1}}},
\end{align*}
so that 
\begin{align*}
 [T^{\pi,\rop}v_1 -  T^{\pi,\rop}v_2](s) 
 &\leq  \gamma\norm{\pi_s} \left( \innorm*{\frac{\pi_s}{\norm{\pi_s}}, {P_0}_s(v_1 - v_2)}  + \innorm*{\frac{\pi_s}{\norm{\pi_s}}, P_0(\cdot| s,\cdot)\frac{(v_2-v_1)}{\norm{v_2-v_1}}} \norm{v_2- v_1}\right)\\
 &= \gamma\norm{\pi_s} \left( \innorm*{\frac{\pi_s}{\norm{\pi_s}}, {P_0}_s(v_1 - v_2)}  + \innorm*{\frac{\pi_s}{\norm{\pi_s}}, P_0(\cdot| s,\cdot)(v_2-v_1)}\right) =0, 
\end{align*}
where we switch notations to designate $P_0(\cdot| s,\cdot) = {P_0}_s \in\R^{\A\times\St}$. This proves monotonicity. 

\textit{Proof of (ii). } We now prove the sub-distributivity of the evaluation operator. Let $v\in\R^{\St}, c\in\R$. For all $s\in\St$,
\begin{align*}
    &[T^{\pi,\rop}(v + c\mathbbm{1}_{\St})](s) \\
    =& [T_{(P_0,r_0)}^{\pi}(v + c\mathbbm{1}_{\St})](s) - \alpha_s^r\norm{\pi_s} - \alpha_s^P \gamma \norm{v + c\mathbbm{1}_{\St}}\norm{\pi_s}\\
    =& T_{(P_0,r_0)}^{\pi}v(s) + \gamma c - \alpha_s^r\norm{\pi_s} - \alpha_s^P \gamma \norm{v + c\mathbbm{1}_{\St}}\norm{\pi_s} &[T_{(P_0,r_0)}^{\pi}(v + c\mathbbm{1}_{\St}) = T_{(P_0,r_0)}^{\pi}v + \gamma c\mathbbm{1}_{\St}]\\
    \leq&  T_{(P_0,r_0)}^{\pi}v(s) + \gamma c - \alpha_s^r\norm{\pi_s} - \alpha_s^P \gamma \norm{\pi_s} (\norm{v} +\norm{ c\mathbbm{1}_{\St}})\\
    =& T_{(P_0,r_0)}^{\pi}v(s)+ \gamma c - \alpha_s^r\norm{\pi_s} - \alpha_s^P \gamma \norm{\pi_s} \norm{v}\\
    &\qquad- \alpha_s^P \gamma \norm{\pi_s}\norm{ c\mathbbm{1}_{\St}}\\
    =& [T^{\pi,\rop}v ](s) + \gamma c- \alpha_s^P \gamma \norm{\pi_s}\norm{ c\mathbbm{1}_{\St}}\\
    \leq& [T^{\pi,\rop}v ](s) + \gamma c. &[\gamma > 0, \alpha_s^P > 0,  \norm{\cdot} \geq 0]
\end{align*}

\textit{Proof of (iii). }
We prove the contraction of a more general evaluation operator with $\ell_p$ regularization, $p\geq 1$. This will establish contraction of the \rr operator $T^{\pi,\rop}$ by simply setting $p=2$. 
Define as $q$ the conjugate value of $p$, \ie such that $\frac{1}{p} + \frac{1}{q} =1$. As seen in the proof of Thm.~\ref{thm: transition robust -- reg}, 
for balls that are constrained according to the $\ell_p$-norm $\norm{\cdot}_p$, the robust value function $v^{ \pi, \Uc}$ is the optimal solution of:
\begin{align*}
    \max_{v\in\R^{\St}} \innorm{v, \mu_0}  \text{ s. t. } v(s)\leq T_{(P_0,r_0)}^{\pi}v(s) - \alpha_s^r\norm{\pi_s}_{q} - \alpha_s^P \norm{-\gamma v\cdot\pi_s}_{q} \text{ for all } s\in\St,
\end{align*}
because $\norm{\cdot}_{q}$ is the dual norm of $\norm{\cdot}_p$, and we can define the \rr operator accordingly:
\begin{align*}
    [T^{\pi,\rop}_{q}v](s) := T_{(P_0,r_0)}^{\pi}v(s) - \alpha_s^r\norm{\pi_s}_{q} - \alpha_s^P \gamma \norm{v\cdot \pi_s}_{q} \quad\forall v\in\R^{\St}, s\in\St.
\end{align*}
We make the following assumption:
\begin{assumption*}[A$_q$]
For all $s\in\St$, there exists $\epsilon_s > 0$ such that $\alpha_s^P\leq  \frac{1-\gamma-\epsilon_s}{\gamma\abs{\St}^{\frac{1}{q}}}.$
\end{assumption*}

Let $v_1,v_2\in\R^{\St}$. For all $s\in\St$,
\begin{align*}
     &\abs*{[T_q^{\pi,\rop}v_1](s) -  [T_q^{\pi,\rop}v_2](s) }\\
     =& \mid T_{(P_0,r_0)}^{\pi}v_1(s) - \alpha_s^r\norm{\pi_s}_{q} - \alpha_s^P \gamma \norm{v_1\cdot \pi_s}_{q}\\
     &\qquad- (T_{(P_0,r_0)}^{\pi}v_2(s) - \alpha_s^r\norm{\pi_s}_{q} - \alpha_s^P \gamma \norm{v_2\cdot \pi_s}_{q})\mid\\
     =& \abs*{T_{(P_0,r_0)}^{\pi}v_1(s)   - T_{(P_0,r_0)}^{\pi}v_2(s)}  + \abs*{\alpha_s^P \gamma (\norm{v_2 \cdot\pi_s}_{q}  -\norm{  v_1\cdot\pi_s}_{q})}\\
    =& \abs*{T_{(P_0,r_0)}^{\pi}v_1(s)   - T_{(P_0,r_0)}^{\pi}v_2(s)}  + \alpha_s^P \gamma \abs*{\norm{v_2 \cdot\pi_s}_{q}  -\norm{  v_1\cdot\pi_s}_{q}}\\
     \leq & \abs*{T_{(P_0,r_0)}^{\pi}v_1(s)  - T_{(P_0,r_0)}^{\pi}v_2(s)} + \alpha_s^P \gamma \norm{v_2 \cdot\pi_s  -  v_1\cdot\pi_s}_{q}\\
     &\qquad\qquad\qquad\qquad\qquad\qquad\qquad\qquad\qquad[\forall \Am, \B\in\R^{\X}, \abs*{\norm{\Am}_{q} - \norm{\B}_{q}} \leq \norm{\Am-\B}_{q}] \\
     \leq &\gamma\norm*{v_1 - v_2}_{\infty} + \alpha_s^P \gamma \norm{v_2 \cdot\pi_s  -  v_1\cdot\pi_s}_{q} \\
     &\qquad\qquad\qquad\qquad\qquad\qquad\qquad\qquad\qquad[\norm{T_{(P_0,r_0)}^{\pi}v_1  - T_{(P_0,r_0)}^{\pi}v_2}_{\infty}\leq \gamma \norm*{v_1 - v_2}_{\infty}]\\ 
     =&\gamma\norm*{v_1 - v_2}_{\infty} + \alpha_s^P \gamma \norm{(v_2 -v_1)\cdot\pi_s }_{q} \\
     &\qquad\qquad\qquad\qquad\qquad\qquad\qquad\qquad\qquad[\forall v, w\in\R^{\St}, v\cdot\pi_s - w\cdot\pi_s = (v-w)\cdot\pi_s]\\
     \leq&\gamma\norm*{v_1 - v_2}_{\infty} + \alpha_s^P \gamma \norm{v_2 -v_1}_{q} \qquad\qquad[\forall v\in\R^{\St}, \norm{v\cdot\pi_s}_{q}\leq \norm{v}_{q}]\\
          \leq & \gamma\norm*{v_1 - v_2}_{\infty} + \alpha_s^P \gamma \abs{\St}^{\frac{1}{q}}\norm*{v_1 - v_2}_{\infty}\quad[\forall v, w\in\R^{\St}, \norm{v-w}_q \leq \abs{\St}^{\frac{1}{q}}\norm{v-w}_{\infty}]\\
     =& \gamma (1 + \alpha_s^P\abs{\St}^{\frac{1}{q}})\norm*{v_1 - v_2}_{\infty}\\
     \leq &  \gamma\left(1 + \frac{1-\gamma- \epsilon_s}{\gamma}  \right)\norm{v_1-v_2}_{\infty} \qquad\qquad[\alpha_s^P \leq \frac{1-\gamma-\epsilon_s}{\gamma\abs{\St}^{\frac{1}{q}}} \text{ by Asm.~(A$_q$)}]&\\
     =& (1- \epsilon_s )\norm{v_1-v_2}_{\infty}\\
     \leq&(1- \epsilon_* )\norm{v_1-v_2}_{\infty}, 
\end{align*}
where $\epsilon_*:= \min_{s\in\St}\epsilon_s$. Setting $q=2$ and remarking that: (i) the first bound in Asm.~\ref{asm: bound alpha} recovers Asm.~(A$_q$); (ii) $T_2^{\pi,\rop} = T^{\pi,\rop}$, establishes contraction of the \rr evaluation operator. For the optimality operator, the proof is exactly the same as that of \citep[Prop.~3]{geist2019theory}, using Prop.~\ref{prop: convex analysis}. 
\end{proof}

 \subsection{Proof of Theorem \ref{thm: rr optimal policy}}
 \label{apx: rr optimal policy}

\begin{theorem*}[\rr optimal policy]
The greedy policy $\pi^{*, \rop} = \mathcal{G}_{\Omega_{\rop}}(v^{*, \rop})$ is the unique optimal \rr policy, \ie for all $\pi\in\Delta_{\A}^{\St}, v^{\pi^*, \rop} = v^{*, \rop}\geq v^{\pi, \rop}$. 
\end{theorem*}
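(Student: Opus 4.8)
The plan is to replay the classical MDP optimality argument \citep{puterman2014markov}, substituting the $\gamma$-contraction and monotonicity of the usual Bellman operators by their \rr analogues established in Prop.~\ref{prop: r2 bellman evaluation operator}. Three facts need to be proven: (a) the greedy policy attains the optimal \rr value, i.e.\ $v^{\pi^{*,\rop}, \rop} = v^{*,\rop}$; (b) $v^{*,\rop}$ dominates $v^{\pi,\rop}$ for every $\pi$; and (c) $\pi^{*,\rop}$ is the only policy with this property.

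For (a), I would start from the fixed-point identity $v^{*,\rop} = T^{*,\rop} v^{*,\rop}$ of Def.~\ref{def: rr value fcs}(ii). By the definition of the greedy policy, $\pi^{*,\rop} = \mathcal{G}_{\Omega_{\rop}}(v^{*,\rop})$ means precisely $T^{\pi^{*,\rop},\rop} v^{*,\rop} = T^{*,\rop} v^{*,\rop}$, hence $T^{\pi^{*,\rop},\rop} v^{*,\rop} = v^{*,\rop}$, so $v^{*,\rop}$ is a fixed point of $T^{\pi^{*,\rop},\rop}$. Since $T^{\pi^{*,\rop},\rop}$ is a $(1-\epsilon_*)$-contraction by Prop.~\ref{prop: r2 bellman evaluation operator}(iii), it has a unique fixed point, namely $v^{\pi^{*,\rop},\rop}$; therefore $v^{\pi^{*,\rop},\rop} = v^{*,\rop}$.

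For (b), fix an arbitrary $\pi \in \Delta_{\A}^{\St}$. Because $T^{*,\rop}$ is the pointwise maximum over policies, $T^{\pi,\rop} v^{*,\rop} \leq T^{*,\rop} v^{*,\rop} = v^{*,\rop}$. Applying monotonicity (Prop.~\ref{prop: r2 bellman evaluation operator}(i)) to this inequality repeatedly yields $(T^{\pi,\rop})^{n} v^{*,\rop} \leq v^{*,\rop}$ for every $n \in \N$. Letting $n \to \infty$ and using that $T^{\pi,\rop}$ is a contraction with fixed point $v^{\pi,\rop}$, the left-hand side converges (in sup-norm) to $v^{\pi,\rop}$, and since weak inequalities pass to the limit we obtain $v^{\pi,\rop} \leq v^{*,\rop}$. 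Combined with (a) this gives $v^{\pi^{*,\rop},\rop} = v^{*,\rop} \geq v^{\pi,\rop}$ for all $\pi$.

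For (c), suppose $\pi'$ is another optimal \rr policy, i.e.\ $v^{\pi',\rop} = v^{*,\rop}$. Then $T^{\pi',\rop} v^{*,\rop} = T^{\pi',\rop} v^{\pi',\rop} = v^{\pi',\rop} = v^{*,\rop} = T^{*,\rop} v^{*,\rop}$, so $\pi'$ is greedy with respect to $v^{*,\rop}$; as the greedy policy $\mathcal{G}_{\Omega_{\rop}}(v^{*,\rop})$ is unique, $\pi' = \pi^{*,\rop}$. The argument is essentially routine once Prop.~\ref{prop: r2 bellman evaluation operator} is available; the only place the structure of \rr MDPs (through Asm.~\ref{asm: bound alpha}) truly enters is the use of monotonicity and contraction, and the only mild subtlety is that the contraction factor is $1-\epsilon_*$ rather than $\gamma$, which is harmless for the limiting argument in step (b).
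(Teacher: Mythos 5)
Your proof is correct and follows essentially the same route as the paper's: the fixed-point-plus-contraction argument giving $v^{\pi^{*,\rop},\rop}=v^{*,\rop}$ and the monotonicity-based domination argument (which the paper delegates to Thm.~1 of \citet{geist2019theory}) are identical in substance. Your step (c) is in fact slightly more complete than the paper's treatment of uniqueness, which only invokes uniqueness of the greedy argmax via strong convexity of $\Omega_{v,\rop}$; you additionally verify that any optimal policy must be greedy w.r.t.\ $v^{*,\rop}$, closing that small gap.
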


\begin{proof}
By strong convexity of the norm, the \rr function
$\Omega_{v, \rop} : \pi_s \mapsto  \norm{\pi_s} (\alpha_s^r + \alpha_s^P \gamma \norm{v})$ is strongly convex in $\pi_s$. As such, we can invoke Prop.~\ref{prop: convex analysis} to state that the greedy policy $\pi^{*, \rop}$ is the unique maximizing argument for $v^{*, \rop}$.  
Moreover, by construction, 
\begin{align*}
     T^{\pi^{*, \rop},\rop}v^{*, \rop} = T^{*, \rop}v^{*, \rop}= v^{*, \rop}.
\end{align*}
Supposing that Asm.~\ref{asm: bound alpha} holds, the evaluation operator $T^{\pi^{*, \rop},\rop}$ is contracting and has a unique fixed point $v^{\pi^{*, \rop},\rop}$. Therefore, $v^{*, \rop}$ being also a fixed point, we have $v^{\pi^{*, \rop},\rop} = v^{*, \rop}$. It remains to show the last inequality: the proof is exactly the same as that of \citep[Thm.~1]{geist2019theory}, and relies on the monotonicity of the \rr operators.
\end{proof}

\subsection{Proof of Remark \ref{rmk: sa rec}}
\label{apx: rr sa rectangular}

\begin{remark}
An optimal \rr policy may be stochastic. This is due to the fact that our \rr MDP framework builds upon the general $s$-rectangularity assumption. Robust MDPs with $s$-rectangular uncertainty sets similarly yield an optimal robust policy that is stochastic \cite[Table 1]{wiesemann2013robust}. Nonetheless, the \rr MDP formulation recovers a deterministic optimal policy in the more specific $(s,a)$-rectangular case, which is in accordance with the robust MDP setting.
\end{remark}

\begin{proof}
In the $(s,a)$-rectangular case, the uncertainty set is structured as $\Uc = \times_{(s,a)\in\X}\Uc(s,a)$, where $\Uc(s,a):= P_0(\cdot|s,a)\times r_0(s,a)+ \Pc(s,a)\times \Rc(s,a)$. The robust counterpart of problem \eqref{eq: iyengar ro primal} is:
\begin{align*}
    F(s)&= \max_{(P, r)\in\Uc}\left\{v(s) - r^{\pi}(s) - \gamma P^{\pi}v(s) \right\}\\
    &=\max_{(P(\cdot|s,a), r(s,a))\in\Pc(s,a)\times \Rc(s,a)}\left\{v(s) - r_0^{\pi}(s) - r^{\pi}(s)- \gamma P_0^{\pi}v(s) - \gamma P^{\pi}v(s) \right\}\\
    &=\max_{(P(\cdot|s,a), r(s,a))\in\Pc(s,a)\times \Rc(s,a)}\left\{ - r^{\pi}(s) - \gamma P^{\pi}v(s) \right\} + v(s) - r_0^{\pi}(s)- \gamma P_0^{\pi}v(s)\\
    &= \max_{ r(s,a)\in \Rc(s,a)}\left\{-r^{\pi}(s) \right\}
    +\gamma\max_{P(\cdot|s,a)\in\Pc(s,a)}\left\{ - P^{\pi}v(s) \right\} + v(s)  - T_{(P_0,r_0)}^{\pi}v(s)\\
    &= \max_{ r(s,a)\in \Rc(s,a)}\left\{ -\sum_{a\in\A}\pi_s(a)r(s,a) \right\}
    +\gamma\max_{P(\cdot|s,a)\in\Pc(s,a)}\left\{ - \sum_{a\in\A}\pi_s(a) \innorm{P(\cdot|s,a), v}\right\} \\
    &\qquad+ v(s)  - T_{(P_0,r_0)}^{\pi}v(s)\\
    &= \sum_{a\in\A}\pi_s(a)\left(\max_{ r(s,a)\in \Rc(s,a)}\left\{ -r(s,a) \right\}
    +\gamma \max_{P(\cdot|s,a)\in\Pc(s,a)}\left\{  \innorm{P(\cdot|s,a), -v}\right\}\right)\\
    &\qquad+ v(s)  - T_{(P_0,r_0)}^{\pi}v(s).
\end{align*}
In particular, if we have ball uncertainty sets $\Pc(s,a):= \{P(\cdot|s,a)\in\R^{\St}: \norm{P(\cdot|s,a)}\leq \alpha_{s,a}^P\}$ and $\Rc(s,a):= \{r(s,a)\in\R: \abs{r(s,a)}\leq \alpha_{s,a}^r\}$ for all $(s,a)\in\X$, then we can explicitly compute the support functions:
\begin{align*}
    \max_{r(s,a): \abs{r(s,a)}\leq \alpha_{s,a}^r} -r(s,a) = \alpha_{s,a}^r \text{ and } \max_{P(\cdot|s,a): \norm{P(\cdot|s,a)}\leq \alpha_{s,a}^P}\innorm{P(\cdot|s,a), -v} =  \alpha_{s,a}^P\norm{v}.
\end{align*}
Therefore, the robust counterpart rewrites as:
\begin{align*}
    F(s) &= \sum_{a\in\A}\pi_s(a)(\alpha_{s,a}^r + \gamma\alpha_{s,a}^P\norm{v})+ v(s)  - T_{(P_0,r_0)}^{\pi}v(s),
\end{align*}
and the robust value function $v^{ \pi, \Uc}$ is the optimal solution of the convex optimization problem:
\begin{align*}
        \max_{v\in\R^{\St}} \innorm{v, \mu_0}  \text{ s. t. }  v(s)  \leq  T_{(P_0,r_0)}^{\pi}v(s) - \sum_{a\in\A}\pi_s(a)(\alpha_{s,a}^r + \gamma\alpha_{s,a}^P\norm{v}) \text{ for all } s\in\St.   
\end{align*}
This derivation enables us to derive an \rr Bellman evaluation operator for the $(s,a)$-rectangular case. Indeed, the \rr regularization function now becomes
$$\Omega_{v, \rop}(\pi_s):= \sum_{a\in\A}\pi_s(a)(\alpha_{s,a}^r + \gamma\alpha_{s,a}^P\norm{v}),$$ 
which yields the following \rr operator:
\begin{align*}
    [T^{\pi,\rop}v](s) := T_{(P_0,r_0)}^{\pi}v(s)-\Omega_{v, \rop}(\pi_s) ,   \quad\forall s\in\St.
\end{align*}
We aim to show that we can find a deterministic policy $\pi^d\in\Delta_{\A}^{\St}$ such that $[T^{\pi^d,\rop}v](s) = [T^{*,\rop}v](s)$ for all $s\in\St$. Given an arbitrary policy $\pi\in\Delta_{\A}^{\St}$, we first rewrite:
\begin{align*}
    [T^{\pi,\rop}v](s) &= r_0^{\pi}(s) + \gamma P_0^{\pi}v(s) -  \Omega_{v, \rop}(\pi_s)\\
    &= \sum_{a\in\A}\pi_s(a)r_0(s,a) + \gamma \sum_{a\in\A}\pi_s(a)\innorm{P_0(\cdot|s,a), v} - \left(\sum_{a\in\A}\pi_s(a)(\alpha_{s,a}^r + \gamma\alpha_{s,a}^P\norm{v})\right)\\
    &= \sum_{a\in\A}\pi_s(a)\biggr(r_0(s,a)- \alpha_{s,a}^r + \gamma (\innorm{P_0(\cdot|s,a), v} - \alpha_{s,a}^P\norm{v})\biggr)
\end{align*}
By \cite[Lemma 4.3.1]{puterman2014markov}, we have that:
\begin{align*}
    &\sum_{a\in\A}\pi_s(a)\biggr(r_0(s,a)- \alpha_{s,a}^r + \gamma (\innorm{P_0(\cdot|s,a), v} - \alpha_{s,a}^P\norm{v})\biggr) \\
    &\leq
    \max_{a\in\A}\biggr\{r_0(s,a)- \alpha_{s,a}^r + \gamma (\innorm{P_0(\cdot|s,a), v} - \alpha_{s,a}^P\norm{v})\biggr\},
\end{align*}
and since the action set is finite, there exists an action $a^*\in\A$ reaching the maximum. Setting $\pi^d(a^*)=1$ thus gives the desired result. 
We just derived a regularized formulation of robust MDPs with $(s,a)$-rectangular uncertainty set and ensured that the corresponding \rr Bellman operators yield a deterministic optimal policy. In that case, the optimal \rr Bellman operator becomes:
\begin{align*}
    [T^{*,\rop}v](s) = \max_{a\in\A}\biggr\{r_0(s,a)- \alpha_{s,a}^r + \gamma (\innorm{P_0(\cdot|s,a), v} - \alpha_{s,a}^P\norm{v})\biggr\}.
\end{align*}
\end{proof}

\section{\rr $q$-learning}
\label{apx: proof rr q}

\subsection{The \rr $q$-function}
\label{apx: rr q is robust q}
\begin{theorem}
Assume that $\Uc = (\{P_0\} + \Pc) \times (\{r_0\} + \Rc)$ and $\Uc$ is $(s,a)$-rectangular. Then, its corresponding robust action-value function is an optimal solution of:
\begin{align}
\label{eq: robust q optimization}
\max_{q\in\R^{\X}}\innorm{q, \mu_0\cdot \pi} \text{ s.t. } q(s,a)\leq T^{\pi}_{(P_0, r_0)}q(s,a) - \sigma_{\Rc(s,a)}(-1) - \sigma_{\Pc(s,a)}(-\gamma q\cdot\pi) \text{ for all } (s,a)\in\X,
\end{align}
where $[q\cdot\pi](s'):= \sum_{a'\in\A}\pi_{s'}(a')q(s', a'), \forall s'\in\St$. 
\end{theorem}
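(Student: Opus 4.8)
The plan is to transpose, from the value space $\R^{\St}$ to the action-value space $\R^{\X}$, the two-step argument behind Theorem~\ref{thm: transition robust -- reg}: a ``robust LP'' characterization in the spirit of Proposition~\ref{prop: iyengar}, followed by an explicit computation of its robust counterpart exploiting $(s,a)$-rectangularity.

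First I would fix the robust $q$-Bellman operator. For $q\in\R^{\X}$ set $[q\cdot\pi](s'):=\innorm{\pi_{s'},q(s',\cdot)}$ and, for a model $(P,r)$, $[T^{\pi}_{(P,r)}q](s,a):=r(s,a)+\gamma\innorm{P(\cdot|s,a),q\cdot\pi}$, and then $[T^{\pi,\Uc}q](s,a):=\min_{(P,r)\in\Uc}[T^{\pi}_{(P,r)}q](s,a)$. Because $\pi_{s'}$ and $P(\cdot|s,a)$ are probability vectors, $\norm{q\cdot\pi}_{\infty}\le\norm{q}_{\infty}$, so each $T^{\pi}_{(P,r)}$, hence $T^{\pi,\Uc}$, is monotone and a $\gamma$-contraction in $\norm{\cdot}_{\infty}$; its unique fixed point is the robust action-value function $q^{\pi,\Uc}$ (this is precisely the equivalence left unproven in \citep{kumar2022efficient}; under $(s,a)$-rectangularity one also checks $q^{\pi,\Uc}\cdot\pi=v^{\pi,\Uc}$ since the two Bellman recursions coincide under this identification). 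With this operator, the proof of Proposition~\ref{prop: iyengar} carries over verbatim: $q^{\pi,\Uc}$ is robust feasible, i.e. $q^{\pi,\Uc}(s,a)\le[T^{\pi}_{(P,r)}q^{\pi,\Uc}](s,a)$ for all $(P,r)\in\Uc$; and any robust feasible $q$ satisfies $q\le q^{\pi,\Uc}$ pointwise, by picking an $\epsilon$-minimizing model $(P_{\epsilon},r_{\epsilon})$, iterating $q-q^{\pi,\Uc}\le T^{\pi}_{(P_{\epsilon},r_{\epsilon})}(q-q^{\pi,\Uc})+\epsilon\mathbbm 1$ and using the $\gamma$-contraction, then sending $\epsilon\downarrow 0$. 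Since $\mu_0\cdot\pi\ge 0$, this shows $q^{\pi,\Uc}$ is an optimal solution of
\[
\max_{q\in\R^{\X}}\innorm{q,\mu_0\cdot\pi}\ \text{ s.t. }\ q(s,a)\le[T^{\pi}_{(P,r)}q](s,a)\quad\forall(s,a)\in\X,\ \forall(P,r)\in\Uc .
\]

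Next I would identify this feasible set with that of \eqref{eq: robust q optimization} by computing, for each fixed $q$ and $(s,a)$, the robust counterpart $F(s,a):=\max_{(P,r)\in\Uc}\{q(s,a)-[T^{\pi}_{(P,r)}q](s,a)\}$. The computation is lighter than for Theorem~\ref{thm: transition robust -- reg} because $[T^{\pi}_{(P,r)}q](s,a)$ depends on the model only through the scalar $r(s,a)$ and the vector $P(\cdot|s,a)$: writing $r=r_0+\bar r,\ P=P_0+\bar P$ with $(\bar P,\bar r)\in\Pc\times\Rc$ and using $(s,a)$-rectangularity to optimize the $(s,a)$-slice independently, $F(s,a)=q(s,a)-[T^{\pi}_{(P_0,r_0)}q](s,a)+\max_{\bar r(s,a)\in\Rc(s,a)}(-\bar r(s,a))+\gamma\max_{\bar P(\cdot|s,a)\in\Pc(s,a)}\innorm{\bar P(\cdot|s,a),-q\cdot\pi}$; the last two maxima are $\sigma_{\Rc(s,a)}(-1)$ and, by positive homogeneity of support functions, $\sigma_{\Pc(s,a)}(-\gamma q\cdot\pi)$ (one may equally route this through Fenchel-Rockafellar duality as in Appendix~\ref{apx: transition robust}). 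Hence $q$ is robust feasible iff $q(s,a)\le[T^{\pi}_{(P_0,r_0)}q](s,a)-\sigma_{\Rc(s,a)}(-1)-\sigma_{\Pc(s,a)}(-\gamma q\cdot\pi)$ for all $(s,a)\in\X$, so the problem of the previous paragraph and \eqref{eq: robust q optimization} share feasible set and objective; therefore $q^{\pi,\Uc}$ is an optimal solution of \eqref{eq: robust q optimization}.

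The one step that requires care is the first: pinning down that the robust action-value function is genuinely the fixed point of $T^{\pi,\Uc}$ and importing the Proposition~\ref{prop: iyengar} argument into $\R^{\X}$; the remainder is the same decoupling-plus-support-function bookkeeping as for Theorem~\ref{thm: transition robust -- reg}. I would also be careful that $\mu_0\cdot\pi$ need not be coordinatewise positive (it vanishes wherever $\pi_s(a)=0$), which is exactly why the statement claims ``an optimal solution'' rather than uniqueness — the pointwise inequality $q\le q^{\pi,\Uc}$ for feasible $q$ still suffices.
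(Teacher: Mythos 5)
Your proof is correct and follows essentially the same route as the paper: a robust-LP characterization of the action-value function in $\R^{\X}$, followed by computing the robust counterpart at each $(s,a)$ and recognizing the two decoupled maxima as support functions. The only (harmless) differences are that you re-derive the LP characterization by transposing the Proposition~\ref{prop: iyengar} argument to $q$-space where the paper simply cites \citet{iyengar2005robust}, and you evaluate the support functions directly from their definition rather than routing through Fenchel--Rockafellar duality; your remark that $\mu_0\cdot\pi$ may vanish on some coordinates, so only optimality (not uniqueness) is claimed, is a correct point of care.
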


\begin{proof}
It is known from \cite{iyengar2005robust} that the robust action-value function is an optimal solution of:
\begin{align*}
\max_{q\in\R^{\X}}\innorm{q, \mu_0\cdot \pi} \text{ s.t. } q(s,a)\leq T^{\pi}_{(P, r)}q(s,a) \text{ for all } (s,a)\in\X, (P(\cdot|s,a), r(s,a))\in\Uc(s,a),
\end{align*} 
which can be rewritten as:
\begin{align*}
\max_{q\in\R^{\X}}\innorm{q, \mu_0\cdot \pi} \text{ s.t. } q(s,a)&\leq T^{\pi}_{(P_0, r_0)}q(s,a) +r(s,a) + \gamma\innorm{P(\cdot|s,a)\cdot\pi, q}\\
&\text{ for all } (s,a)\in\X, (P(\cdot|s,a), r(s,a))\in\Uc(s,a),    
\end{align*}
with $[P(\cdot|s,a)\cdot\pi](s',a'):= \pi_{s'}(a')P(s'|s,a), \forall (s',a'\in\X)$. 
More synthetically, the robust action-value function is an optimal solution of:
\begin{align}
\label{eq: robust q iyengar}
&\max_{q\in\R^{\X}}\innorm{q, \mu_0\cdot \pi} \nonumber\\
&\text{ s.t. } \max_{(P(\cdot|s,a), r(s,a))\in\Uc(s,a)}\left\{
q(s,a) - T^{\pi}_{(P_0, r_0)}q(s,a) - r(s,a) - \gamma\innorm{P(\cdot|s,a)\cdot\pi, q}\right\} \leq 0\nonumber \\
&\text{ for all }
(s,a)\in\X. 
\end{align}

We now compute the robust counterpart. For any $(s,a)\in\X$ and policy $\pi\in\Delta_{\A}^{\St}$, denote by:
\begin{align*}
    F^{\pi}(s,a):= \max_{(P(\cdot|s,a), r(s,a))\in\Uc(s,a)}\left\{
q(s,a) - T^{\pi}_{(P_0, r_0)}q(s,a) - r(s,a) - \gamma\innorm{P(\cdot|s,a)\cdot\pi, q}\right\}.
\end{align*}
Removing the constant terms from the maximization and using the indicator function yields:
\begin{align*}
    F^{\pi}(s,a) &= q(s,a) - T^{\pi}_{(P_0, r_0)}q(s,a) + 
    \max_{(P(\cdot|s,a), r(s,a))\in\Uc(s,a)}\left\{- r(s,a) - \gamma\innorm{P(\cdot|s,a)\cdot\pi, q}\right\}\\
    &=q(s,a) - T^{\pi}_{(P_0, r_0)}q(s,a) - \min_{(P(\cdot|s,a), r(s,a))\in\Uc(s,a)}\left\{ r(s,a) + \gamma\innorm{P(\cdot|s,a)\cdot\pi, q}\right\}\\
    &= q(s,a) - T^{\pi}_{(P_0, r_0)}q(s,a) - \min_{r(s,a)\in\Rc(s,a)}r(s,a) - \min_{P(\cdot|s,a)\in\Pc(s,a)}\gamma\innorm{P(\cdot|s,a)\cdot\pi, q}\\
    &=  q(s,a) - T^{\pi}_{(P_0, r_0)}q(s,a) - \min_{r(s,a)\in\R}\{r(s,a) + \delta_{\Rc(s,a)}(r(s,a))\}\\
    &\qquad -\min_{P(\cdot|s,a)\in\R^{\St}}\{\gamma\innorm{P(\cdot|s,a)\cdot\pi, q} +\delta_{\Pc(s,a)}(P(\cdot|s,a))\}.
\end{align*}
Applying Fenchel-Rockafellar duality to both minimization problems yields the desired result. 
\end{proof}

\begin{corollary}
\label{apx: rr q func optimization}
If, additionally, $\Pc_{sa}$ is a ball of radius $\alpha_{sa}^P$ w.r.t. some norm $\norm{\cdot}$ and $\Rc_{sa}$ an interval of radius $\alpha_{sa}^r$, then the
robust action-value function is an optimal solution of:
\begin{align}
\label{eq: robust q optimization}
\max_{q\in\R^{\X}}\innorm{q, \mu_0\cdot \pi} \text{ s.t. } q(s,a)\leq T^{\pi}_{(P_0, r_0)}q(s,a) - \alpha_{sa}^r -\gamma \alpha_{sa}^P \norm{q\cdot\pi}_* \text{ for all } (s,a)\in\X.
\end{align}
\end{corollary}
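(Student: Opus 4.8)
The plan is to obtain this corollary as an immediate specialization of the preceding theorem, exactly as Cor.~\ref{cor: reg for ball reward uncertainty} and Cor.~\ref{cor: transition robust mdp -- reg} specialize Thms.~\ref{thm: reward robust -- reg} and~\ref{thm: transition robust -- reg}. By that theorem, the robust action-value function solves
\begin{align*}
\max_{q\in\R^{\X}}\innorm{q,\mu_0\cdot\pi}\ \text{ s.t. }\ q(s,a)\leq T^{\pi}_{(P_0,r_0)}q(s,a)-\sigma_{\Rc(s,a)}(-1)-\sigma_{\Pc(s,a)}(-\gamma q\cdot\pi)\quad\forall(s,a)\in\X,
\end{align*}
so it suffices to evaluate the two support functions when $\Rc(s,a)$ is a radius-$\alpha_{sa}^r$ interval and $\Pc(s,a)$ is a radius-$\alpha_{sa}^P$ ball, and to check that the resulting constraint coincides with the claimed one.

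First I would handle the reward term. Since $\Rc(s,a)\subseteq\R$ is the interval $\{r\in\R:\abs{r}\leq\alpha_{sa}^r\}$, its support function at the scalar $-1$ is $\sigma_{\Rc(s,a)}(-1)=\max_{\abs{r}\leq\alpha_{sa}^r}(-r)=\alpha_{sa}^r$, which is just the one-dimensional instance of the dual-norm computation already carried out in the proof of Cor.~\ref{cor: reg for ball reward uncertainty}. Next I would handle the transition term. With $\Pc(s,a)=\{P\in\R^{\St}:\norm{P}\leq\alpha_{sa}^P\}$, the definition of the support function together with the variational characterization of the dual norm gives $\sigma_{\Pc(s,a)}(y)=\max_{\norm{P}\leq\alpha_{sa}^P}\innorm{P,y}=\alpha_{sa}^P\norm{y}_*$ for any $y\in\R^{\St}$. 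Applying this with $y=-\gamma q\cdot\pi$, where $q\cdot\pi\in\R^{\St}$ is the vector $s'\mapsto\sum_{a'\in\A}\pi_{s'}(a')q(s',a')$ from the theorem, and using positive homogeneity of the dual norm, yields $\sigma_{\Pc(s,a)}(-\gamma q\cdot\pi)=\gamma\alpha_{sa}^P\norm{q\cdot\pi}_*$. Substituting both expressions into the constraint above gives precisely
\begin{align*}
q(s,a)\leq T^{\pi}_{(P_0,r_0)}q(s,a)-\alpha_{sa}^r-\gamma\alpha_{sa}^P\norm{q\cdot\pi}_*\quad\forall(s,a)\in\X,
\end{align*}
which is the stated optimization problem, and the set of optimal solutions is unchanged because we have only rewritten each constraint term as an equal quantity.

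There is essentially no serious obstacle here: the argument is a routine substitution into the preceding theorem. The only points requiring a little care are (i) matching conventions, since the reward uncertainty at each pair $(s,a)$ is one-dimensional (so its support function is evaluated at the scalar $-1$) whereas the transition uncertainty lives in $\R^{\St}$; and (ii) noting that the norm bounding $\Pc(s,a)$ and the dual norm $\norm{\cdot}_*$ appearing in the constraint form a dual pair, so that no hypothesis beyond ``some norm'' is needed, exactly as in the $v$-function statement of Cor.~\ref{cor: transition robust mdp -- reg}.
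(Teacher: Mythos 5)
Your proof is correct and follows exactly the route the paper intends: the paper states this corollary without a separate proof, treating it as an immediate specialization of the preceding theorem via the same support-function evaluations ($\sigma_{\Rc(s,a)}(-1)=\alpha_{sa}^r$ for the interval and $\sigma_{\Pc(s,a)}(-\gamma q\cdot\pi)=\gamma\alpha_{sa}^P\norm{q\cdot\pi}_*$ for the ball) already carried out in the proofs of Cor.~\ref{cor: reg for ball reward uncertainty} and Cor.~\ref{cor: transition robust mdp -- reg}. Your handling of the scalar reward case and the dual-norm identity for the transition ball matches those computations precisely.
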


The upper-bound in the optimization problem enables to define the \rr Bellman operator on $q$-functions as:
\begin{align*}
    [T^{\pi,\rop}q](s,a):= T^{\pi}_{(P_0, r_0)}q(s,a) - \alpha_{sa}^r -\gamma \alpha_{sa}^P \norm{q\cdot\pi}_*
\end{align*}

\subsection{Distinguishing between \rr and robust $q$-functions}
\label{apx: rr q vs robust q}
We aim to show that although we can interchangeably optimize an \rr $q$-function or a robust $q$-value under $(s,a)$-rectangularity, 
% , as in the robust setting, whenever the uncertainty set is $(s,a)$-rectangular, we can interchangeably optimize the robust $q$-function from which to derive the \rr $q$-function and its corresponding value $v$, or proceed the opposite way: optimize the robust value function $v$, from which we define the \rr $v$-function and its corresponding $q$-function. 
the \rr $q$-function obtained from the \rr value $v$ is \emph{not} the same as the $q$-function obtained from the original robust optimization problem. This nuance is reminiscent of the regularized MDP setting, where defining the regularized $q$-function w.r.t. the regularized value $v$ is not equivalent to taking $v$ as the expected $q$-function over a policy. 

Let thus assume that the uncertainty set is $(s,a)$-rectangular. Then, by Sec.~\ref{apx: rr sa rectangular}, the \rr value function $v^{\pi, \rop}$ is the unique fixed point of the \rr Bellman operator as below: 
\begin{align*}
    [T^{\pi,\rop}v](s) := T_{(P_0,r_0)}^{\pi}v(s)-\sum_{a\in\A}\pi_s(a)(\alpha_{s,a}^r + \gamma\alpha_{s,a}^P\norm{v}),   \quad\forall s\in\St.
\end{align*}
This rewrites as: 
\begin{align*}
    v^{\pi, \rop}(s) &= \sum_{a\in\A}\pi_s(a)\left(r_0(s,a) +\gamma \innorm{P_0(\cdot|s,a), v^{\pi, \rop}} -  \alpha_{s,a}^r - \gamma\alpha_{s,a}^P\norm{v^{\pi, \rop}}\right)\\
    &= \sum_{a\in\A}\pi_s(a)\left(q^{\pi,\rop}(s,a) -  \alpha_{s,a}^r - \gamma\alpha_{s,a}^P\norm{v^{\pi, \rop}}\right),
\end{align*}
where the last equality holds by definition of the $q$-function associated with $v^{\pi,\rop}$ (Def.~\ref{def: rr value fcs}). As a result,
\begin{align*}
    q^{\pi,\rop}\cdot\pi(s) = v^{\pi, \rop}(s) + [\alpha^r + \gamma\alpha^P\norm{v^{\pi, \rop}}] \cdot \pi (s)
\end{align*}

Alternatively, by optimizing w.r.t. $q\in\R^{\X}$ instead of $v\in\R^{\St}$ and applying Cor.~\ref{apx: rr q func optimization}, the robust action-value function $q^{\pi, \Uc}$ satisfies: 
\begin{align*}
q^{\pi, \Uc}(s,a) = T^{\pi}_{(P_0, r_0)}q^{\pi, \Uc}(s,a) - \alpha_{sa}^r -\gamma \alpha_{sa}^P \norm{q^{\pi, \Uc}\cdot\pi}_* \text{ for all } (s,a)\in\X.
\end{align*}
Taking the expectation over policy $\pi$ yields:
\begin{align*}
q^{\pi, \Uc}\cdot \pi = r_0^{\pi} + P_0^{\pi} (q^{\pi, \Uc}\cdot \pi)  - \sum_{a\in\A}\pi_s(a)(\alpha_{sa}^r -\gamma \alpha_{sa}^P \norm{q^{\pi, \Uc}\cdot\pi}_*) \text{ for all } (s,a)\in\X,
\end{align*}
so that $q^{\pi, \Uc}\cdot \pi$ is a fixed point of the \rr Bellman operator. By unicity of its fixed point, we obtain that $q^{\pi, \Uc}\cdot \pi = v^{\pi, \rop}$. 
As a result:
\begin{align*}
    q^{\pi, \Uc}\cdot \pi &= v^{\pi, \rop}\\
    % = \sum_{a\in\A}\pi_s(a)\left(q^{\pi,\rop}(s,a) -  \alpha_{s,a}^r - \gamma\alpha_{s,a}^P\norm{v^{\pi, \rop}}\right)\\
    &= (q^{\pi,\rop} - \alpha^r-\gamma \norm{v^{\pi, \rop}}\alpha^P)\cdot\pi\\
    &= q^{\pi,\rop}\cdot\pi  - [\alpha^r + \gamma\alpha^P\norm{v^{\pi, \rop}}]\cdot\pi
    % (s) = q^{\pi, \Uc}\cdot \pi (s)
\end{align*}
Taking deterministic policies on each possible action, we end up with an element-wise identity: 
\begin{align*}
    q^{\pi, \Uc}(s,a) &= q^{\pi,\rop}(s,a) - \alpha^r_{s,a}-\gamma \norm{v^{\pi, \rop}} \alpha^P_{s,a}
\end{align*}

\subsection{Convergence of \rr $q$-learning}
\begin{theorem}[Convergence of \rr $q$-learning]
For any $(s,a)\in\X$, let a sequence of step-sizes $(\beta_t(s,a))_{t\in\N}$ satisfying $0\leq\beta_t(s,a)\leq 1$, $\sum_{t\in\N}\beta_t(s,a) = \infty$ and $\sum_{t\in\N}\beta_t^2(s,a)< \infty$. Then, the \rr $q$-learning algorithm as given in Alg.~\ref{algo: r2 q tabular} converges almost surely to the optimal \rr $q$-function. 
\end{theorem}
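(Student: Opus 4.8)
The plan is to mimic the classical convergence argument for asynchronous $q$-learning \citep{watkins1992q, jaakkola1993convergence}, the only new ingredient being that the target of the temporal difference is the fixed point of an \rr rather than a standard Bellman operator. Working under the $(s,a)$-rectangular assumption and writing, for $q\in\R^{\X}$, the shorthand $w_q := \max_{b\in\A}q(\cdot,b)\in\R^{\St}$, I first isolate the \rr Bellman optimality operator on state–action values
\begin{align*}
[\mathcal{H}q](s,a) := r_0(s,a) - \alpha_{sa}^r + \gamma\innorm{P_0(\cdot|s,a), w_q} - \gamma\alpha_{sa}^P\norm{w_q}, \qquad (s,a)\in\X,
\end{align*}
which is precisely the map whose single-sample stochastic approximation is performed in line~6 of Alg.~\ref{algo: r2 q tabular} (note $v_t = w_{q_t}$). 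Combining Cor.~\ref{apx: rr q func optimization} with a Prop.~\ref{prop: iyengar}-type argument applied to action-values, together with the reduction of the $\max$ over policies to a $\max$ over actions obtained in Appx.~\ref{apx: rr sa rectangular}, the operator $\mathcal{H}$ admits a unique fixed point equal to the optimal robust $q$-function $q^{*,\Uc}$ (equivalently the optimal \rr $q$-function).

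The second step is to show that $\mathcal{H}$ is a $(1-\epsilon_*)$-contraction in $\norm{\cdot}_\infty$ under Asm.~\ref{asm: bound alpha}, following the pattern of Prop.~\ref{prop: r2 bellman evaluation operator}(iii): for $q_1,q_2\in\R^{\X}$,
\begin{align*}
\abs{[\mathcal{H}q_1](s,a) - [\mathcal{H}q_2](s,a)}
&\leq \gamma\abs{\innorm{P_0(\cdot|s,a), w_{q_1}-w_{q_2}}} + \gamma\alpha_{sa}^P\abs{\norm{w_{q_1}}-\norm{w_{q_2}}} \\
&\leq \gamma\norm{w_{q_1}-w_{q_2}}_\infty + \gamma\alpha_{sa}^P\norm{w_{q_1}-w_{q_2}} \\
&\leq \gamma\bigl(1 + \alpha_{sa}^P\sqrt{\abs{\St}}\bigr)\norm{q_1-q_2}_\infty \leq (1-\epsilon_*)\norm{q_1-q_2}_\infty,
\end{align*}
using that $P_0(\cdot|s,a)$ is a probability vector, the reverse triangle inequality for $\norm{\cdot}$, the $1$-Lipschitzness of $q\mapsto w_q$ with respect to $\norm{\cdot}_\infty$, the bound $\norm{\cdot}\leq\sqrt{\abs{\St}}\,\norm{\cdot}_\infty$, and finally $\alpha_{sa}^P\leq\tfrac{1-\gamma-\epsilon_{s}}{\gamma\sqrt{\abs{\St}}}$ from Asm.~\ref{asm: bound alpha}.

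The third step casts the update as a stochastic-approximation recursion and invokes the convergence lemma of \citep[Thm.~1]{jaakkola1993convergence}. Extending $\beta_t(s,a)$ to equal $0$ whenever $(s,a)\neq(s_t,a_t)$, the error $\Delta_t := q_t - q^{*,\Uc}$ obeys
\begin{align*}
\Delta_{t+1}(s,a) = \bigl(1-\beta_t(s,a)\bigr)\Delta_t(s,a) + \beta_t(s,a)\,F_t(s,a),
\end{align*}
with $F_t(s,a) := r_t + \gamma\max_{b}q_t(s_{t+1},b) - \alpha_{sa}^r - \gamma\alpha_{sa}^P\norm{v_t} - q^{*,\Uc}(s,a)$. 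The crucial observation is that the value-norm term $\norm{v_t}$ is $\mathcal{F}_t$-measurable (it depends only on $q_t$), so that the only randomness in $F_t$ given $\mathcal{F}_t$ sits in $r_t$ and $s_{t+1}$; hence $\mathbb{E}[F_t(s,a)\mid\mathcal{F}_t] = [\mathcal{H}q_t](s,a) - [\mathcal{H}q^{*,\Uc}](s,a)$, and the contraction yields $\norm{\mathbb{E}[F_t\mid\mathcal{F}_t]}_\infty \leq (1-\epsilon_*)\norm{\Delta_t}_\infty$. Since rewards are bounded and $q\mapsto w_q$ is $1$-Lipschitz, $\mathrm{Var}[F_t(s,a)\mid\mathcal{F}_t]\leq K\bigl(1+\norm{\Delta_t}_\infty^2\bigr)$ for a constant $K$. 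The step-size hypotheses are assumed, and $\sum_t\beta_t(s,a)=\infty$ together with the extension above forces every pair to be visited infinitely often. All conditions of the lemma then hold, so $\Delta_t\to0$ almost surely, i.e.\ $q_t\to q^{*,\Uc}$ a.s. I expect the main obstacle to lie on the operator side rather than in the stochastic-approximation machinery: one must verify that the non-linear, value-dependent regularizer still yields a \emph{sup-norm} contraction (this is exactly where Asm.~\ref{asm: bound alpha} enters, and why the factor $1-\epsilon_*$ differs from $\gamma$) and one must identify $\mathrm{fix}(\mathcal{H})$ with the robust $q$-function via Fenchel--Rockafellar duality — being careful, as discussed in Appx.~\ref{apx: rr q vs robust q}, that this is \emph{not} the $q$-function $q^{\pi,\rop}$ attached to the \rr value function in the sense of Def.~\ref{def: rr value fcs}. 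Once those two facts are established, the remainder is the standard template.
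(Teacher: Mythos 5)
Your proposal is correct and follows essentially the same route as the paper: cast the update as a stochastic approximation of the \rr Bellman optimality operator on $q$-functions, show $\mathbb{E}[F_t\mid\mathcal{F}_t]=[\mathcal{H}q_t]-[\mathcal{H}q^*]$ (using that $\norm{v_t}$ is $\mathcal{F}_t$-measurable), and conclude via the $(1-\epsilon_*)$ sup-norm contraction and the Jaakkola et al.\ lemma. If anything, you are more complete than the paper's own proof, which asserts the contraction of the $q$-space operator without the explicit chain of inequalities you give and stops before checking the variance condition.
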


\begin{proof}
We will use the convergence result from \cite{jaakkola1993convergence}.
The update rule is given by:
\begin{align*}
  q_{t+1}(s_t,a_t) &= q_t(s_t,a_t) \\
  &\qquad+ \beta_{t}(s_t,a_t)\left(
  r_{t+1} + \gamma \max_{b\in\A}q_t(s_{t+1},b) - \alpha_{s_t a_t}^r -\gamma\alpha_{s_t a_t}^P\norm{\max_{b\in\A}q_t(\cdot,b)}_*  - q_t(s_t,a_t)\right)  
\end{align*}
which we rewrite as:
\begin{align}
\label{eq: q update}
   q_{t+1}(s_t,a_t) &= (1-\beta_t(s_t,a_t))q_t(s_t,a_t)\nonumber \\
   &\qquad+ \beta_{t}(s_t,a_t)\left(
  r_{t+1} + \gamma \max_{b\in\A}q_t(s_{t+1},b) - \alpha_{s_t a_t}^r -\gamma\alpha_{s_t a_t}^P\norm{\max_{b\in\A}q_t(\cdot,b)}_* \right). 
\end{align}
Further let $\Delta_t(s,a):= q_t(s,a) - q^{*,\rop}(s,a), \forall (s,a)\in\X$. Then Eq.~\eqref{eq: q update} rewrites as:
\begin{align*}
    \Delta_{t+1}(s,a) &= (1-\beta_t(s_t,a_t))\Delta_t(s_t,a_t)\\
    &+ \beta_{t}(s_t,a_t)\left(
  r_{t+1} + \gamma \max_{b\in\A}q_t(s_{t+1},b) - \alpha_{s_t a_t}^r -\gamma\alpha_{s_t a_t}^P\norm{\max_{b\in\A}q_t(\cdot,b)}_* - q^{*,\rop}(s_t,a_t) \right).
\end{align*}
We introduce the following random variable:
\begin{align*}
    G_t(s,a):= r(s,a) + \gamma \max_{b\in\A}q_t(X(s,a),b) - \alpha_{sa}^r -\gamma\alpha_{s a}^P\norm{\max_{b\in\A}q_t(\cdot,b)}_* - q^{*,\rop}(s,a),
\end{align*}
so that 
\begin{align*}
    \mathbb{E}\left[G_t(s,a) | \mathcal{F}_t \right] 
    &= \mathbb{E}\left[r(s,a) + \gamma \max_{b\in\A}q_t(X(s,a),b) - \alpha_{sa}^r -\gamma\alpha_{s a}^P\norm{\max_{b\in\A}q_t(\cdot,b)}_* - q^{*,\rop}(s,a) | \mathcal{F}_t\right]\\
    &= r(s,a)+\gamma \sum_{s'\in\St} \max_{b\in\A}P(s'|s,a)q_t(s',b)- \alpha_{sa}^r -\gamma \alpha_{s a}^P\norm{\max_{b\in\A}q_t(\cdot,b)}_* - q^{*,\rop}(s,a) \\
    &= [T^{*,\rop}q_t](s,a)- q^{*,\rop}(s,a) \\
    &= [T^{*,\rop}q_t](s,a)- [T^{*,\rop}q^{*,\rop}](s,a).
\end{align*}
By contraction property of the \rr Bellman operator, we thus obtain:
\begin{align*}
    \norm*{\mathbb{E}\left[G_t(s,a) | \mathcal{F}_t \right]}_{\infty} &= \norm*{[T^{*,\rop}q_t](s,a)- [T^{*,\rop}q^{*,\rop}](s,a)}_{\infty}\\
    &\leq (1-\epsilon^*)\norm*{q_t(s,a)- q^{*,\rop}(s,a)}_{\infty} = (1-\epsilon^*)\norm*{\Delta_t(s,a)}_{\infty}
\end{align*}

\end{proof}

\section{Planning on a Maze}
\label{apx: experiments}

\begin{table}[!h]
    \centering
\begin{center}
\begin{tabular}{ |c|c| } 
 \hline
 Number of seeds per experiment  & $5$\\ \hline
 Discount factor $\gamma$ & 0.9  \\ \hline
 Convergence Threshold $\theta$ & 1e-3  \\ \hline
 Reward Radius $\alpha^r$ & 1e-3  \\ \hline
 Transition Radius $\alpha^P$ & 1e-5  \\ 
 \hline
\end{tabular}
\end{center}
    \caption{Hyperparameter set to obtain the results from Table \ref{tab: policy evaluation}}
\label{tab: hyperparameters}
\end{table}

In the following experiment, we play with the radius of the uncertainty set and analyze the distance of the robust/\rr value function to the vanilla one obtained after convergence of MPI. Except for the radius parameters of Table \ref{tab: hyperparameters}, all other parameters remain unchanged. In both figures \ref{fig: rwd} and \ref{fig: transition}, we see that the distance norm converges to 0 as the size of the uncertainty set gets closer to 0: this sanity check ensures an increasing relationship between the level of robustness and the radius value. As shown in Fig.~\ref{fig: rwd}, the plots for robust MPI and \rr MPI coincide in the reward-robust case, but they diverge from each other as the transition model gets more uncertain. This does not contradict our theoretical findings from Thms.~\ref{thm: reward robust -- reg}-\ref{thm: transition robust -- reg}. In fact, each iteration of robust MPI involves an optimization problem which is solved using a black-box solver and yields an approximate solution. As such, errors propagate across iterations and according to Fig.~\ref{fig: transition}, they are more sensitive to transition than reward uncertainty. This is easy to understand: as opposed to the reward function, the transition kernel interacts with the value function at each Bellman update, so errors on the value function also affect those on the optimum and vice versa. Moreover, the gap grows with the radius level because of the simplex constraint we ignored when computing the support function of the transition uncertainty set. The work \citep{kumar2022efficient} accounts for this additional constraint to derive a regularization function that recovers the robust value under transition uncertainty. 

\begin{figure}[!h]
   \begin{minipage}{0.48\textwidth}
     \centering
     \includegraphics[width=\linewidth]{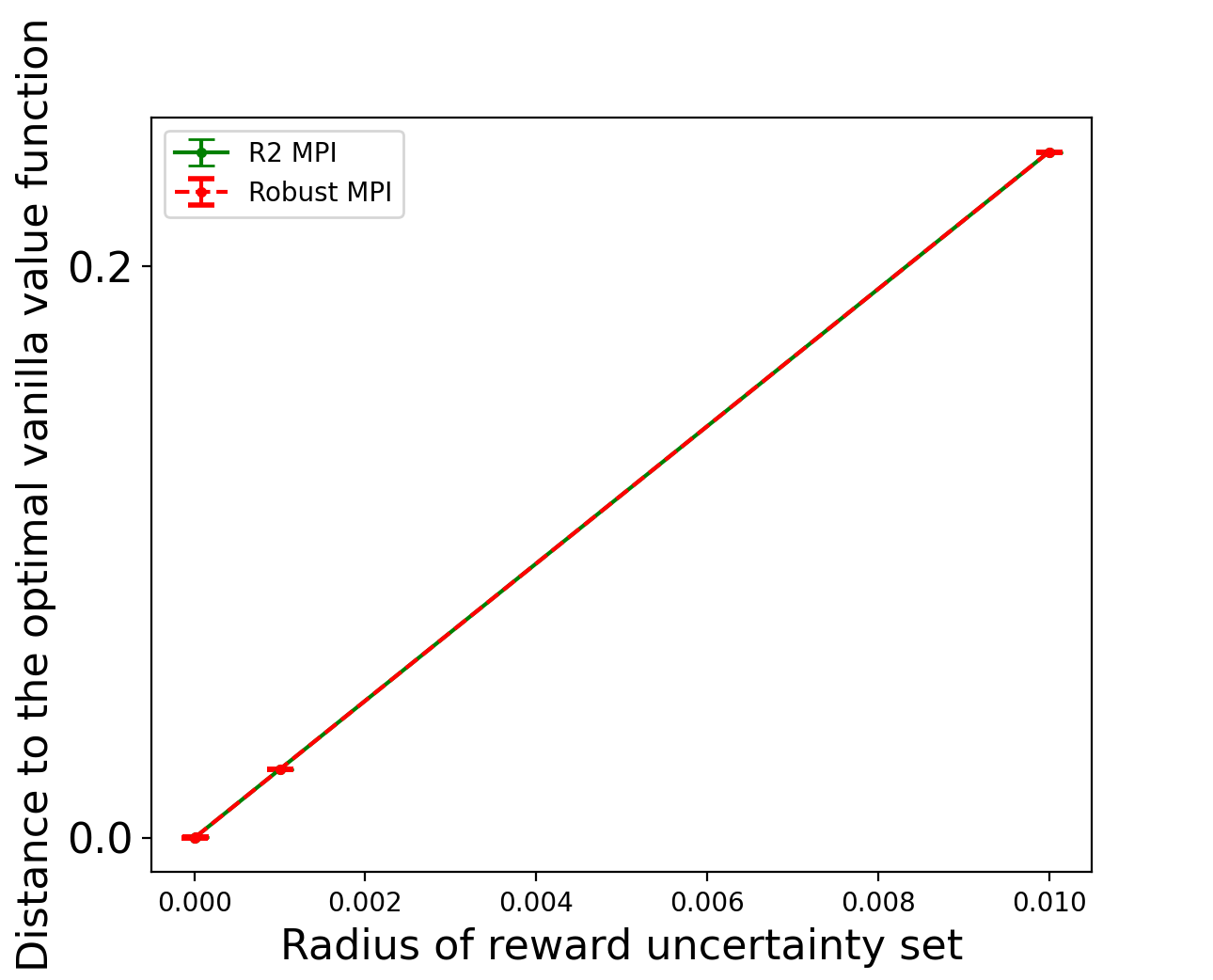}
     \caption{Distance norm between the optimal robust/\rr value and the vanilla one as a function of $\alpha^r$ ($\alpha^P=0$) after 5 runs of robust/\rr MPI}\label{fig: rwd}
   \end{minipage}\hfill
   \begin{minipage}{0.49\textwidth}
     \centering
     \includegraphics[width=\linewidth]{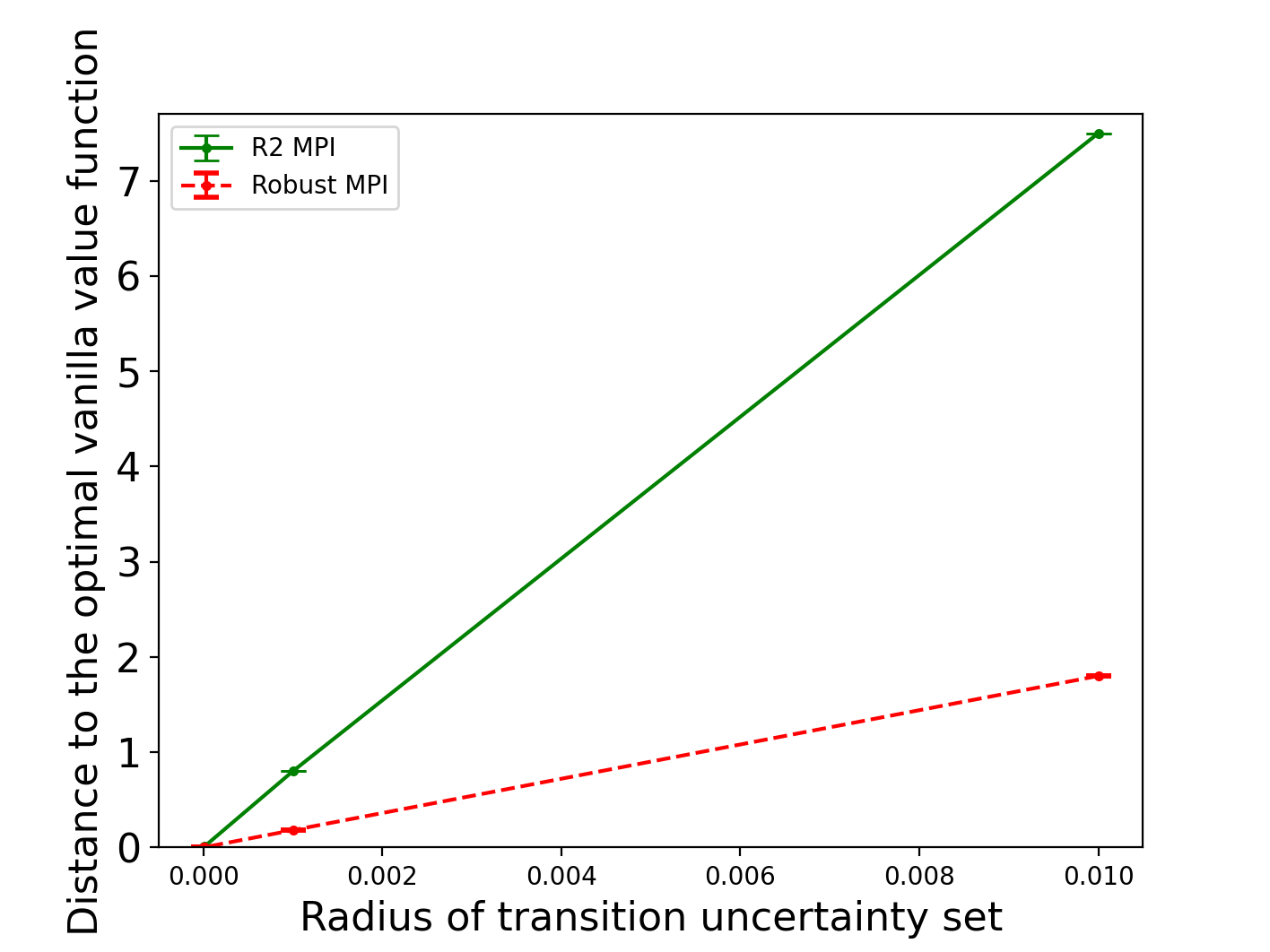}
     \caption{Distance norm between the optimal robust/\rr value and the vanilla one as a function of $\alpha^P$ ($\alpha^r=0$) after 5 runs of robust/\rr MPI}\label{fig: transition}
   \end{minipage}
\end{figure}
\vskip 0.2in

\section{\rr Learning Experiments}
\label{apx: additional figures}

In this section, we provide additional details and plots regarding our \rr $q$-learning algorithm. 

\begin{figure}[!h]
     \centering
     \includegraphics[scale=.3]{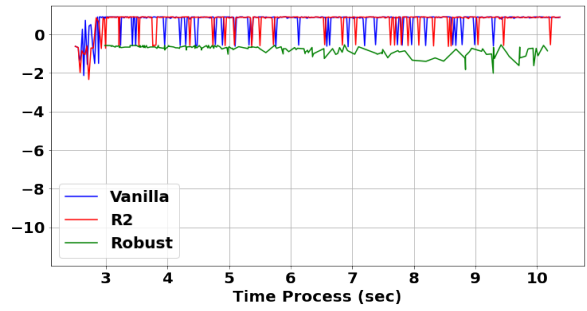}
     \caption{Mars Rover: Cumulative reward w.r.t. time process in seconds (zoom in)}
     \label{fig: grid cv speed zoom}
\end{figure}

\begin{table}[!h]
\centering
%\tiny
\subfloat[Mars Rover parameters\label{tbl:marsrover-nominal}]{
     \begin{tabular}{|c|c|}
        \hline
        \textbf{Parameter} & \textbf{Value}\\
        \hline\hline
         random rate $\epsilon$ & 0\\ \hline
        \end{tabular}
   }
   \subfloat[Cartpole \label{tbl:cartpole-nominal}]{
     \begin{tabular}{|c|c|}
        \hline
        \textbf{Parameter} & \textbf{Value}\\
        \hline\hline
         gravity & 9.8\\ \hline 
         masscart & 1.0\\ \hline
         masspole & 0.1\\ \hline
         length & 0.5\\ \hline
         force\_mag & 10.0\\ \hline
     \end{tabular}
   }\hfil
   \subfloat[Acrobot \label{tbl:acrobot-nominal}]{
     \begin{tabular}{|c|c|}
        \hline
        \textbf{Parameter} & \textbf{Value}\\
        \hline\hline
         link\_length\_2 & 1.0\\ \hline
         link\_mass\_1 & 1.0\\ \hline
         link\_mass\_2 & 1.0\\ \hline
         link\_com\_pos\_1 & 1.0\\ \hline
         link\_com\_pos\_2 & 1.0\\ \hline
         link\_moi & 1.0\\ \hline
         link\_length\_1 & 1.0\\ \hline
     \end{tabular}
   }\hfil
   \subfloat[Mountaincar \label{tbl:mountaincar-nominal}]{
     \begin{tabular}{|c|c|}
        \hline
        \textbf{Parameter} & \textbf{Value}\\
        \hline\hline
         force & 0.001\\ \hline
         gravity & 0.0025\\ \hline
        \end{tabular}
   }
   \caption{Nominal environment parameters on which all algorithms have been trained}\label{tbl:nominal-values}
\end{table}

\newpage
\bibliography{arxiv}
\end{document}